\DeclareMathOperator{\KL}{KL}
\DeclareMathOperator{\kl}{kl}
\DeclareMathOperator{\EE}{\mathbb{E}}
\DeclareMathOperator{\PP}{\mathbb{P}}
\DeclareMathOperator{\R}{\mathbb{R}}
\DeclareMathOperator{\N}{\mathbb{N}}
\DeclareMathOperator{\Ber}{Ber}
\newtheorem{lemma}{Lemma}
\newtheorem{theorem}{Theorem}
\theoremstyle{definition}
\newtheorem{definition}{Definition}
\title{Bandits with Side Observations: Bounded vs. Logarithmic Regret}
\author{ {\bf R\'emy Degenne
} \\
LPSM, CNRS, Sorbonne Universit\'e, \\
Universit\'e Paris Diderot, 75013 Paris, France; \\
CMLA, ENS Cachan, CNRS, \\
Universit\'e Paris-Saclay,
94235 Cachan, France\\
\And
{\bf Evrard Garcelon} \\
CMLA, ENS Cachan,\\
94235 Cachan, France \\
\And
{\bf Vianney Perchet}  \\
CMLA, ENS Cachan, CNRS, \\
Universit\'e Paris-Saclay, 
94235 Cachan, France\\
Criteo AI Lab\\
75009 Paris
}
\begin{document}

\maketitle

\begin{abstract}
We consider the classical stochastic multi-armed bandit but where, from time to time and roughly with frequency $\epsilon$, an extra observation is gathered by the agent for free. We prove that, no matter how small $\epsilon$ is the agent can ensure a regret uniformly bounded in time.

More precisely, we construct an algorithm with a regret smaller than $\sum_i \frac{\log(1/\epsilon)}{\Delta_i}$, up to multiplicative constant and $\log\log$ terms. We also prove a matching lower-bound, stating that no reasonable algorithm can outperform this quantity.
\end{abstract}

\section{INTRODUCTION}

We consider the celebrated multi-armed bandit framework (sometimes also called online learning), a repeated decision problem where an agent (or an algorithm, a machine, a player, etc.) takes sequentially decisions from a finite set. Each decision  gives a stochastic reward to the  agent of fixed expectation. The main objective is to derive an algorithm maximizing the cumulative reward or minimizing its normalized version, the so-called ``regret''. The latter is simply the difference between the cumulative expected reward of an agent knowing in hindsight the optimal decision, and the cumulative reward of the algorithm.

Online learning can be traced back to the 30's, when Thompson analysed random clinical trial using an analogy with finding the best slot-machine in a casino by pulling sequentially their arms in order to minimize the total loss. During the 20th century, many improvements have been made, at least on the asymptotic version of the problem.  The quantity of  theoretical studies and practical applications of bandits have exploded since the early 2000. There are several reasons for that. First of all, a simple yet almost optimal algorithm called UCB has been developed. Its simple structure allows to adapt it to many different settings. As a consequence, many possible applications of online learning have been developed. Amongst them, we can mention the routing problem: given a network with congested edges, one must find the quickest way from some origin to a destination (this setting incorporates a combinatorial structure); this can be used to send packets in a network, as well as finding the quickest itinerary from a point A to a point B.  Online advertising is another application: given a possible set of ads, one must find the ad with the highest probability of click. The last application we mention is concerned with wireless network and/or cognitive radio, where either a radio can change from an available channel to other channels to improve its reception or emission quality, or alternatively a wireless source, in a relay selection problem where multiple relays are available, can explore those nodes to achieve better transmissions rates. 
One of the typical and crucial assumption of all these models is that the agent only observes the outcome of his decisions, but not what the other decisions would have given him. For instance, using a slot machine only gives you a feedback on the performance of that very machine, displaying an ad only gives information of the probability of clicks on that specific ad, etc. This assumption is actually called ``bandit feedback''. At the other end of the spectrum, the dual assumption (mostly used in non-stationary environment that we are not concerned with in that paper) is the ``full information feedback'', where all the outcomes of all decisions are observed at all stages. However, none of our motivating examples satisfies this strong assumption.

However, we argue that the bandit feedback is also too strong and that in many cases more informations are available to the agent. Typically, the agent will always observe the outcome of his own decision, but with some small probability he might also get one (or several, but that is irrelevant to our setting) extra ``free'' information. For instance, consider the original multi-armed bandit problem. A gambler is in a casino and wants to find out which slot machine is the best one. From time to time, he might observe other gamblers playing nearby machines. Even if this does not cost him anything, he gets feedback on the other machines. This effect also appears in other settings.  In wireless network, a source with an allocated transmission capacity (because of a power-saving allocation protocol for instance)  sends data through a relay and  may have the opportunity to send another custom packet (so that the energy needed to send this packet is less than the available energy) through another relay in order to estimate transmissions rates. In online advertisement (and actually many other industrial markets), companies are willing to spend a small fraction of their data, say with probability $\varepsilon$ as in the celebrated $\varepsilon$-greedy algorithm, just to acquire new information. An algorithm is only evaluated on the remaining (of proportion $1-\varepsilon$) fraction of the data treated. In a multi-armed bandit setting, this means that with probability $\varepsilon$, the next decision is ``free''. Finally, we can also think that in the congested network problem, an algorithm can from time to time send ``fake'', but free, packets to test the congestion; conversely, an app trying to minimize the congestion time of its users might be able to use free information if it notices that a bucket of users (for instance, those that are registered) might explore new road willingly, i.e., without uninstalling the app.

We therefore focus on the classical multi-armed bandits but where some extra and free information is available from time to time. Clearly, if the probability $\varepsilon$ that it happens is arbitrarily close to 0, the improvement will be negligible. But we aim at constructing ``optimal'' algorithm, i.e., whose regret is small and in a multiplicative constant of the best regret achievable regret by ``meaningful'' algorithms. All these concepts are explained in details in the remaining of the paper that is organized as follows.

The model is introduced in Section \ref{SE:Model}, where we provide a very na\"ive algorithm achieving bounded regret (uniformly in time).  We exhibit in Section \ref{SE:Lower} non-trivial lower bounds (we emphasize here that traditional bandit lower bounds are void in our setting). Algorithms are described and analysed in Section \ref{SE:Upper}. Finally, Section \ref{SE:Expe} is dedicated to experiments illustrating the different guarantees and dependencies in the parameters of the models.

\subsection{RELATED WORKS}

This paper is not the first one to consider additional, free informations, available to the agents while optimizing. There are many different ways of modelling this idea, but our paper is the first one (to our knowledge) that also focus on strategical aspects of obtaining these free informations to reduce regret, especially in the stochastic case. 

There exists models where when a specific decision is taken, automatically (resp.\ with some probability), the performance of some other decision are observed \citep{alon2015online, chen2016combinatorial, caron2012leveraging}. Those models assume that there exists a directed (resp.\ weighted) graph whose  set of nodes is the set of  decisions. When the agent takes a decision, he also observes the outcome of any node  linked  (resp.\ with a probability proportional to the weight of the edge) to the current decision node. Our passive model could be recast as a specific case of that setting, but our results are much finer than the ones available for the general case.

In \citep{yu2009piecewise} the rewards are stochastic but their means change at unknown time points. Free additional informations are queried by the algorithm in order to detect these change points. They however are not used to decrease the regret of the base bandit algorithm. 

Another trend of literature of additional free information in multi-armed bandit studies the ``adversarial'' case, where no stationary assumption is made on the sequence of rewards (namely, there are not i.i.d.)\citep{audibert2010regret, cesa2006regret,mannor2011bandits}. However the rate of convergence in the two extreme cases (bandit and full information) have the same dependency in $T$, the total number of stages. To be precise, the regret is either of the order of $\sqrt{KT}$ (in the bandit case) or $\sqrt{\log(K)T}$ (in the full information case), where $K$ is the number of decisions. Intermediate settings (where $1+M$ observations are available at each stage) interpolate between those two cases.

In the stochastic case though, regret is uniformly bounded with full information and grows logarithmically in the bandit case. As a consequence, even the rate of convergence will depend on the size of free informations.

\section{MULTI-ARMED BANDITS, REGRET MINIMIZATION AND FEEDBACKS} \label{SE:Model}

In that section, we describe precisely the stochastic multi-armed bandit problem and its objective, the minimization of regret.

\subsection{STOCHASTIC MULTI-ARMED BANDITS}

\subsubsection{Bandit vs Full-Information}

At each successive stage $t\in\N^*$, an agent  takes a decision (or \textit{pulls an  arm} using the multi-armed bandit lingo) $i_t$ in the finite set $[K]:=\{1,\ldots,K\}$. After pulling this arm, the agent receives the reward $X_t^{(i_t)}\in \R$, which is sampled from a real reward distribution $\nu^{(i_t)}$ of expectation $\mu^{(i_t)}$. As a consequence,  the stochastic bandit problem is parametrised  by the vector of distribution, $(\nu^{(1)},\ldots,\nu^{(K)})$, or alternatively in the non-parametric case, by the vector of expected rewards  $(\mu^{(1)},\ldots,\mu^{(K)})$. Throughout the paper, the results are stated using the arbitrary ordering $\mu^{(1)}>\mu^{(2)}\geq \ldots\geq\mu^{(K)}$. Obviously, those vectors are unknown to the agent, who is aiming at optimizing her cumulative expected reward $\sum_{t=1}^T \mu^{(i_t)}$. Actually, instead of this cumulative reward, the objective is normalized into  \textit{cumulative regret} minimization.

The cumulative regret (or simply regret) of an algorithm at stage $T$ is defined as
\begin{align*}
R_T = T\max_{i\in[K]}\mu^{(i)} - \sum_{t=1}^T\mu^{(i_t)} \: ,
\end{align*}
i.e., it is the difference between the maximal possible cumulative reward up to stage $T$ and the expectation of the reward gained by the successive choices of arms $i_1,\ldots,i_T$. Following the classical notations, we define $\mu^\star = \max_{i \in [K]} \mu^{(i)}$ and the gaps $\Delta_i = \mu^\star-\mu^{(i)}$. In the non-parametric case, these gaps are the relevant quantities characterising the complexity of a bandit problem.

There are different standard assumption on the feedbacks available to the agent before taking a new decision.  In the \textit{bandit} setting, she observes only her reward $X_t^{(i_t)}$ (and, specifically, not the other $X_t^{(k)}$) at the end of stage $t \in \N^*$. In the \textit{full information} setting, she observes the full vector of rewards $(X_t^{(1)},\ldots,X_t^{(K)})\in\R^K$. With full information, the \textit{Follow The Leader} (FTL) algorithm that selects the arg max of the empirical average $\overline{X}_t^{(i)}:=\frac{1}{t}\sum_{s=1}^t X_s^{(i)}$ attains a uniformly bounded  regret (with respect to $T$). In the bandit setting, FTL gets a linear regret, yet the logarithmic optimal dependency in $T$ is achieved by many algorithms. One of the most popular, called \textit{Upper Confidence Bound} (UCB), selects the argmax of the empirical average augmented of an error term $\hat{\mu}_t^{(i)} + \sqrt{6\frac{\log(t)}{N_i(t)}}$ where $N_i(t)$ is the number of pulls of arm $i$ up to stage $t$, while $\hat{\mu}_t^{(i)}:=\frac{1}{N_i(t)}\sum_{s: i_s =i} X_s^{(i_s)}$.

Many other algorithms are variants of UCB, by modifying the error term, changing some parameters, specifying it for a given class of parametric distributions, etc.
\subsubsection{Additional Informations}

As specified and motivated in the Introduction, we aim at analysing  intermediate settings between bandit and full information, in which a subset of the reward vector might be observed. More precisely, at some stages, the agent not only observes an arm by pulling it but might also observe a second arm for free, i.e., without getting a reward (and without incurring any regret).  We consider several ways in which these free observations can be obtained: they can be deterministically available periodically (for instance every $1/\varepsilon$ rounds) or arrive randomly (at each stage with probability $\varepsilon$); the agent can also be a \textit{passive observer} if she can not choose from which arm she gets an extra information (the environment chooses it for her, in a manner to be specified latter on), or she can be an \textit{active observer} if she can choose the arm to observe freely.

We end this section with some notations. In the random time arrival of free information, we assume that at each stage $t\in \N^*$ a Bernoulli random variable $Z_t$ with expectation $\epsilon_t$ (whose law is denoted by $\Ber(\epsilon_t)$) is sampled and a free observation is available if $Z_t = 1$. The particular setting in which $\epsilon_t$ is constant will be called static random. We will denote by $i_t$ the arm pulled and by $f_t$ the arm chosen to be observed using the free information (if available). The total number of pulls of arm $i$ up to stage $t$ is $N_i(t)$, the number of free observations $F_i(t)$ and the total number of observation of arm $i$ is $O_i(t) = N_i(t) + F_i(t)$.

\subsection{A FINITE REGRET SETTING}

It is not really difficult to devise a na\"ive algorithm with a (uniformly) bounded regret at least in the deterministic case, when a free observation is obtained every $1/\epsilon$ round. We consider for simplicity the case of $K=2$ arms in this section as it gives all the intuitions. Consider the following (heavily sub-optimal) strategy, which we denote by FTL-robin: pull the \textit{leading arm} (the one with the highest empirical average $\hat{\mu}_t^{(i)}$) and when a free sample is available, observe arms in a round-robin fashion.

After a period of $1/\epsilon$ stages, both arms have their observation counters increased by at least one. As a consequence,  this simple algorithm FTL-robin can be seen as a full-information algorithm which would take $1/\epsilon$ stages to get the observations. To simplify intuitions
\begin{lemma}\label{lemma:FTL-robin}
The regret of the FTL-robin algorithm on the deterministic setting with $K=2$ satisfies
\begin{align*}
\EE R_T \leq \frac{\overline{c}}{\epsilon}\frac{1}{\Delta} \: , \ \text{ where } \  \Delta = |\mu^{(1)}-\mu^{(2)}|,
\end{align*}
and there exist distributions $(\nu^{(1)},\nu^{(2)})$ such that 
\begin{align*}
 \frac{\underline{c}}{\epsilon}\frac{1}{\Delta} \leq \EE R_T,
\end{align*}
where $\underline{c}, \overline{c} >0$ are universal constants that do not involve any parameter of the problem.
\end{lemma}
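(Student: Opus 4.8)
The plan is to take literally the remark made just above the statement: in the deterministic setting one free sample arrives every $1/\epsilon$ rounds and is handed to arms $1,2,1,2,\dots$ in turn, so FTL-robin is exactly Follow-The-Leader fed the full-information stream slowed down by a factor $2/\epsilon$. At round $t$ each empirical mean driving the choice, $\hat\mu^{(i)}_t$, is the average of a \emph{deterministic} number $m^{(i)}_t = F_i(t) = \tfrac{\epsilon t}{2}+O(1)$ of i.i.d.\ samples of $\nu^{(i)}$. Since $K=2$ we have exactly $\EE R_T = \Delta\,\EE N_2(T)=\Delta\sum_{t\le T}\PP(i_t=2)$, and $\{i_t=2\}=\{\hat\mu^{(2)}_t\ge\hat\mu^{(1)}_t\}=\{D_t\ge\Delta\}$ where $D_t:=(\hat\mu^{(2)}_t-\mu^{(2)})-(\hat\mu^{(1)}_t-\mu^{(1)})$ is a sum of $m^{(1)}_t+m^{(2)}_t$ independent bounded centred variables, so $\EE D_t=0$ and $D_t$ has standard deviation of order $(\epsilon t)^{-1/2}$. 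Both inequalities will come from comparing the threshold $\Delta$ with this standard deviation.

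\textbf{Upper bound.} Hoeffding's inequality applied to $D_t$ gives $\PP(i_t=2)\le 2\exp(-c\,\epsilon t\,\Delta^2)$ for a universal $c>0$ and all $t$ past the $\le 2/\epsilon$ warm-up rounds in which no free sample has yet arrived (those rounds add at most $\Delta/\epsilon$ to the regret). Summing the geometric series,
\begin{align*}
\EE R_T\;\le\;\frac{\Delta}{\epsilon}\;+\;2\Delta\sum_{t\ge1}e^{-c\epsilon t\Delta^2}\;\le\;\frac{\Delta}{\epsilon}+\frac{2\Delta}{c\,\epsilon\Delta^2}\;\le\;\frac{\overline c}{\epsilon\Delta},
\end{align*}
where $\sum_{t\ge1}e^{-xt}=(e^{x}-1)^{-1}\le x^{-1}$ and $\Delta\le1$. (If one's FTL-robin additionally recycles its pulled rewards, the observation counts only increase and this bound still holds, a fortiori.)

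\textbf{Lower bound.} Take $\nu^{(1)}=\Ber(\tfrac{1+\Delta}{2})$ and $\nu^{(2)}=\Ber(\tfrac{1-\Delta}{2})$, with $\Delta$ at most a small universal constant (the regime of interest; for larger $\Delta$ the claim is either weaker than the trivial bound or obtained with a different constant). Put $T_0:=\lfloor c_0/(\epsilon\Delta^2)\rfloor$ for a small universal $c_0$. For every $t$ with $2/\epsilon\le t\le T_0$ one has $m^{(i)}_t\in[1,\,c_0/\Delta^2+O(1)]$, hence $\mathrm{Var}(D_t)\asymp 1/m^{(i)}_t\ge\Theta(\Delta^2)$, so $\Delta$ sits within $O(\sqrt{c_0})$ standard deviations of $\EE D_t=0$; an anti-concentration estimate---the central limit theorem with a Berry--Esseen remainder once $m^{(i)}_t$ exceeds a threshold depending on $c_0$ only, and an explicit calculation for the finitely many smaller values---then produces a universal $c_1>0$ with $\PP(i_t=2)=\PP(D_t\ge\Delta)\ge c_1$. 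Therefore, for $T\ge T_0$,
\begin{align*}
\EE R_T\;=\;\Delta\sum_{t\le T}\PP(i_t=2)\;\ge\;c_1\,\Delta\,\Bigl(T_0-\tfrac{2}{\epsilon}\Bigr)\;\ge\;\tfrac12 c_1\,\Delta\,T_0\;\ge\;\frac{\underline c}{\epsilon\Delta},
\end{align*}
since $T_0\asymp1/(\epsilon\Delta^2)\gg1/\epsilon$ for $\Delta$ small.

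\textbf{Main obstacle.} The two Hoeffding/geometric-series steps are routine. The delicate point is the anti-concentration claim in the lower bound, i.e.\ that $\PP(D_t\ge\Delta)$ is bounded below by a constant that does not depend on $\Delta$ or $\epsilon$: one must quantify the Gaussian approximation \emph{uniformly} in the sample size (hence the split at a $c_0$-dependent threshold), and one must use that the counts $m^{(i)}_t$ entering the comparison are the deterministic free-observation counts and not data-dependent. The latter is automatic because FTL-robin's estimates are fed only by the round-robin stream; it is precisely the place where a variant that also exploited its pulled rewards would be different---the wrongly favoured sub-optimal arm, being over-pulled, would get re-estimated faster and the bad phase would shrink---so the clean matching rates $\Theta(1/(\epsilon\Delta))$ are really a feature of the "delayed full information" reading of the naive algorithm.
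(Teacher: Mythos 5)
Your argument is a genuinely different route from the paper's: the paper omits the proof and only sketches the lower bound with $\nu^{(1)}=\mathcal{N}(\Delta,1)$ and $\nu^{(2)}=\delta_0$, counting the expected number of sample sizes $s$ at which the running mean of arm~1 is negative (each such $s$ freezes the algorithm on arm~2 for order $1/\epsilon$ rounds, and there are $\Theta(1/\Delta^2)$ of them in expectation). Your upper bound (Hoeffding at the deterministic free-observation counts plus a geometric series) and your lower bound (two non-degenerate Bernoulli arms plus a Berry--Esseen anti-concentration step, with the finitely many small sample sizes handled by a direct computation) are sound \emph{for the formalization in which FTL-robin's empirical means are built from the round-robin free stream only}, and under $\Delta\le 1$ (bounded rewards), which you need so that the warm-up term $\Delta/\epsilon$ is absorbed into $\overline{c}/(\epsilon\Delta)$.

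The gap is that the paper's FTL-robin is more naturally read as using \emph{all} observations (its discussion says the observation counters increase ``by at least one'' per $1/\epsilon$ window, and its own sketch feeds pulled rewards into $\overline{X}^{(1)}_t$), and your proof does not cover that version. For the upper bound, the parenthetical ``a fortiori'' is not valid as stated: once pulls feed the estimates, $O_i(t)$ is data-dependent, so Hoeffding at the deterministic count $\epsilon t/2$ does not apply; you need a deviation bound uniform over sample sizes $s\ge \epsilon t/2$ (a peeling or maximal inequality), which does go through and preserves the $O(1/(\epsilon\Delta))$ rate, but has to be argued. More importantly, your lower-bound construction breaks for the all-observations algorithm: whenever arm~2 is wrongly favoured it is pulled and re-estimated every round, so the total number of mistaken rounds is $O(1/\Delta^2)$ and the regret is $O(1/\Delta)$, with no $1/\epsilon$ factor --- exactly the effect you describe in your last paragraph. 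This is precisely why the paper takes $\nu^{(2)}=\delta_0$: pulls of the favoured suboptimal arm then carry no information, each of the $\Theta(1/\Delta^2)$ ``bad'' sample sizes of arm~1 stalls the algorithm for $\Theta(1/\epsilon)$ rounds, and the $\Omega(1/(\epsilon\Delta))$ bound survives even when pulled rewards are used. So either state explicitly that you analyse the free-observations-only variant, or replace your Bernoulli pair by the degenerate-arm construction (your anti-concentration work is then replaced by the elementary estimate $\sum_s \PP\{\overline{X}^{(1)}_s<0\}=\Theta(1/\Delta^2)$ for Gaussian samples).
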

This lemma shows that even the simplest algorithm gets a finite regret in this setting. The proof is almost trivial and omitted. To provide some insights, just assume that $\nu^{(1)} = \mathcal{N}(\Delta,1)$ and $\nu^{(2)} = \delta_0$. Then the regret of FTL-robin is equal to the $\Delta/\epsilon$ times the number of times that $\overline{X}_t^{(1)}$ is smaller than 0. Basic computations show that this number is of order $\frac{1}{\Delta^2}$.

The relevant question is then not the asymptotic regime, but what is the precise optimal dependency on $\epsilon$. Indeed, when $\epsilon < \frac{1}{\log T}$, this bound gets larger than the $O(\log T)$ regret of another naive approach, which is to use an algorithm for bandits and discard the additional information.

This free information problem is characterized by a transition from "small" $\epsilon$, where the amount of additional information is not enough to improve the performance of bandit algorithm, to "big" $\epsilon$, where the regret is finite and the setting is closer to full-information.

We answer the question of what "small" and "big" mean in this context and where the transition occurs and we display algorithms enjoying both logarithmic regret when $\epsilon$ is small and finite regret when it is big.

\section{LOWER BOUNDS}\label{SE:Lower}

We first consider the definition of \textit{optimality} of an algorithm, that is, what is the minimal regret achievable by any "reasonable" algorithm, in a sense we will make precise. Our lower bounds will highlight a transition from logarithmic (with respect to the horizon $T$) to finite regimes when $\epsilon$ gets big enough.

There are now quite standard techniques to devise lower bounds for stochastic bandits problems, but surprisingly these techniques are inadequate in our case, due to the finiteness of the optimal regret. As a finite regret is possible, a traditional, asymptotic lower bound for $\frac{\EE R_T}{\log T}$ \citep{lai1985asymptotically} could only be 0 and hence would not be informative. We can obtain a finite time version of this type of bound as in \citep{garivier2016explore} by imposing that our algorithm should perform better than a reference algorithm.
\begin{definition}\label{def:better_than_ucb}
An algorithm is said to be sub-logarithmic with constants $C$, $C_0$ if on all bandit problems it verifies for all stages $T\in\N^*$,
\begin{align*}
\EE R_T \leq C\sum_{i=1}^K\frac{\log T}{\Delta_i} + C_0 \sum_{i=2}^K \Delta_i \: .
\end{align*}
\end{definition}
There exists sub-logarithmic algorithms (UCB for example, with constants $C=8$, $C_0=(1+\pi^2/3)$ \citep{auer2002finite}). A sub-logarithmic algorithm is performing at least as good as the UCB baseline. This finite time constraint on the performance of the algorithm translates into a lower bound: to perform relatively well on all bandit problems, an algorithm cannot outperform the lower bound guarantee on any of them.

\subsection{PASSIVE OBSERVER}\label{SE:LowerPassive}

When the observer is passive (i.e., she does not choose the arm $f_t$ to observe freely), we assume that $f_t$ is equal to $i\in[K]$ with probability $p_t^{(i)}$ chosen by the environment. Consider the static setting in which for all $t$, $Z_t \sim \Ber(\epsilon)$ and the probabilities $p_t^{(i)}$ do not depend on the stage $t$ (we will thereafter omit the subscript $t$).

Standard lower bound techniques proceed as follows: at stage $T$, the expected number of pulls of an arm is linked to the Kullback-Leibler divergence between the bandit problem studied and a related alternative, in which this arm would be the best one (roughly speaking, in order to be able to ``test'' that the problem is not the alternative one, a minimum number of samples of that arm must be gathered in the original problem). 

A bound on this divergence gives a constraint of the form $\EE O_i(T) \geq h_i(t)/\Delta_i^2$ for some function $h_i(T) = O(\log T)$. Then a lower bound for the regret is the minimal value of $\sum_{i=2}^K\Delta_i \EE N_i(t)$ respecting all these constraints, that can be computed through some linear program. With this proof technique, we obtain lemma \ref{lemma:LB_passive_simple} .
\begin{lemma}\label{lemma:LB_passive_simple}
The regret of a sub-logarithmic algorithm with constants $C$, $C_0$ must verify
\begin{align*}
\EE_1R_T
&\geq \sum_{i=2}^K \max\left\{0, \frac{h_i(T)}{2\Delta_i} - \epsilon p^{(i)} T \Delta_i) \right\}\: .
\end{align*}
where $h_i(T) = O(\log T)$ (see appendix for a detailed definition).
\end{lemma}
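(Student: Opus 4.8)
\emph{Proof plan.} The plan is to treat each sub-optimal arm $i\in\{2,\dots,K\}$ separately with a change-of-measure argument in the spirit of \citep{lai1985asymptotically, garivier2016explore}, the one twist being that the information collected up to stage $T$ comprises the free observations as well, so the counter governing the statistical distinguishability of two bandit problems that differ only in arm $i$ is the total observation count $O_i(T)=N_i(T)+F_i(T)$ rather than $N_i(T)$. Concretely, I would fix $i\geq 2$ and $\delta>0$, let $\rho$ be a reward distribution of mean $\mu^{(1)}+\delta$, and let $\nu'$ be the bandit problem obtained from $\nu$ by replacing its $i$-th component $\nu^{(i)}$ by $\rho$; in $\nu'$, arm $i$ is the unique best arm. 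Write $\PP_1,\EE_1$ and $\PP',\EE'$ for the laws and expectations of the whole history (pulled rewards, free observations, internal randomization) up to stage $T$ under $\nu$ and $\nu'$. The usual log-likelihood-ratio decomposition — but now summing over the $O_i(T)$ observations of arm $i$, pulled or free — combined with the data-processing inequality applied to any event $A\in\mathcal{F}_T$, gives
\begin{align*}
\EE_1\!\left[O_i(T)\right]\,\KL\!\left(\nu^{(i)}\,\|\,\rho\right)\ \geq\ \kl\!\left(\PP_1(A),\,\PP'(A)\right).
\end{align*}

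I would then take $A=\{N_i(T)\le T/2\}$ and use the sub-logarithmic hypothesis in \emph{both} problems. Under $\nu$, arm $i$ is sub-optimal with gap $\Delta_i$, so $\Delta_i\EE_1[N_i(T)]\le\EE_1 R_T\le C\sum_{j\geq2}\tfrac{\log T}{\Delta_j}+C_0\sum_{j\geq2}\Delta_j$, and Markov's inequality yields $\PP_1(A^c)=O\!\left(\tfrac{\log T}{T\Delta_i^2}\right)$, hence $\PP_1(A)\to 1$. Under $\nu'$, every stage not spent on arm $i$ costs at least $\delta$ in regret, so $\delta\,\EE'[T-N_i(T)]\le\EE'[R_T]\le C\big(\tfrac{\log T}{\delta}+\sum_{j\neq1,i}\tfrac{\log T}{\Delta_j+\delta}\big)+C_0\big(\delta+\sum_{j\neq1,i}(\Delta_j+\delta)\big)$, so Markov gives $\PP'(A)=O\!\left(\tfrac{\log T}{T\delta^2}\right)$. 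Using $\kl(x,y)\geq(1-o(1))\log(1/y)$ as $x\to1$, $y\to0$, and choosing $\delta$ so that $T\delta^2/\log T\to\infty$ while $\delta\to0$ (e.g.\ $\delta=1/\sqrt{\log T}$, which also makes $\KL(\nu^{(i)}\,\|\,\rho)\le\tfrac{1}{2}\Delta_i^2(1+o(1))$ for reward families with the Gaussian-type local behaviour of KL near $\mu^{(i)}$), the right-hand side above is at least $\log T-O(\log\log T)$. Collecting these estimates — and optimizing the choice of $\delta$ and of the event, which is precisely where the $\log\log$ correction enters — defines a function $h_i(T)=O(\log T)$ (detailed in the appendix) with
\begin{align*}
\EE_1\!\left[O_i(T)\right]\ \geq\ \frac{h_i(T)}{2\Delta_i^2}.
\end{align*}

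It remains to pass from observations back to pulls. In the static passive setting the free observations of arm $i$ occur at rate $\epsilon p^{(i)}$ independently of the history and of the reward model, so $\EE_1[F_i(T)]=\epsilon p^{(i)}T$ and therefore $\EE_1[N_i(T)]=\EE_1[O_i(T)]-\epsilon p^{(i)}T\geq\tfrac{h_i(T)}{2\Delta_i^2}-\epsilon p^{(i)}T$; together with the trivial bound $\EE_1[N_i(T)]\geq0$, multiplying by $\Delta_i$ gives $\Delta_i\EE_1[N_i(T)]\geq\max\big\{0,\ \tfrac{h_i(T)}{2\Delta_i}-\epsilon p^{(i)}T\Delta_i\big\}$. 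Summing over $i=2,\dots,K$ and using $\EE_1 R_T=\sum_{i\geq2}\Delta_i\EE_1[N_i(T)]$ gives the lemma. I expect the delicate point to be the balancing act in the second step: in $\nu'$ the previously optimal arm has vanishing gap $\delta$, so the sub-logarithmic bound there only controls $\EE'[R_T]$ by $O(\log T/\delta)$, and $\delta$ must be taken small enough that $\KL(\nu^{(i)}\,\|\,\rho)$ stays close to $\tfrac{1}{2}\Delta_i^2$ yet large enough that $\PP'(A)$ is actually driven to $0$ and the $\kl$ term reaches $\log T$; reconciling this with keeping the lower-order $\Delta_j$-dependent terms explicit is what produces the $\log\log$ term absorbed into $h_i(T)$.
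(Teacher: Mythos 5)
Your proposal is correct and follows essentially the same route as the paper: a change of measure to an alternative problem where arm $i$ is raised above arm $1$, with the crucial observation that the divergence is governed by the total observation count $O_i(T)=N_i(T)+F_i(T)$, the sub-logarithmic hypothesis applied in both problems to force the $\kl$ term up to $\log T - O(\log\log T)$, and finally $\EE_1 N_i(T)\geq \EE_1 O_i(T)-\epsilon p^{(i)}T$ clipped at zero and multiplied by $\Delta_i$. The only differences are cosmetic: the paper uses the contraction-of-entropy inequality on $\EE N_1(T)/T$ rather than an event, and keeps the alternative shift fixed (ultimately $\Delta=\Delta_i$) so that $h_i$ and $\eta_i$ come out fully explicit, whereas your vanishing $\delta=1/\sqrt{\log T}$ yields the same $O(\log T)$ form of $h_i(T)$ with slightly different constants.
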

As mentioned above, this lower bound  is void as it reaches  0 as soon as $T$ is big enough, bigger than $ \frac{1}{\epsilon}\max_{j\geq2} \frac{ h_j(T)}{2p^{(j)}\Delta_j^2}$.

We want to explain why this lower bound fails to provide relevant informations as our algorithm (see Section \ref{SE:Upper}) are somehow inspired by this. Recall that the lower bound only states that any reasonable algorithm must have gathered, for each sub-optimal arm, a given number of observations, namely $\frac{h_i(T)}{2\Delta_i^2}$. However, $h_i(T)$ grows sub-linearly, while the number of free observations grows linearly. So if $T$ is large enough, there will be in total enough free observations to allocate $\frac{h_i(T)}{2\Delta_i^2}$  of them to arm $i$ and an optimal algorithm should somehow have used only free information to explore. 

However, this is only possible if the $\varepsilon T$ free observations were gathered  \textit{at the beginning} of the problem and not scarcely with time! Indeed, in the traditional lower bounds techniques, the fact that arm $i$ is observed at the beginning or at the end of time is irrelevant (since the cost of one pull is constant throughout time).  They totally discard the fact that the quantities $\EE N_i(t)$ and $\EE R_t$ must be non-decreasing. Tighter, relevant lower bounds can be recovered using this monotonicity. 

\begin{theorem}\label{thm:LB_passive}
The regret of a sub-logarithmic algorithm with constants $C$, $C_0$ must verify
\begin{align*}
\EE R_T \geq \sum_{i=2}^K \frac{1}{2\Delta_i} r_T^{(i)}
\end{align*}
where
\begin{align*}
r_T^{(i)} &=\log(\frac{T\Delta_i^2}{2C\log T\sum_{j\neq i}\frac{\Delta_i}{\Delta_i+\Delta_j}})
{+} \eta_i(T) {-} 2 \epsilon p^{(i)} \Delta_i^2 T
\end{align*}
if $T\leq 1/(2 \epsilon p^{(i)} \Delta_i^2)$ and otherwise
\begin{align*}
r_T^{(i)}&= \Bigg[ \log\left(\frac{1}{\epsilon}\frac{1}{4C p^{(i)}\sum_{j\neq i}\frac{\Delta_i}{\Delta_i+\Delta_j}}\right) \\
& \quad - \log\log(\frac{1}{2\epsilon p^{(i)}\Delta_i^2}) + \eta_i(\frac{1}{2\epsilon p^{(i)}\Delta_i^2}) - 1 \Bigg] \: .
\end{align*}
The function $\eta_i(T)$ goes to zero in $O(1/\log T)$. See appendix for details.
\end{theorem}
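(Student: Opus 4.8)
The plan is to start from the standard change-of-measure argument that underlies Lemma \ref{lemma:LB_passive_simple}, but to apply it at a carefully chosen intermediate stage rather than at the horizon $T$, and then to exploit monotonicity of $t \mapsto \EE N_i(t)$ and $t \mapsto \EE R_t$ to propagate the bound back to $T$. Concretely, for each suboptimal arm $i$, I would consider the alternative bandit problem obtained by shifting $\mu^{(i)}$ just above $\mu^\star$, so that $i$ becomes optimal in the alternative. The Bretagnolle--Huber inequality (or the transportation lemma of \citep{garivier2016explore}) combined with the sub-logarithmic assumption applied \emph{in the alternative problem} gives, at any stage $t$, a lower bound of the form $\EE_1 O_i(t) \,\kl(\mu^{(i)},\mu^\star) \gtrsim \log\big(t / (\text{sub-log regret bound in the alternative})\big)$; writing $\kl \le \Delta_i^2/2$ (or the appropriate sub-Gaussian bound) and $\EE_1 O_i(t) = \EE_1 N_i(t) + \EE_1 F_i(t)$ with $\EE_1 F_i(t) \le \epsilon p^{(i)} t$ yields
\begin{align*}
\EE_1 N_i(t) \;\geq\; \frac{1}{\Delta_i^2}\Big(\log\big(t\Delta_i^2 / (2 C \log t \textstyle\sum_{j\neq i}\tfrac{\Delta_i}{\Delta_i+\Delta_j})\big) + \eta_i(t)\Big) \;-\; \epsilon p^{(i)} t \: ,
\end{align*}
which is exactly $r_t^{(i)}/\Delta_i^2$ before the regret conversion. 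The term $\sum_{j\neq i}\frac{\Delta_i}{\Delta_i+\Delta_j}$ appears because bounding the sub-logarithmic regret in the alternative problem requires redistributing the gaps: the new gaps of arms $j\neq i$ in the alternative are of order $\Delta_i+\Delta_j$, and the $\log t/\Delta_j'$ terms must be collected.

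Next I would handle the two regimes separately. In the first regime $T \le 1/(2\epsilon p^{(i)}\Delta_i^2)$, the right-hand side above evaluated at $t=T$ is already the claimed $r_T^{(i)}$, and since $\EE_1 R_T \ge \sum_{i\ge 2}\Delta_i \EE_1 N_i(T) \ge \sum_{i\ge 2}\frac{1}{2\Delta_i} r_T^{(i)}$ (using $\Delta_i \cdot r_T^{(i)}/\Delta_i^2$), we are done for that regime. In the second regime $T > 1/(2\epsilon p^{(i)}\Delta_i^2)$, the bound at $t=T$ would be negative, so instead I evaluate the constraint at the critical time $t^\star_i := 1/(2\epsilon p^{(i)}\Delta_i^2)$, where $\EE_1 N_i(t^\star_i)$ is provably bounded below by a positive constant-order quantity: plugging $t=t^\star_i$ into the display gives
\begin{align*}
\EE_1 N_i(t^\star_i) \;\geq\; \frac{1}{\Delta_i^2}\Big(\log\big(\tfrac{1}{\epsilon}\cdot\tfrac{1}{4C p^{(i)}\sum_{j\neq i}\frac{\Delta_i}{\Delta_i+\Delta_j}}\big) - \log\log\big(\tfrac{1}{2\epsilon p^{(i)}\Delta_i^2}\big) + \eta_i(t^\star_i) - 1\Big) \: .
\end{align*}
Then monotonicity gives $\EE_1 N_i(T) \ge \EE_1 N_i(t^\star_i)$ and likewise $\EE_1 R_T \ge \EE_1 R_{t^\star_i} \ge \sum_{i\ge 2}\Delta_i \EE_1 N_i(t^\star_i)$, which is the stated expression $\sum_{i\ge 2}\frac{1}{2\Delta_i} r_T^{(i)}$ in the second case (the $\Delta_i$ versus $\frac{1}{2\Delta_i}$ discrepancy is absorbed by the factor $\Delta_i^{-2}$ inside $\EE_1 N_i$, with the $1/2$ coming from $\kl \le \Delta_i^2/2$). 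The subtle point is that each arm $i$ has its \emph{own} critical time $t^\star_i$, so the final sum over $i$ is a sum of bounds each obtained at a different stage; this is legitimate precisely because regret is nondecreasing, so $\EE_1 R_T$ dominates $\EE_1 R_{t^\star_i}$ for every $i$ simultaneously, but one must be slightly careful not to double-count — the cleanest route is to lower bound $\EE_1 R_T \ge \EE_1 R_{t^\star_i}\ge \Delta_i\EE_1 N_i(t^\star_i)$ for the single worst arm, or to note that at $t^\star_i$ the other arms' pull counts are still nonnegative so term-by-term summation of the per-arm evaluated-at-$t^\star_i$ bounds is also valid up to the monotonicity of $R_T$.

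The main obstacle I anticipate is making the change-of-measure step fully rigorous in the \emph{alternative} problem while carrying the sub-logarithmic constraint: one needs a data-processing / transportation inequality that accounts for the free observations $F_i(t)$ (these are extra samples of arm $i$ not under the algorithm's control, so they contribute to the log-likelihood ratio and must be included in $O_i(t)$), and one must verify that the regret bound assumed in Definition \ref{def:better_than_ucb} is invoked at the alternative instance with its shifted gaps, producing the $\sum_{j\neq i}\frac{\Delta_i}{\Delta_i+\Delta_j}$ factor with the right constant $C$. Controlling the lower-order corrections so that they collapse into a single $\eta_i(T) = O(1/\log T)$ term — in particular the $\log\log$ terms from the $C\log t$ inside the logarithm and from the $\sqrt{\cdot}$-type fluctuations — is the tedious but routine part; I would defer the explicit form of $\eta_i$ and $h_i$ to the appendix as the statement already promises.
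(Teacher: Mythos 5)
Your proposal follows essentially the same route as the paper's proof: a change of measure against alternative problems where arm $i$ is made optimal, the contraction/transportation inequality of \citep{garivier2016explore} combined with the sub-logarithmic assumption (invoked in the shifted instance, which is what produces the $\sum_{j\neq i}\frac{\Delta_i}{\Delta_i+\Delta_j}$ factor), subtraction of the expected free observations $\epsilon p^{(i)} t$ from $\EE_1 O_i(t)$, and monotonicity used to evaluate each arm's constraint either at $T$ or at its critical time $t^\star_i \approx 1/(2\epsilon p^{(i)}\Delta_i^2)$, exactly as in the appendix. One clarification: your double-counting worry is moot, since only the monotonicity of each $\EE_1 N_i(t)$ is needed (not that of $\EE R_t$), so that $\EE_1 R_T = \sum_{i\geq 2}\Delta_i \EE_1 N_i(T) \geq \sum_{i\geq 2}\Delta_i \EE_1 N_i(t^\star_i)$ holds term by term, which is your first stated mechanism and is the paper's argument.
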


Theorem~\ref{thm:LB_passive} correctly reports a lower bound increasing with the horizon. It shows a transition from a $O(\log T)$ optimal regret for $T \ll 1/(2 \epsilon p^{(i)} \Delta_i^2)$ to a finite regret function of $\epsilon$ when $T$ gets bigger. According to Theorem~\ref{thm:LB_passive}, the correct dependency in $\epsilon$ in  the regret should be in $O(\log(1/\epsilon))$, not $O(1/\epsilon)$ as seen for the naive FTL-robin algorithm.

We can also wonder what is the most favorable passive setting. Simple computations show that  free observations should be drawn according to the probability vector $(p_\star^{(1)},\ldots,p_\star^{(K)})$ where $p_\star^{(i)}$ is proportional to $\frac{1}{\Delta_i}$ (here, we actually ignore the $\log\log$ and $\eta$ terms of Theorem~\ref{thm:LB_passive}), leading to a lowest lower bound 
\begin{align*}
\EE_1R_T
&\geq \sum_{i=2}^K\frac{1}{2\Delta_i} \log\left(\frac{1}{\epsilon}\frac{\sum_{j=2}^K\frac{1}{\Delta_j}}{4C \sum_{j\neq i}\frac{1}{\Delta_i+\Delta_j}}\right) + \alpha \\
&\geq \sum_{i=2}^K\frac{1}{2\Delta_i} \log\left(\frac{1}{4C\epsilon}\right) + \alpha \: ,
\end{align*}
where $\alpha$ regroups the $\log\log$ and $\eta$ terms in theorem~\ref{thm:LB_passive}. This lower bound shows in particular that when all sub-optimal arms have the same gap, the optimal sample distribution is uniform and the lower bound is of order $\frac{K}{\Delta}\log(\frac{1}{\epsilon})$ .

\subsection{ACTIVE OBSERVER}

An active observer has the possibility to chose the weights $p_t^{(i)}$ at each stage $t\leq T$, potentially achieving a much better distribution of the free observations up to stage $T$ than any static distribution.  As before, standard techniques give the following lower bound.

\begin{lemma}\label{lemma:LB_active_simple}
The regret of a sub-logarithmic algorithm with constants $C$, $C_0$ verifies
\begin{align*}
\mathbb{E}R_T \geq \sum_{i=2}^k \frac{h_i(T)}{2\Delta_i} - \Delta_k(\epsilon T - \sum_{j>k} \frac{h_j(T)}{2\Delta_j^2}) \: ,
\end{align*}
where $k = \min \{ i\in\{2,\ldots,K\} \: : \: \sum_{j>i} \frac{h_j(T)}{2\Delta_j^2} \leq \epsilon T \}$.
\end{lemma}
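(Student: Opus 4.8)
\emph{Proof plan.} The plan is to reuse the information‑theoretic constraint already behind Lemma~\ref{lemma:LB_passive_simple}, and then combine it with an elementary (fractional) knapsack optimization over the way an active observer can spread the $\epsilon T$ free observations across arms.

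\textbf{Step 1: the per‑arm sampling constraint.} I would first argue that for any algorithm that is sub‑logarithmic with constants $C,C_0$, and for every sub‑optimal arm $i\in\{2,\dots,K\}$,
\begin{align*}
\EE O_i(T) \;=\; \EE N_i(T) + \EE F_i(T) \;\geq\; \frac{h_i(T)}{2\Delta_i^2}\,,
\end{align*}
with $h_i(T)=O(\log T)$ the function defined in the appendix. This is the same change‑of‑measure argument as in the passive case: one compares the instance $\mu$ with an alternative in which arm $i$ is pushed just above $\mu^\star$ and becomes optimal; the only distribution that changes is $\nu^{(i)}$, so the relevant log‑likelihood ratio is carried by the $O_i(T)$ observations of arm $i$ (the free draws of $\nu^{(i)}$ count exactly like ordinary pulls); a Bretagnolle–Huber / data‑processing step against a well‑chosen event turns this into a lower bound, and the sub‑logarithmic guarantee of Definition~\ref{def:better_than_ucb}, \emph{applied in the alternative instance}, controls how rarely the now‑suboptimal arms are pulled there, which forces $\PP_{\lambda}$ of that event close to $1$ while $\PP_{\mu}$ of it is small. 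Rearranging gives the displayed inequality; as this is identical to the passive derivation I would simply invoke it.

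\textbf{Step 2: reduce the regret bound to a knapsack.} The budget identity $\sum_{i=1}^K F_i(T)=\sum_{t=1}^T Z_t$ holds deterministically, and its expectation is $\epsilon T$ in the static random setting, no matter how the active observer chooses $f_t$; dropping $\EE F_1(T)\geq 0$ gives $\sum_{i=2}^K \EE F_i(T)\leq\epsilon T$. Writing $b_i:=\tfrac{h_i(T)}{2\Delta_i^2}$, $\phi_i:=\EE F_i(T)$, and using the regret decomposition $\EE R_T=\sum_{i=2}^K \Delta_i\EE N_i(T)$ together with $\EE N_i(T)\geq 0$ and Step 1,
\begin{align*}
\EE R_T \;\geq\; \sum_{i=2}^K \Delta_i\,\max\!\bigl\{0,\; b_i-\phi_i\bigr\}\,,
\end{align*}
so $\EE R_T$ is at least the optimal value of: minimize $\sum_{i=2}^K\Delta_i\max\{0,b_i-\phi_i\}$ subject to $\phi_i\geq 0$ and $\sum_{i=2}^K\phi_i\leq\epsilon T$.

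\textbf{Step 3: solve the knapsack.} A unit of budget put on arm $i$ (while $\phi_i<b_i$) lowers the objective by $\Delta_i$, and raising $\phi_i$ above $b_i$ never helps; since the ordering $\mu^{(1)}>\mu^{(2)}\geq\cdots\geq\mu^{(K)}$ yields $\Delta_2\leq\cdots\leq\Delta_K$, the minimizer fills arms in decreasing order of gap: $\phi_j=b_j$ for $j>k$, $\phi_k=\epsilon T-\sum_{j>k}b_j$, and $\phi_i=0$ for $2\leq i<k$, where $k=\min\{i\in\{2,\dots,K\}:\sum_{j>i}b_j\leq\epsilon T\}$ is exactly the index in the statement (its defining inequality gives $\phi_k\geq 0$; minimality of $k$ gives $\phi_k\leq b_k$ when $k>2$; if $k=2$ and the leftover budget exceeds $b_2$, the optimal value is $0$ while the claimed right‑hand side is non‑positive, so the bound is trivial there). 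Substituting back,
\begin{align*}
\EE R_T \;\geq\; \sum_{i=2}^{k-1}\Delta_i b_i + \Delta_k\bigl(b_k-\phi_k\bigr)
\;=\; \sum_{i=2}^{k}\frac{h_i(T)}{2\Delta_i} - \Delta_k\Bigl(\epsilon T - \sum_{j>k}\frac{h_j(T)}{2\Delta_j^2}\Bigr)\,,
\end{align*}
which is the claim.

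I expect the only delicate point to be Step 1 — the change‑of‑measure bound on $\EE O_i(T)$ for a sub‑logarithmic algorithm — because one must route the sub‑logarithmic hypothesis through the \emph{alternative} instance and make sure the free observations of arm $i$ enter the likelihood ratio just like ordinary pulls; Steps 2 and 3 are bookkeeping and an elementary linear program.
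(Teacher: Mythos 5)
Your proposal is correct and follows essentially the same route as the paper: the same change-of-measure constraint $\EE O_i(T)\geq h_i(T)/(2\Delta_i^2)$ imported from the passive case, the same reduction to a linear program with budget $\sum_{i\geq 2}\EE F_i(T)\leq \epsilon T$ (the paper writes it with explicit variables $n_i,f_i$, you fold it into a knapsack with $\max\{0,b_i-\phi_i\}$, which is equivalent), and the same greedy solution allocating the free observations to the largest-gap arms first, yielding exactly the stated bound.
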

The structure of the solution to the optimization problem in this case is again educational: an optimal algorithm presented with a given amount of free observations would spend them at the beginning,   before costly pulls, and will spend them on the worst arms. This intuition drove the construction of algorithms for active  observer in section~\ref{SE:Upper}:

First gather free observations, ideally accordingly to the proportion $(p_\star^{(1)}, \ldots,p_\star^{(K)})$ then discards arms for which enough information were gathered, and use a standard optimal bandit algorithm on the remaining ones. 

As in the passive observer case, although this lower bound can be meaningful for small horizon $T$, it becomes void for larger horizons. A better lower bound using the monotony of the number of pulls and of the regret is provided in the next theorem.

\begin{theorem}\label{thm:LB_active}
For $k\in\{2, K-1\}$ let $t_k = \max\{t\geq 1 \: : \: \sum_{j=k+1}^K \frac{h_j(t)}{2\Delta_j^2} > \epsilon t \}$.
The regret of any active sub-logarithmic algorithm with constants $C$, $C_0$ verifies
\begin{align*}
\EE R_T &\geq \max_{k:t_k \leq T} \sum_{i=2}^k \frac{1}{\Delta_i}\Bigg[ \log(\frac{1}{\epsilon}\frac{\sum_{j=k+1}^K\frac{\Delta_i^2}{\Delta_j^2}}{4C \sum_{j\neq i} \frac{\Delta_i}{\Delta_i+\Delta_j}})\\
&- \log\log(\frac{1}{\epsilon}\sum_{j=k+1}^K\frac{1}{2\Delta_j^2})+ \eta(\frac{1}{\epsilon}\sum_{j=k+1}^K\frac{1}{2\Delta_j^2})\Bigg] \: .
\end{align*}
\end{theorem}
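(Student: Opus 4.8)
The plan is to upgrade the void bound of Lemma~\ref{lemma:LB_active_simple} by exploiting monotonicity of $t \mapsto \EE N_i(t)$ and $t \mapsto \EE R_t$, exactly as was done to pass from Lemma~\ref{lemma:LB_passive_simple} to Theorem~\ref{thm:LB_passive}, but now optimizing over the \emph{choice of which arms to "retire"} rather than over a static weight vector. First I would fix an index $k \in \{2,\dots,K-1\}$ and the associated time $t_k = \max\{t \geq 1 : \sum_{j=k+1}^K \frac{h_j(t)}{2\Delta_j^2} > \epsilon t\}$; this is the last stage at which the total budget $\epsilon t$ of free observations is still too small to have supplied the KL-mandated $\frac{h_j(t)}{2\Delta_j^2}$ samples to \emph{all} arms $j > k$ simultaneously. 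The idea is that up to $t_k$, at least one of the arms $\{2,\dots,k\}$ must have been pulled (not merely observed for free) a nontrivial number of times, because even the best possible allocation of free observations cannot cover the information demand of the cheaper arms $k+1,\dots,K$ \emph{and} leave enough for arms $2,\dots,k$.

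The core steps, in order: (i) from the sub-logarithmic assumption and a standard change-of-measure / Bretagnolle--Huber argument, obtain for each sub-optimal arm $i$ the information constraint $\EE O_i(T) \geq h_i(T)/\Delta_i^2$ — more precisely the finite-time constraint of \citep{garivier2016explore} that couples $\EE O_i(t)$ to $\log t$ and the sub-logarithmic constants $C,C_0$ — together with the decomposition $O_i = N_i + F_i$ and the budget constraint $\sum_i \EE F_i(t) \leq \epsilon t$. (ii) For the fixed $k$, write $\EE R_{t_k} \geq \sum_{i=2}^k \Delta_i \EE N_i(t_k)$ and lower-bound the right-hand side by the value of the linear program that minimizes $\sum_{i=2}^k \Delta_i n_i$ subject to $n_i + f_i \geq h_i(t_k)/\Delta_i^2$, $f_i \geq 0$, and $\sum_{i=2}^k f_i + (\text{demand of arms } >k) \leq \epsilon t_k$; since at $t_k$ the demand of arms $>k$ already exceeds $\epsilon t_k$ is \emph{not quite} true (it is $\sum_{j>k} h_j(t_k)/(2\Delta_j^2) > \epsilon t_k$, the factor $1/2$ coming from the constraint form), the free budget available to arms $2,\dots,k$ is essentially exhausted, so $\sum_{i=2}^k \Delta_i \EE N_i(t_k) \gtrsim \sum_{i=2}^k \frac{h_i(t_k)}{2\Delta_i}$. (iii) Now invoke monotonicity: $\EE R_T \geq \EE R_{t_k}$ for $T \geq t_k$, so the bound at $t_k$ persists for all later horizons — this is precisely the step that rescues the bound from becoming void. (iv) Finally, substitute the explicit asymptotics $h_i(t) \sim \log t$ and solve for $t_k$: from $\sum_{j>k} \frac{h_j(t_k)}{2\Delta_j^2} \approx \epsilon t_k$ one gets $t_k \approx \frac{1}{\epsilon}\sum_{j>k}\frac{\log t_k}{2\Delta_j^2}$, hence $\log t_k \approx \log\frac{1}{\epsilon} + \log\sum_{j>k}\frac{1}{2\Delta_j^2} - \log\log(\cdots)$, which when plugged back into $\sum_{i=2}^k \frac{h_i(t_k)}{2\Delta_i} = \sum_{i=2}^k \frac{\log t_k}{2\Delta_i}$ (up to the $\eta_i$ corrections and the $C$-dependent constants tracked through step (i), which produce the $\frac{\sum_{j>k}\Delta_i^2/\Delta_j^2}{4C\sum_{j\neq i}\Delta_i/(\Delta_i+\Delta_j)}$ ratio inside the logarithm) yields the bracketed expression. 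Taking the maximum over all admissible $k$ (those with $t_k \leq T$) gives the stated bound, since each choice of $k$ is a valid lower bound and the algorithm must satisfy all of them.

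The main obstacle I anticipate is step (ii): making rigorous the claim that the free-observation budget is "essentially used up" on arms $k+1,\dots,K$ so that arms $2,\dots,k$ must resort to costly pulls. The subtlety is that the LP constraints involve $\EE O_i$, not a hard per-trajectory bound, and the adversary (the active observer) is choosing $p_t^{(i)}$ adaptively, so one must argue that \emph{no} feasible fractional allocation of the expected budget $\epsilon t_k$ can simultaneously meet the demand of the high-index arms and offload the demand of the low-index arms — this is exactly where the definition of $t_k$ as the $\max$ over $t$ with strict inequality does the work, but care is needed because $h_j$ itself depends on $t$ (it is $O(\log t)$ but with lower-order $\eta_j$ fluctuations), so the crossing time $t_k$ is defined implicitly and one must check it is well-defined and that the asymptotic inversion $\log t_k \approx \log(1/\epsilon) + \cdots$ is valid with the claimed $O(1/\log)$ error. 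A secondary technical point is bookkeeping the constants $C, C_0$ and the $\eta_i$ terms through the change-of-measure inequality so that they land in precisely the form displayed; this is routine but tedious, and I would relegate it to the appendix, matching the paper's stated convention ("see appendix for details").
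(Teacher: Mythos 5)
Your proposal follows essentially the same route as the paper: change-of-measure constraints $\EE O_i(t)\gtrsim h_i(t)/\Delta_i^2$ from the sub-logarithmic assumption, a fixed-$t$ linear program whose optimal solution sends the free-observation budget to the worst arms so that at $t=t_k$ (where the demand of arms $j>k$ strictly exceeds $\epsilon t_k$ by definition) arms $2,\dots,k$ receive no free help and must be pulled $\sim h_i(t_k)/(2\Delta_i^2)$ times, then monotonicity of $\EE R_t$ to carry the bound at $t_k$ to every $T\geq t_k$, and finally the inversion $t_k\geq \frac{1}{\epsilon}\sum_{j>k}\frac{1}{2\Delta_j^2}$ with the resulting $\log\log$ correction, maximized over admissible $k$. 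This matches the paper's argument step for step, so the proposal is correct.
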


When all gaps are equal to the same value $\Delta>0$, the leading term of this lower bound is of the form
\begin{align*}
\max_{k:t_k \leq T} \frac{k-1}{\Delta}\log(\frac{1}{\epsilon}\frac{K-k}{K}) \: .
\end{align*}

In particular, this result states that as $T$ goes to infinity, the regret is asymptotically lower bounded by $\frac{K-1}{\Delta}\Big[\log(\frac{1}{\epsilon})-\log\log(\frac{e}{\epsilon})\Big]$.

\section{ALGORITHMS AND UPPER-BOUNDS}\label{SE:Upper}

In this section, we exhibit algorithms matching the lower bounds derived in the previous section,  up to $\log\log(\cdot)$ terms, showing that they indeed represent accurately the problem complexity.

\subsection{PASSIVE OBSERVER}\label{SE:UpperPassive}

A passive observer does not get to choose the arms on which   free information is gained. As in the classical stochastic multi-armed bandit, the only decision is therefore which arm to pull. It is then natural to extend known algorithms by taking into account all observations from both provenances.

As  UCB pulls the arm with maximal index $\overline{X}_t^{(i)} + \sqrt{\frac{6\log t}{N_i(t)}}$, we extend it by using all available observations both in the empirical mean and exploration term. Algorithm~\ref{algo:ucb_passive} pulls $i_t = \arg \max_{i} \overline{X}_t^{(i)} + \sqrt{\frac{6\log t}{O_i(t)}}$.

\begin{algorithm}
\caption{UCB with passive observations.}
\label{algo:ucb_passive}
\begin{algorithmic}[l]
  \State Pull each arm once.
  \Loop
  	  : at stage $t$,
  	  \State $i_t = \arg \max_{i} \overline{X}_t^{(i)} + \sqrt{\frac{6\log t}{O_i(t)}}$
      \State Pull arm $i_t$, observe $X_t^{(i)}$.
      \State If $Z_t = 1$, sample $f_t$ and observe $X_t^{(f_t)}$.
      \State Update $\overline{X}_t$, $N_i(t)$, $F_i(t)$, $O_i(t) = N_i(t)+F_i(t)$.
  \EndLoop
\end{algorithmic}
\end{algorithm}

\begin{theorem}\label{thm:ucb_passive}
Consider the static passive observer case, where $f_t$ follows the categorical distribution with parameters $(p^{(1)},\ldots,p^{(K)})$ and the probability of getting a free observation is $\epsilon\in(0,1]$ for all stages $t\geq 1$. 

Then the regret of ucb verifies both
\begin{align*}
\EE R_T &\leq \sum_{i=2}^K\frac{24}{\Delta_i} \log T \: ,\\
\mbox{and}&\\
\EE R_T &\leq  \sum_{i=2}^K \frac{24}{\Delta_i}\log\frac{50}{\epsilon p^{(i)}}\\
 &\qquad +  \sum_{i=2}^K \frac{24}{\Delta_i}\max\left\{\log\frac{1}{e\Delta_i^2} ,\log\log\frac{20}{\epsilon p^{(i)}} \right\} \: . 
\end{align*}
\end{theorem}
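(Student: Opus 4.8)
The plan is to follow the classical UCB analysis of \citet{auer2002finite}, but carefully tracking observation counts $O_i(t)$ rather than pull counts $N_i(t)$. The first bound (the $\sum_i \frac{24}{\Delta_i}\log T$ one) is essentially the standard UCB guarantee: the event that arm $i$ is pulled at stage $t$ while having been observed ``enough'' times, $O_i(t) \geq \frac{24\log t}{\Delta_i^2}$, forces a deviation of one of the empirical means from its expectation by more than $\sqrt{6\log t / O_i(t)}$, which has probability $O(t^{-3})$ by Hoeffding applied to an average of $O_i(t)$ independent samples (the free observations of arm $i$ are i.i.d.\ draws from $\nu^{(i)}$, independent of the pull-based samples, so the pooled average still concentrates). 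Summing the probabilities gives a constant, and the deterministic part contributes $\frac{24\log T}{\Delta_i^2}$ \emph{observations}; since $N_i(T) \leq O_i(T)$, we have $\EE N_i(T) \leq \frac{24\log T}{\Delta_i^2} + O(1)$, hence the regret bound. I would actually replace $\log t$ by $\log T$ throughout to keep the sum clean.

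The substantive part is the second bound, which must replace $\log T$ by $\log(1/(\epsilon p^{(i)}))$ plus the $\log\log$ (or $\log(1/\Delta_i^2)$) correction. The idea is that free observations of arm $i$ accumulate at rate $\epsilon p^{(i)}$, so by a Chernoff bound $F_i(t) \geq \frac{1}{2}\epsilon p^{(i)} t$ with high probability once $t$ is large; combined with $N_i(t)$, the total $O_i(t)$ reaches the threshold $\frac{24\log t}{\Delta_i^2}$ at a stage $t^\star$ that is essentially the solution of $\epsilon p^{(i)} t \asymp \frac{\log t}{\Delta_i^2}$, i.e.\ $t^\star = \tilde{O}\!\left(\frac{1}{\epsilon p^{(i)}\Delta_i^2}\log\frac{1}{\epsilon p^{(i)}\Delta_i^2}\right)$. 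After $t^\star$, arm $i$ is almost never pulled, so $\EE N_i(T)$ is controlled by what happens before $t^\star$: apply the \emph{first} bound's reasoning but only up to horizon $t^\star$, giving $\EE N_i(t^\star) \leq \frac{24\log t^\star}{\Delta_i^2} + O(1)$. Plugging in $\log t^\star = \log\frac{1}{\epsilon p^{(i)}} + \log\frac{1}{\Delta_i^2} + \log\log(\cdots) + O(1)$ and then multiplying by $\Delta_i$ produces exactly the claimed form, where the $\max\{\log\frac{1}{e\Delta_i^2}, \log\log\frac{20}{\epsilon p^{(i)}}\}$ term absorbs the $\log(1/\Delta_i^2)$ and the $\log\log$ in a single clean expression — I would split into the two regimes $\Delta_i^2 \lessgtr \epsilon p^{(i)}$ to justify taking the max rather than the sum. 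The explicit constants ($24$, $50$, $20$) come from being slightly generous in the Hoeffding and Chernoff tail exponents so that the residual series sum to something absorbed into the constants.

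The main obstacle I anticipate is making the ``free observations concentrate'' step rigorous while keeping it uniform in $T$: the event $F_i(t) < \frac{1}{2}\epsilon p^{(i)} t$ must be shown to contribute only $O(1)$ to $\sum_{t} \PP(i_t = i, F_i(t) \text{ small})$ summed over all $t$, not just up to a fixed horizon, and simultaneously one must handle the coupling between ``arm $i$ is still being pulled'' and ``$O_i(t)$ is below threshold.'' The cleanest route is the standard peeling/union-bound decomposition: $\{i_t = i\} \subseteq \{O_i(t) < \frac{24\log t}{\Delta_i^2}\} \cup \{\text{some empirical mean deviates}\}$, bound the second by $O(t^{-3})$, and bound the first by noting it implies either few free observations (Chernoff, probability $\le e^{-c\epsilon p^{(i)} t}$, summable with total $O(1/(\epsilon p^{(i)}))$ which is \emph{not} $O(1)$ — so this contributes to the $\log$ term, not the constant) or $t \le t^\star$. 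One has to be careful that $N_i(t)$ itself contributes nonnegatively to $O_i(t)$, so the threshold is crossed no later than the ``free-only'' estimate predicts, which only helps. Handling the regime $\Delta_i$ very small (where $t^\star$ is dominated by the $1/\Delta_i^2$ factor and the bound degenerates gracefully to the first, $\log T$, bound) is where the $\max$ in the statement earns its keep, and I would verify that case separately to confirm the stated constants.
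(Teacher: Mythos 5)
Your overall architecture --- the Auer-style decomposition of pulls into ``observation count below the UCB threshold'' plus deviation events, a concentration argument for $F_i(t)$, and then an optimization over $t$ --- is the same as the paper's, and your first bound is fine. But two quantitative steps in the second bound do not work as proposed. First, the accounting of the event $\{F_i(t) < \tfrac12\epsilon p^{(i)} t\}$: its probabilities sum over $t$ to $\Theta(1/(\epsilon p^{(i)}))$, and each failure time can cost one pull of arm $i$, i.e.\ regret $\Delta_i$, so this route leaves a term of order $\Delta_i/(\epsilon p^{(i)})$ --- exactly the naive FTL-robin-type $1/\epsilon$ dependence the theorem is designed to beat. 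Your claim that this ``contributes to the log term'' is false in general (take $\Delta_i\approx 1$ and $\epsilon p^{(i)}$ tiny: $\Delta_i/(\epsilon p^{(i)})$ is not dominated by $\tfrac{24}{\Delta_i}\log\tfrac{50}{\epsilon p^{(i)}}$). The paper avoids this by using the time-dependent deviation $\sqrt{5\epsilon p^{(i)}t\log t}$ (Lemma~\ref{lemma:binary_concentration_t2}), whose failure probability $t^{-2}$ sums to a constant uniformly in $\epsilon$, at the price of carrying this deviation inside the subsequent supremum; your sketch could be patched by invoking the crude Chernoff event only for $t$ beyond a suitable threshold, but that is precisely the care that is missing.

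Second, and more structurally: bounding $\EE N_i(T)$ by the full threshold at the crossing time $t^\star$, i.e.\ $\tfrac{24\log t^\star}{\Delta_i^2}$ with $t^\star \approx \tfrac{1}{\epsilon p^{(i)}\Delta_i^2}\log\tfrac{1}{\epsilon p^{(i)}\Delta_i^2}$, yields $\tfrac{24}{\Delta_i^2}\bigl[\log\tfrac{1}{\epsilon p^{(i)}} + \log\tfrac{1}{\Delta_i^2} + \log\log\tfrac{1}{\epsilon p^{(i)}\Delta_i^2}\bigr]$, the \emph{sum} of the two correction terms rather than the \emph{max} in the statement, and the proposed split $\Delta_i^2\lessgtr\epsilon p^{(i)}$ does not repair this: the max compares $\log\tfrac{1}{e\Delta_i^2}$ with $\log\log\tfrac{20}{\epsilon p^{(i)}}$, so the relevant comparison would be $\Delta_i^2$ versus $1/\log\tfrac{20}{\epsilon p^{(i)}}$, and in either regime $\log t^\star$ still contains both terms. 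To obtain the stated form you must keep the subtraction of the free observations inside the supremum: the number of below-threshold pulls is at most $\sup_t\bigl[\tfrac{24\log t}{\Delta_i^2} - \epsilon p^{(i)}t + \sqrt{5\epsilon p^{(i)}t\log t}\bigr]$, and the two regimes $t/\log t\gtrless 20/(\epsilon p^{(i)})$ then give the two branches of the max: in the late regime the residual $-\tfrac12\epsilon p^{(i)}t$ caps the supremum at $\tfrac{24}{\Delta_i^2}\log\tfrac{48}{e\epsilon p^{(i)}\Delta_i^2}$ (no $\log\log$ term), while in the early regime $t\leq\tfrac{50}{\epsilon p^{(i)}}\log\tfrac{20}{\epsilon p^{(i)}}$ caps it at $\tfrac{24}{\Delta_i^2}\log\bigl(\tfrac{50}{\epsilon p^{(i)}}\log\tfrac{20}{\epsilon p^{(i)}}\bigr)$ (no $\log\tfrac{1}{\Delta_i^2}$ term). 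You correctly identified both pressure points, but as written your resolutions would only prove a strictly weaker sum-form bound with different constants, not the theorem as stated.
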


Hence UCB with passive observations recovers the $\log(\frac{1}{\epsilon})$ dependency in $\epsilon$, up to a doubly logarithmic term when $\epsilon p^{(i)}$ is small compared to the squared gaps. When the dominant term in this maximum is $\log\frac{1}{e\Delta_i^2}$, the regret due to arm $i$ has the form $\frac{1}{\Delta_i}\log\frac{1}{\epsilon p^{(i)} \Delta_i^2}$, which is sub-optimal with respect to $\Delta_i$ (see Theorem~\ref{thm:LB_passive}). This is due to the sub-optimality of UCB itself: while the regret of UCB on a bandit problem is $O(\sum_{i=2}^K \frac{\log T}{\Delta_i})$, other algorithms of the same family like UCB2 \citep{auer2002finite}, Improved-UCB, \citep{auer2010ucb} or MOSS \citep{audibert2009minimax, degenne2016anytime} get an improved regret of order $O(\sum_{i=2}^K \frac{\log (T\Delta_i^2)}{\Delta_i})$.

The dependency in $\log(\frac{1}{\epsilon})$ means that $\epsilon$ as small as $\frac{1}{T}$ gives useful information to a learner. Obviously there is no gain to be had if $\epsilon < \frac{1}{T}$, as there is in average less than one additional observation before $T$, but few more free observations are enough to improve the regret.

\subsection{ACTIVE OBSERVER} \label{SE:UpperActive}

While a uniform allocation of the free observations over the arms gets the right $\log(\frac{1}{\epsilon})$ dependency in $\epsilon$, having the choice of the arm which will be observed allows an algorithm to get the right dependency in the parameters of the bandit problem. In the active setting, the algorithm can choose freely which of the $[K]$ arms will get an additional observation, when such an observation is available.

To devise an algorithm taking advantage of this possibility, we try to mimic the lower bound for fixed stage, as in Lemma~\ref{lemma:LB_active_simple}. A good algorithm should use the available free observations first to discard the worse arms, before using costly pulls only on the remaining arms.

We introduce an algorithm combining two subroutines: an Explore-Then-Commit (ETC) \citep{even2006action, perchet2013multi} algorithm on the free observations is used to narrow the set of arms which need to be pulled and an algorithm of the UCB family is used on this set. As we seek for optimality with respect to the problem parameters we use OCUCB-n \citep{lattimore2016regret}, which is the UCB-type algorithm closest to it. ETC is described in Algorithm~\ref{algo:etc}. OCUCB-n with parameters $\eta>1$ and $\rho\in[1/2,1]$ pulls at stage $t\in\N^*$ the arm with maximal index
\begin{align*}
& \qquad \qquad \overline{X}_t^{(i)} + \sqrt{\frac{2\eta \log B_{t-1}^{(i)}}{N_i(t)}}\\
\mbox{where }&\\
 B_{t-1}^{(i)} &= \max\Bigg\{e,\log(t),
\frac{t\log t}{\sum_{i=1}^K \min\{N_i, N_j^\rho N_i^{1-\rho}\}} \Bigg\}
\end{align*}
where $N_i$ is a shorthand notation for $N_i(t)$.

The main algorithm use a succession of epochs. In epoch number $m\in\N$, the ETC subroutine collects (free) information on all the arms in $[K]$, while OCUCB-n pulls arms in an available subset of the arms $S_m$. At the end of epoch $m$, the free observations gathered are used to discard arms from $[K]$ which are not optimal with high enough confidence, forming $S_{m+1}$. There is a finite $m_i\in\N$ depending on $\epsilon$ and the gaps such that with high probability, $i\notin S_m$ for $m>m_i$, hence arm $i$ contributes to the regret only up to epoch $m_i$ and the regret is finite.

\begin{algorithm}
\caption{Active Algorithm.}
\label{algo:active}
\begin{algorithmic}[l]
  \Require parameters $\rho \in [1/2,1]$, $\alpha \geq 1, \eta>1$.
  \State Initialize $S_0=[K]$.
  \Loop
  	  : at epoch $m$, with duration $d_m=2^{2^m}$,
	  \State Pull arms according to OCUCB-n with parameters $\eta$ and $\rho$ on $S_m$,
	  \State Use free observations according to ETC with parameter $\alpha$ and horizon $T=d_{m+1}^{3/2}\log d_{m+1}$.
	  \State Set $S_{m+1}$ to the set returned by ETC.
  \EndLoop
\end{algorithmic}
\end{algorithm}

\begin{algorithm}
\caption{Explore-Then-Commit}
\label{algo:etc}
\begin{algorithmic}[l]
  \Require parameter $\alpha \geq 1$, horizon $T\in \N^*$.
  \State Initialize $s=0$, $S = [K]$.
  \Loop
      \State Observe all arms in $S$.
	  \State Discard from $S$ any arm $i$ such that\\
	  $\hat{\mu}_s^{(i)} + \sqrt{\frac{2\alpha}{s}\log(\frac{T}{s})}< \max_{j\in S}\hat{\mu}_s^{(j)} - \sqrt{\frac{2\alpha}{s}\log(\frac{T}{s})} \: .$
	  \State $s \leftarrow s+1$.
  \EndLoop\\
  \Return $S$.
\end{algorithmic}
\end{algorithm}

In order to write a regret upper bound for our active algorithm, we introduce quantities $H_{i,\rho}$ for $i\in\{2,\ldots,K\}$ and $\rho\in[1/2,1]$,
\begin{align*}
H_{i,\rho} &= \frac{i}{\Delta_i^2} + \sum_{j=i+1}^K \frac{1}{\Delta_i^{2(1-\rho)}\Delta_j^{2\rho}}\: .
\end{align*}
These constants transcribe the difficulty of the problem. A number of observations of order $\frac{1}{\epsilon}H_{i,1}$ will be necessary for ETC to eliminate arm $i$ with high confidence.

\begin{theorem}\label{thm:regret_active}
The regret of the active algorithm~\ref{algo:active} with parameters $\rho\in[1/2,1]$ and $\alpha = 1$ on problems with rewards in $[0,1]$ is
\begin{align*}
\EE R_T & \leq
C_\eta\sum_{i=2}^{K} \frac{4}{\Delta_i}\max \left\{\log(\frac{1}{\epsilon}),\log \sqrt{H_{i,\rho}}\right\}\\
&\qquad + 51K +O\left(\sum_{i=2}^K\frac{1}{\Delta_i}(\log\log \frac{H_{i,1}}{\epsilon})^2\right)
\end{align*}
with $C_\eta$ a constant that depends only on $\eta$ (see \citep{lattimore2016regret} for details on $C_\eta$).
\end{theorem}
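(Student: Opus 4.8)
The plan is to decompose the regret of Algorithm~\ref{algo:active} into a sum over arms $i\in\{2,\ldots,K\}$ and, for each $i$, bound the contribution by finding the epoch $m_i$ after which arm $i$ is no longer in $S_m$ with high probability. First I would analyze the ETC subroutine of Algorithm~\ref{algo:etc} in isolation: with $s$ rounds of free observations on a set of arms, the deviation term $\sqrt{\frac{2\alpha}{s}\log(T/s)}$ controls the empirical means via a standard Hoeffding/sub-Gaussian union bound, so that on a high-probability event the best arm is never discarded and a sub-optimal arm $i$ is discarded once $s$ is large enough that $2\sqrt{\frac{2\alpha}{s}\log(T/s)} < \Delta_i$, i.e. once $s \gtrsim \frac{1}{\Delta_i^2}\log(T\Delta_i^2)$. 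I would then translate the number of ETC rounds into a number of free observations and then into wall-clock time: since ETC in epoch $m$ is run with horizon $T = d_{m+1}^{3/2}\log d_{m+1}$ and free observations arrive at rate $\epsilon$ across the $d_m = 2^{2^m}$ stages of the epoch, the number of ETC rounds available by the end of epoch $m$ is of order $\epsilon\, d_m$ up to concentration of a binomial (another high-probability event, handled by a Chernoff bound and absorbed into the $51K$ and $O(\cdot)$ terms). The key accounting identity is that arm $i$ survives only while $\epsilon\, d_m \lesssim \frac{1}{\Delta_i^2}\log(\cdot)$, which pins down $2^{m_i} \asymp \log\!\big(\tfrac{1}{\epsilon}H_{i,1}\big)$ roughly, and more precisely $d_{m_i} \asymp \frac{1}{\epsilon}H_{i,1}$ once the ETC elimination rule is matched to the OCUCB-n-style weights $\min\{N_i, N_j^\rho N_i^{1-\rho}\}$ that define the $H_{i,\rho}$ constants.

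Next I would bound the regret incurred on arm $i$ up to epoch $m_i$. There are two sources. The pulls: inside each epoch $m \le m_i$, arm $i$ (if still in $S_m$) is pulled by OCUCB-n, and the regret of OCUCB-n over an epoch of length $d_m$ on the set $S_m$ is $O\!\big(\sum_{i\in S_m}\frac{1}{\Delta_i}\log(\text{something} \le d_{m_i+1})\big)$ by the guarantee of \citep{lattimore2016regret}; summing the geometric-type series over $m\le m_i$ and using $\log d_{m_i+1} = 2^{m_i+1}\log 2 \asymp \max\{\log\frac1\epsilon, \log\sqrt{H_{i,\rho}}\}$ gives the leading term $C_\eta \sum_i \frac{4}{\Delta_i}\max\{\log\frac1\epsilon,\log\sqrt{H_{i,\rho}}\}$. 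The doubly-exponential epoch schedule is exactly what makes this sum telescope to (a constant times) its last term rather than blow up. The failure events: on the complement of the good events (ETC misidentifies an arm, or the binomial count of free observations deviates), I would pay at most $O(d_m \cdot \Pr[\text{bad}])$ in epoch $m$; choosing the confidence levels as in the ETC deviation term makes $\Pr[\text{bad}]$ summable against $d_m$, contributing the $O\big(\sum_i \frac{1}{\Delta_i}(\log\log\frac{H_{i,1}}{\epsilon})^2\big)$ correction (the $(\log\log)^2$ arising from summing over the $O(\log\log)$ epochs before elimination, each contributing a $\log\log$-sized deviation cost). The $51K$ is the $O(1)$ per-arm cost of the first few epochs and the initialization pulls.

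Finally I would combine: $\EE R_T \le \sum_{i=2}^K \big(\text{OCUCB-n pull cost up to }m_i + \text{failure cost} + O(1)\big)$, which assembles into the stated bound uniformly in $T$ because $m_i$ does not depend on $T$ — once $m > m_i$ arm $i$ is absent from $S_m$ with high probability and contributes nothing further, so the bound is finite. The main obstacle I anticipate is the second step: carefully matching the ETC elimination threshold (which only "sees" the plain empirical gap $\Delta_i$ after $s$ rounds) against the $H_{i,\rho}$ constants, since $H_{i,\rho}$ encodes a $\rho$-interpolated allocation of observations across arms that reflects how OCUCB-n shares pulls — reconciling the "explore all arms equally" nature of the ETC free-observation phase with the non-uniform $N_j^\rho N_i^{1-\rho}$ weights, and getting the constant $4$ and the $\max$-structure exactly right, will require the delicate bookkeeping. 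A secondary subtlety is ensuring the high-probability events are defined once, globally across all epochs, with a union bound whose cost is genuinely $O(\sum_i \frac{1}{\Delta_i}(\log\log\frac{H_{i,1}}{\epsilon})^2)$ rather than something growing in $T$.
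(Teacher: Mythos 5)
Your plan follows the same architecture as the paper's proof (epochs $d_m=2^{2^m}$, ETC on the free observations to determine $S_{m+1}$, the OCUCB-n per-epoch bound of \citep{lattimore2016regret}, telescoping of $\sum_{m\le m_i}\log d_m$ thanks to the doubly-exponential schedule, and a per-epoch failure accounting), and the identification $d_{m_i}\asymp \frac1\epsilon H_{i,1}$ is the right one. The genuine gap is in the concentration step for ETC. You propose ``a standard Hoeffding/sub-Gaussian union bound'' over the ETC rounds, but with the algorithm's stated parameters ($\alpha=1$, deviation $\sqrt{\tfrac{2\alpha}{s}\log(T/s)}$, ETC horizon $T=d_{m+1}^{3/2}\log d_{m+1}\approx d_m^{3}$) a union bound over $s\le\tau$ gives a failure probability of order $\sum_{s\le\tau}s/T\approx \tau^2/T\approx 1/(d_m\log d_m)$ per arm (taking $\tau\approx d_m$), while the cost of an ETC failure is a whole ruined epoch of length $d_{m+1}=d_m^2$ (rewards in $[0,1]$). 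The product is of order $d_m/\log d_m$ per epoch, which is not summable and in fact makes the bad-event contribution grow like $\sqrt{T}$, destroying the finiteness of the bound. This is exactly why the paper proves a new maximal inequality (Lemma~\ref{lemma:martingale_concentration}, via a peeling argument) giving failure probability $O\bigl(\tfrac{\tau}{T}\sqrt{\log(T/\tau)}\bigr)$ instead of $O(\tau^2/T)$; with it, the per-epoch failure cost is $O\bigl(K/\sqrt{\log d_{m+1}}\bigr)$, and summing over the $O(\log\log T)$ epochs yields the $51K$ term. Your plan either needs this sharper anytime bound or a re-tuned ETC horizon (e.g.\ a larger power of $d_{m+1}$), which would change the constants; as written the step fails.

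A secondary, non-fatal issue is that you attribute the final terms to the wrong sources: in the paper the $51K$ \emph{is} the failure-event cost (bad ETC output or too few free observations), while the $O\bigl(\sum_i\frac1{\Delta_i}(\log\log\frac{H_{i,1}}{\epsilon})^2\bigr)$ term comes from the $\log\log d_m$ factors inside the OCUCB-n per-epoch bound summed over the $m_i=O(\log\log\frac{H_{i,1}}{\epsilon})$ epochs before elimination (the $m_i+m_i^2$ terms), not from the failure events. Also, the $\max\{\log\frac1\epsilon,\log\sqrt{H_{i,\rho}}\}$ structure and the constant $4$ do not drop out of $\log d_{m_i+1}$ alone: the paper gets the factor $4$ from the telescoping $d_{m_i+1}=d_{m_i-1}^4$, obtains the $\log\sqrt{H_{i,\rho}}$ branch from the edge cases where arm $i$ is eliminated at or just after the first epoch with $d_m\ge H_{i,\rho}^{(m)}$, and needs a separate argument (the concavity bound on $\sum_i\frac1{\Delta_i}\log\frac{H_{i,1}}{H_{i,\rho}^{(m_i)}}$, giving a $\log(1+\log K)$ aggregate) to reconcile the uniform-exploration constant $H_{i,1}$ from ETC with the $\rho$-weighted $H_{i,\rho}^{(m_i)}$ from OCUCB-n — the difficulty you correctly anticipated but did not resolve.
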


Our analysis of Explore-Then-Commit relies on a new maximal concentration inequality which can be of independent interest.
\begin{lemma}
Let $Z_t$ be a $\sigma^2$-sub-Gaussian martingale difference sequence then, for every $\delta \in (0,0.2]$ and every integers $T \in \N^*$,
\begin{align*}
\PP\left\{ \exists t \leq T, \overline{Z}_t \geq \sqrt{\frac{2\sigma^2}{t}\log(\frac{T}{\delta t})} \right\} \leq 6\delta \sqrt{\log(\frac{1}{\delta})} \: .
\end{align*}
Asymptotically, we obtain
\begin{align*}
\limsup_{\delta \to 0} \frac{\PP\left\{ \exists t \leq T, \overline{Z}_t \geq \sqrt{\frac{2\sigma^2}{t}\log(\frac{T}{\delta t})} \right\}}{\delta \sqrt{\log(\frac{1}{\delta})}} \leq \sqrt{e/8}  .
\end{align*}
This value is $\sqrt{e/8} \approx 0.6$.
\end{lemma}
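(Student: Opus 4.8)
The plan is to obtain the uniform-in-time bound by a peeling (union over dyadic blocks) argument combined with a maximal inequality within each block, and then optimize the block structure. First I would fix $\delta \in (0, 0.2]$ and, writing $a_t = \sqrt{\frac{2\sigma^2}{t}\log\frac{T}{\delta t}}$ for the threshold, observe that the event $\{\exists t \le T: \overline{Z}_t \ge a_t\}$ is a union over $t \in \{1,\dots,T\}$. Rather than a crude union bound (which loses a $\log T$ factor), I would split $\{1,\dots,T\}$ into geometric blocks $B_k = \{t : 2^k \le t < 2^{k+1}\}$ for $k = 0,\dots,\lceil\log_2 T\rceil$. On a block $B_k$ the partial-sum process $S_t = t\,\overline{Z}_t$ is a martingale, and the deterministic threshold $t\,a_t = \sqrt{2\sigma^2 t \log\frac{T}{\delta t}}$ is, up to constants, of order $\sqrt{t}$ times a slowly varying factor, hence comparable across $B_k$ to its value at the left endpoint $t = 2^k$. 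So inside $B_k$ the crossing event is contained in $\{\max_{t < 2^{k+1}} S_t \ge c\sqrt{2^k \sigma^2 \log\frac{T}{\delta 2^{k+1}}}\}$, and I would control this by Doob's maximal inequality for sub-Gaussian martingales (equivalently, the maximal version of the Chernoff/Hoeffding bound), giving a per-block probability bounded by roughly $\exp(-c'\log\frac{T}{\delta 2^{k+1}}) = (\delta 2^{k+1}/T)^{c'}$.

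The second step is to sum the per-block bounds over $k$. Writing $n = \lceil \log_2 T\rceil$, the terms behave like $(\delta 2^{k+1}/T)^{c'}$, which is a geometric series in $k$ dominated by its largest term at $k = n$, i.e.\ of order $\delta^{c'}$ times a constant. The care needed here is in the exponent: to land exactly on $6\delta\sqrt{\log(1/\delta)}$ (and asymptotically $\sqrt{e/8}\,\delta\sqrt{\log(1/\delta)}$) one cannot afford to be wasteful. The extra $\sqrt{\log(1/\delta)}$ compared to a bare $\delta$ is exactly what a single, carefully tuned block contributes: near the critical scale $t \approx 1/\delta$ (times lower-order factors) the threshold $a_t$ is tuned so that $\mathbb{P}(\overline{Z}_t \ge a_t) \approx \delta t/T$ but the number of relevant scales contributing at the same order is $\Theta(\log(1/\delta))$, and the square root appears because the Gaussian tail $e^{-x^2/2}$ is being inverted — the dominant contribution comes from $t$ of order $\log(1/\delta)/\delta$ up to constants rather than a single $t$. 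I would therefore do the block sum keeping the constant in the exponent of the maximal inequality explicit, identify the dominant block, and bound the geometric tail on either side of it; optimizing the split point (choosing blocks adapted to where $\delta t/T \approx 1$) is what produces the clean $6\delta\sqrt{\log(1/\delta)}$.

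For the asymptotic refinement I would redo the same computation but now tracking the Gaussian constant exactly rather than through Hoeffding's inequality: replace Doob's maximal bound by the sharp reflection-type estimate $\mathbb{P}(\max_{s\le t} S_s \ge x) \le e^{-x^2/(2\sigma^2 t)}$ for sub-Gaussian increments, and replace the discrete block sum by its integral (Euler–Maclaurin / Laplace-method) approximation as $\delta \to 0$. The integral $\int (\delta u/T)^{\,?}\,du/u$-type expression, after the substitution $u = (T/\delta)v$ and a Laplace expansion around the saddle, yields the constant $\sqrt{e/8}$: the $e$ comes from the value of $v\mapsto v e^{-v}$-type factor at its relevant point and the $8$ from the $2\sigma^2$ normalization together with the factor $2$ inside the logarithm defining $a_t$. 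The main obstacle I anticipate is purely bookkeeping: matching the non-asymptotic constant $6$ requires choosing the peeling ratio and applying the maximal inequality without slack, so I would likely use blocks with ratio slightly larger than $2$ (or a continuous peeling à la stitching) and verify numerically that the resulting constant does not exceed $6$ for all $\delta \le 0.2$; the asymptotic constant $\sqrt{e/8}\approx 0.6$ then drops out of the Laplace approximation and serves as a consistency check that the peeling was essentially tight.
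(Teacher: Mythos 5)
Your plan shares the paper's high-level template (geometric peeling plus a per-block maximal inequality), but the per-block step as you describe it is too lossy, and the claimed bound would not come out of it. If on the block $[T_1,T_2)$ you replace the curved boundary $t\,a_t=\sqrt{2\sigma^2 t\log(T/(\delta t))}$ by a single constant threshold (its value near the left endpoint) and apply Doob's maximal inequality at the right endpoint, the exponent is multiplied by $T_1/T_2$: with dyadic blocks the per-block bound is $(\delta 2^{k+1}/T)^{1/2}$, and the geometric sum is of order $\sqrt{\delta}$, far weaker than $\delta\sqrt{\log(1/\delta)}$. Even if you shrink the ratio to $1+\eta$ and optimize over $\eta$, a constant-threshold comparison still loses a factor $1/(1+\eta)$ in the exponent, i.e.\ a slack $\delta^{-\eta/(1+\eta)}$ that is \emph{linear} in $\eta$; balancing it against the $O(1/\eta)$ cost of summing the finer grid gives at best $O(\delta\log(1/\delta))$ --- a full logarithm, not the claimed half power. (Your suggestion of a ratio ``slightly larger than $2$'' goes in the wrong direction: the ratio must tend to $1$ as $\delta\to 0$.)

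The missing device, which is the heart of the paper's proof, is a sharper within-block crossing bound: on $[T_1,T_2]$ one lower-bounds the concave boundary via the chord $\sqrt{t}\geq \eta t+\lambda$ and controls the resulting \emph{line}-crossing event with the exponential supermartingale (Doob/Ville), yielding $\exp(-4\eta\lambda\beta)$ with $4\eta\lambda = 4\sqrt{T_1T_2}/(\sqrt{T_1}+\sqrt{T_2})^2 \geq 1-\gamma^2/16$ when $T_2=(1+\gamma)T_1$. The exponent loss is thus \emph{quadratic} in the block width, so choosing the peeling ratio adaptively, $\gamma=\sqrt{8/\log(1/\delta)}$, the slack is only $\delta^{-\gamma^2/16}=\sqrt{e}$ while the series costs $O(1/\gamma)=O(\sqrt{\log(1/\delta)})$; this is exactly where both the $\sqrt{\log(1/\delta)}$ factor and the asymptotic constant $\sqrt{e/8}$ come from (the $\sqrt{e}$ from the quadratic slack, the $1/\sqrt{8}$ from $1/\gamma$), not from a Laplace approximation around a scale $t\asymp \log(1/\delta)/\delta$ --- the dominant blocks are those with $t$ within a constant factor of $T$. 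Your instinct that a stitching-type refinement is needed is sound, but without the chord linearization (or an equivalent line-crossing estimate) and a $\delta$-dependent peeling ratio, the plan cannot reach $6\delta\sqrt{\log(1/\delta)}$, let alone the constant $\sqrt{e/8}$.
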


\subsubsection{Heuristics  and Influence of $\epsilon$}\label{SE:Heuristics}

Besides the algorithm already discussed, we also experimented on the following heuristic: choose a bandit algorithm of the UCB family, which pulls the arm with a maximal index; use it to pull the arm with maximal index and if an observation is available, observe the second maximal arm. We provide no regret analysis for this heuristic but study its performance in the experimental section.

\medskip

Concerning the dependency in $\varepsilon$, we can make the following interesting remark. To simplify notations, we will assume that all arms have the same gap $\Delta$ and we remove constants for this analysis. With these simplifications, we proved that  regret at stage $T$ is of the order of  $R_T \simeq \frac{K}{\Delta}\log(\frac{1}{\epsilon})$. Obviously, if $\epsilon$ is almost equal to $0$, this upper-bound is void and the algorithm should not depend on the free observations. One might ask what is the threshold at which free informations become relevant at stage $T$.

Notice that standard information theory arguments yield that if $\epsilon T \leq \frac{K}{2\Delta^2}$, and even if the free observations were gathered at the begining of the problem, only $\frac{K}{2}$ arms could be removed (with high probability) from the set of possible optimal arms. Hence these free information are not useful for at least $K/2$ arms and regret will have to scale as $\frac{K}{2\Delta}\log(\frac{2T\Delta^2}{K})$, the optimal rate for the bandit problem with $K/2$ arms with equal gaps $\Delta$.

On the other hand, if  $\epsilon T \geq \frac{K}{2\Delta^2}$, then (up to multiplicative constant), $\frac{K}{\Delta}\log(\frac{1}{\epsilon})$ dominates $\frac{K}{\Delta}\log(\frac{T\Delta^2}{K})$. As a consequence, the relevant threshold for the probability of free observations after $T$ stages is 
$$
\varepsilon^* =  \frac{1}{T}\frac{K}{\Delta^2}\, .
$$

\section{EXPERIMENTS}\label{SE:Expe}

All experiments are performed with Gaussian rewards with unit variance.

\paragraph{Influence of $\epsilon$.} 
The goal of this first experiment is to confirm the scaling of the regret with $\epsilon$. That is to say, the regret scales  with $ \sum_{i : \Delta_{i} > 0} \frac{1}{\Delta_{i}}\log(\frac{1}{\epsilon \Delta_{i}^{2}})$. The experiment is performed with a passive observer with either a uniform distribution or the optimal one, as defined in Section \ref{SE:LowerPassive}. To do so, the experiment is performed in the passive setting associated with a uniform distribution and the optimal one, as defined in Section \ref{SE:UpperPassive}. Also, when free observations are scarce, $\epsilon \sim \frac{1}{T}$, the average number of those is approximately $1$ during the experience. Therefore, the regret is similar to the one suffered by an UCB algorithm in a classic multi-armed bandit setting, a behaviour captured by the function $f$. On Figure \ref{Influence_eps_1} and \ref{Influence_eps_2}, experiments are run on four Gaussian arms with expectations $2$, $1.8$, $0.5$, $0.2$, the error bars are quantile at $10\%$ and $90\%$.
 
\begin{figure}[!ht]
\centering
\includegraphics[width = 0.45\textwidth]{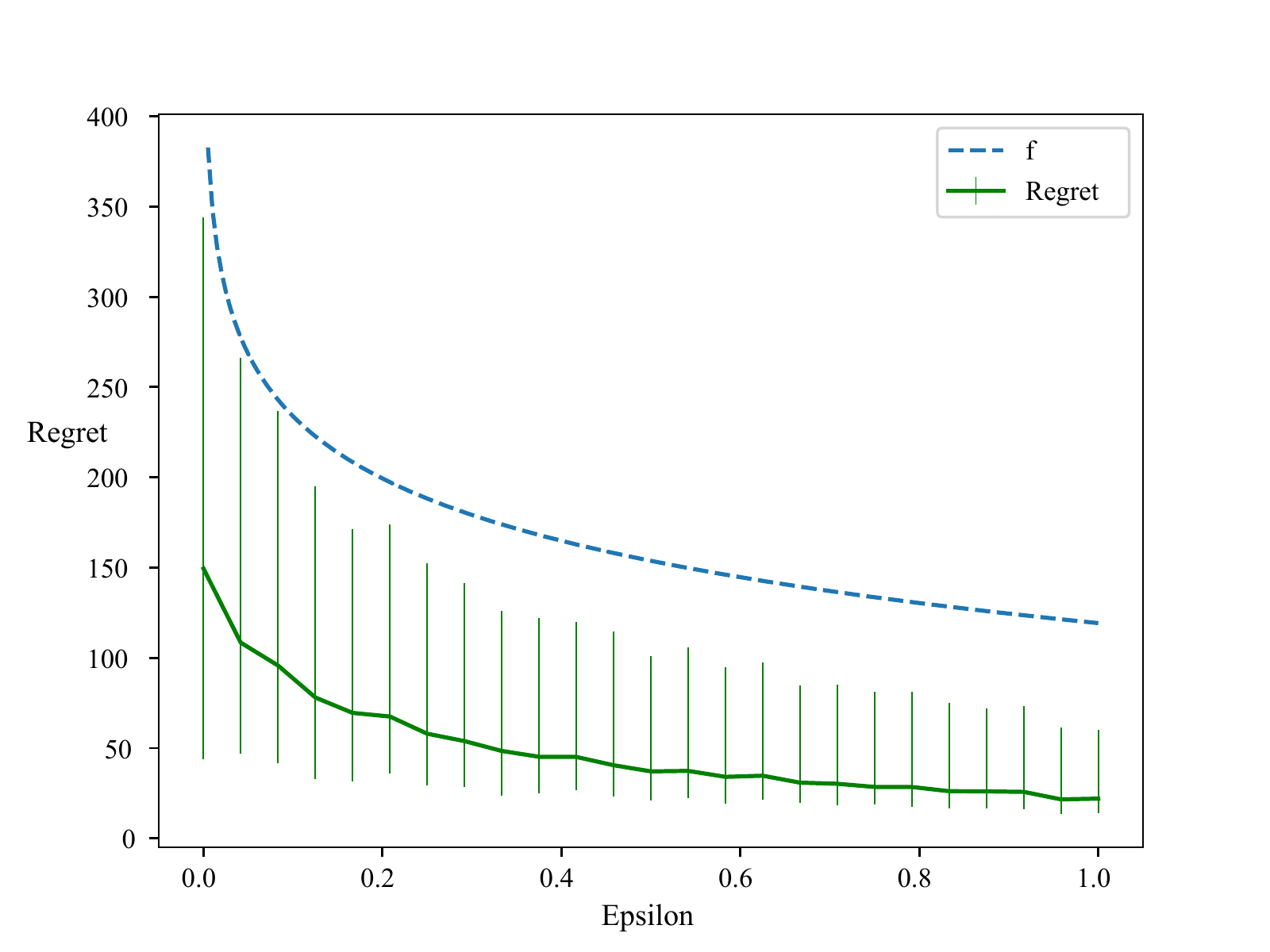}
\caption{\label{Influence_eps_1} Dependence on $\epsilon$ of the regret of UCB as passive observer, with a uniform distribution of the free observations, averaged over $300$ runs.}
\end{figure}

\begin{figure}[!ht]
\centering
\includegraphics[width = 0.45\textwidth]{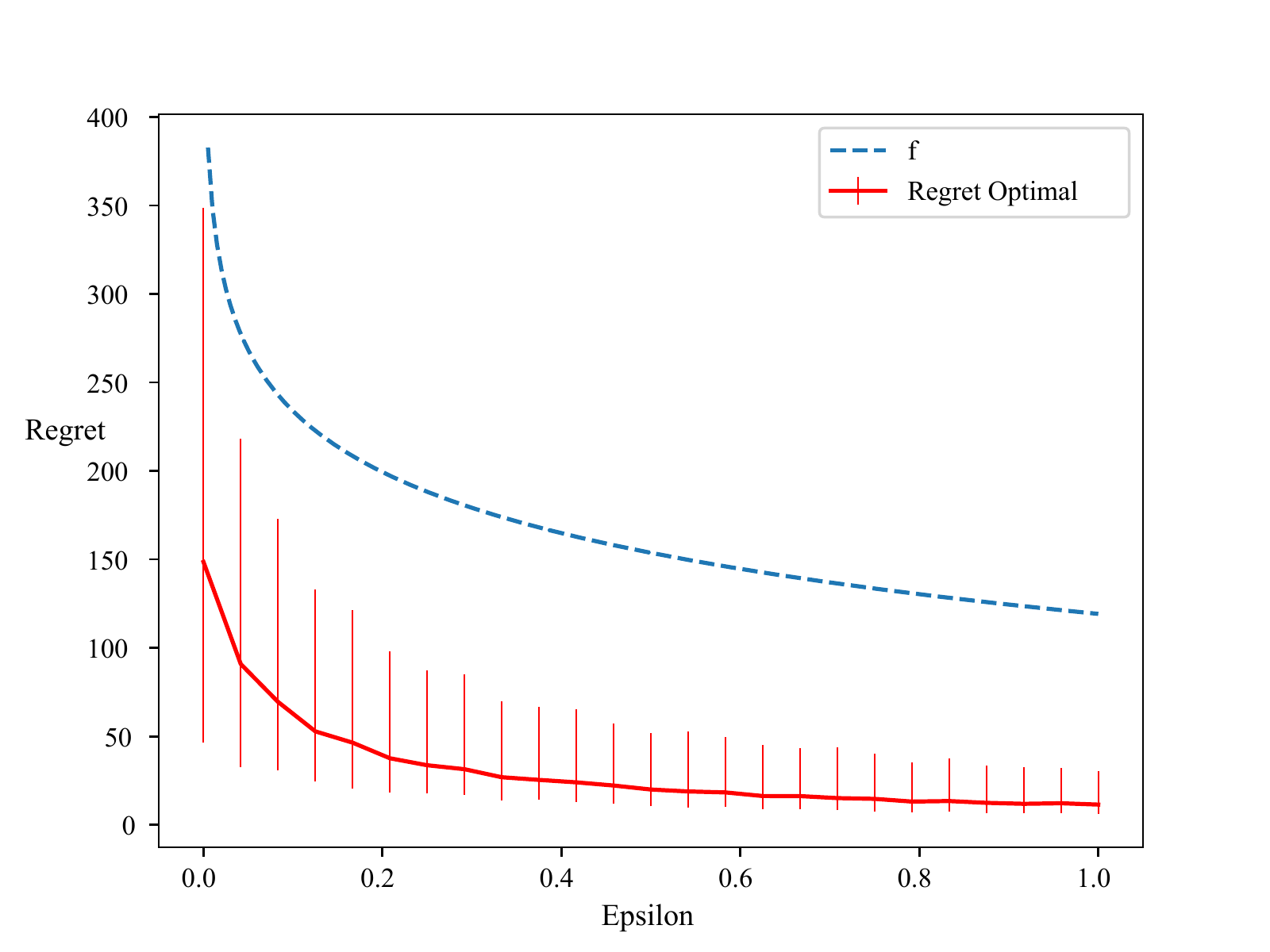}
\caption{\label{Influence_eps_2} Dependence on $\epsilon$ of the regret of UCB as passive observer, with the optimal distribution of the free observations, averaged over $300$ runs.}
\end{figure}

\paragraph{Passive Observer: optimal sampling distribution.}

This second experiment illustrates the induced regret in the passive setting with a probability distribution $p^{(i)} = \frac{1}{\Delta_{i}}$. This distribution is considered to be optimal because, as mentioned in Section  \ref{SE:LowerPassive}, it achieves the lowest lower bound. It also suggests a paradigm for algorithms in the active setting i.e sampling freely as much as possible the arm with the lowest $\Delta_{i}$. A way to do so is to run an UCB type algorithm to choose which arm to pull, and use another UCB type algorithm on other arms to determine which will be observed if a free observation is available. The results of this type of policy is presented in the next paragraph.

The experiment is run on the same set of arms as previously with a uniform distribution, the optimal distribution and a suboptimal one such that $p^{(i)}= \frac{1}{\Delta_{i}^{2}}$, referred as SubOptimal in Figure \ref{Opti_v_Unif}. Color filled regions are $25\%$ and $75\%$ quantiles.

\begin{figure}[!ht]
\centering
\includegraphics[width = 0.45\textwidth]{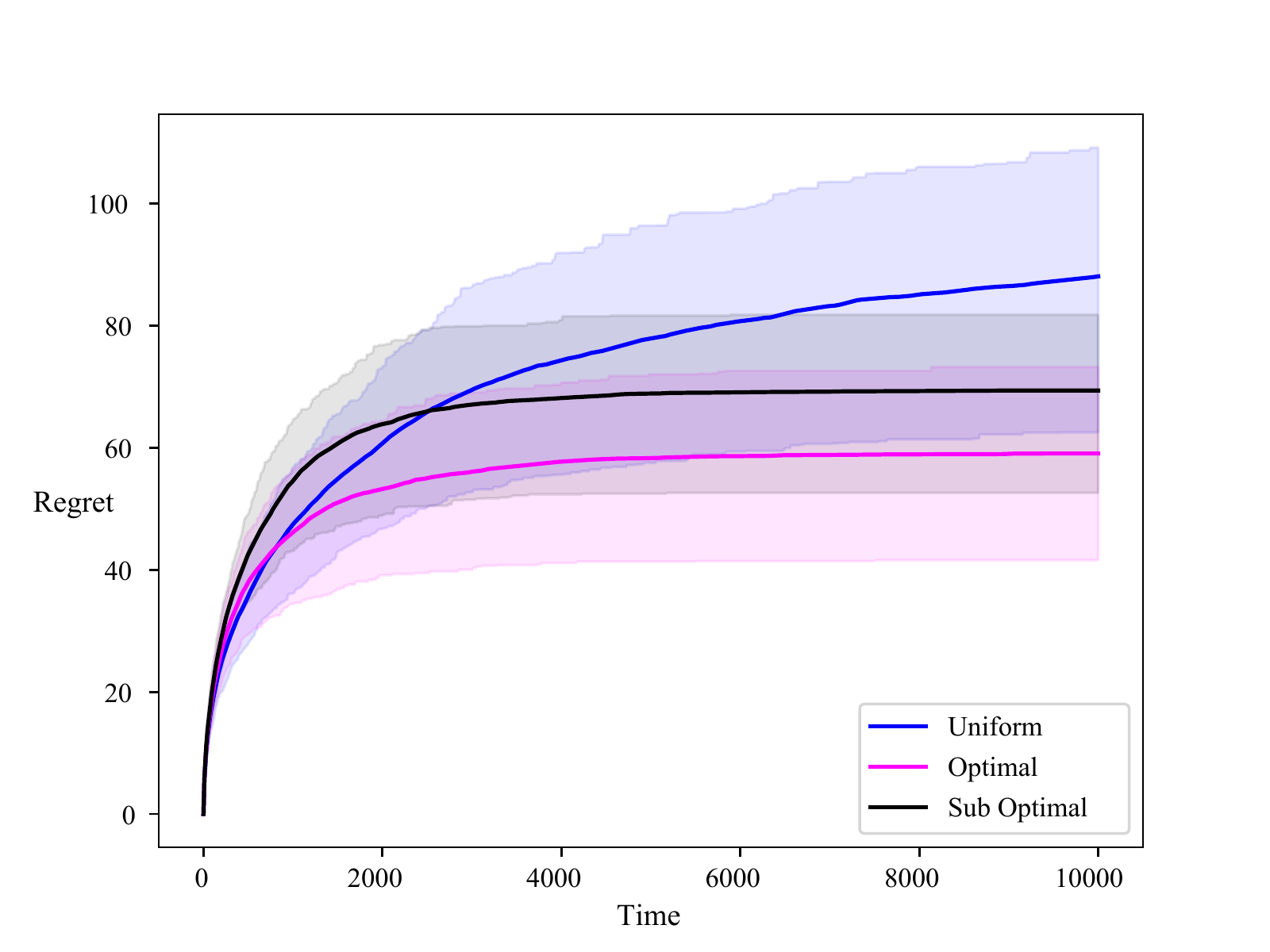}
\caption{\label{Opti_v_Unif}  Regret averaged over $300$ runs}
\end{figure}

\paragraph{Active Observer: comparison of algorithms.}

This subsection is dedicated to the comparaison of algorithms introduced earlier : UCB1-Double, ETC-OCUCB and ETC-OCUCB-2. \\
UCB1-Double uses a UCB algorithm and select the free observation as the second index maximising arm. The optimal allocation in the passive setting samples better arms more often, therefore we use the free observation to sample the arm next to optimal (according to its UCB index).  The second algorithm, referred to as ETC-OCUCB, is the algorithm studied in the above section. In particular, its ETC subroutine checks for potentially removable arms every $C|S|$ pulls, with $C$ a fixed parameter and $S$ the set of  currently active arm. Finally, the algorithm referred to as ETC-OCUCB-2 is a variant of ETC-OCUCB where elimination checks are made every $2^{k}$ stages, thus behaving less aggressively than ETC-OCUCB. In addition, we introduced in this experiment a parameter $p$ so that the epoch length is $d_{m} = p^{p^{m}}$ in ETC-OCUCB. This enables us to adapt the growth of epochs to the horizon, here $T = 10^{4}$. Other parameters are : $\alpha = 1$, $\rho = \frac{1}{2}$, $\eta = 2$ and $C = 10$. \\
The experiments is run on five Gaussian arms with expectations $2$, $1.8$, $1.5$, $1$ and $0.5$. Color filled regions are $25\%$ and $75\%$ quantiles.

\begin{figure}[!ht]
\centering
\includegraphics[width = 0.45\textwidth]{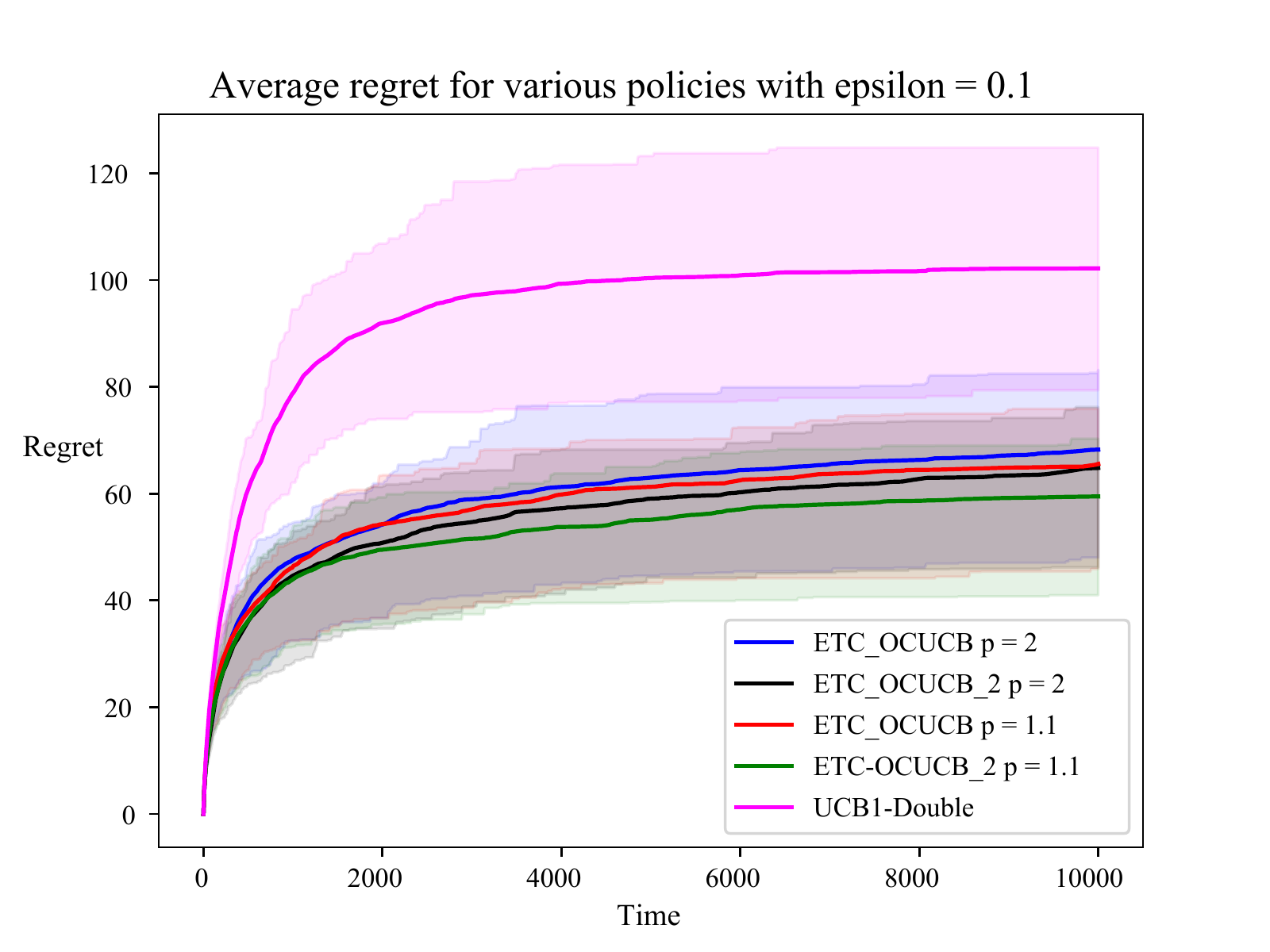}
\caption{\label{comp-algo} Regret for $\epsilon = 0.1$ averaged over $100$ runs}
\end{figure}

Figure \ref{comp-algo} illustrates that:

\begin{itemize}
\item UCB1-Double reaches rapidly its final regret value after a logarithmic exploration phase where informations are gathered so that the policy doesn't pull an other suboptimal arm after this phase.
\item ETC-OCUCB and ETC-OCUCB-2 algorithms have similar performances and the parameter $p$ offers a control how often the set of active arms is updated which offers a slight performance increase for lower $p$. 
\end{itemize}

ETC-OCUCB and ETC-OCUCB-2 maintain two distinct tracks of rewards, one for rewards obtained after pulling an arm and the other for rewards after sampling freely an arm. Therefore, it may be possible to increase their performance by using both sources of information in both subroutines. In the Figure below, these variants are referred as ETC-OCUCB-all-info and ETC-OCUCB-all-info-2.

\begin{figure}[!ht]
\centering
\includegraphics[width = 0.45\textwidth]{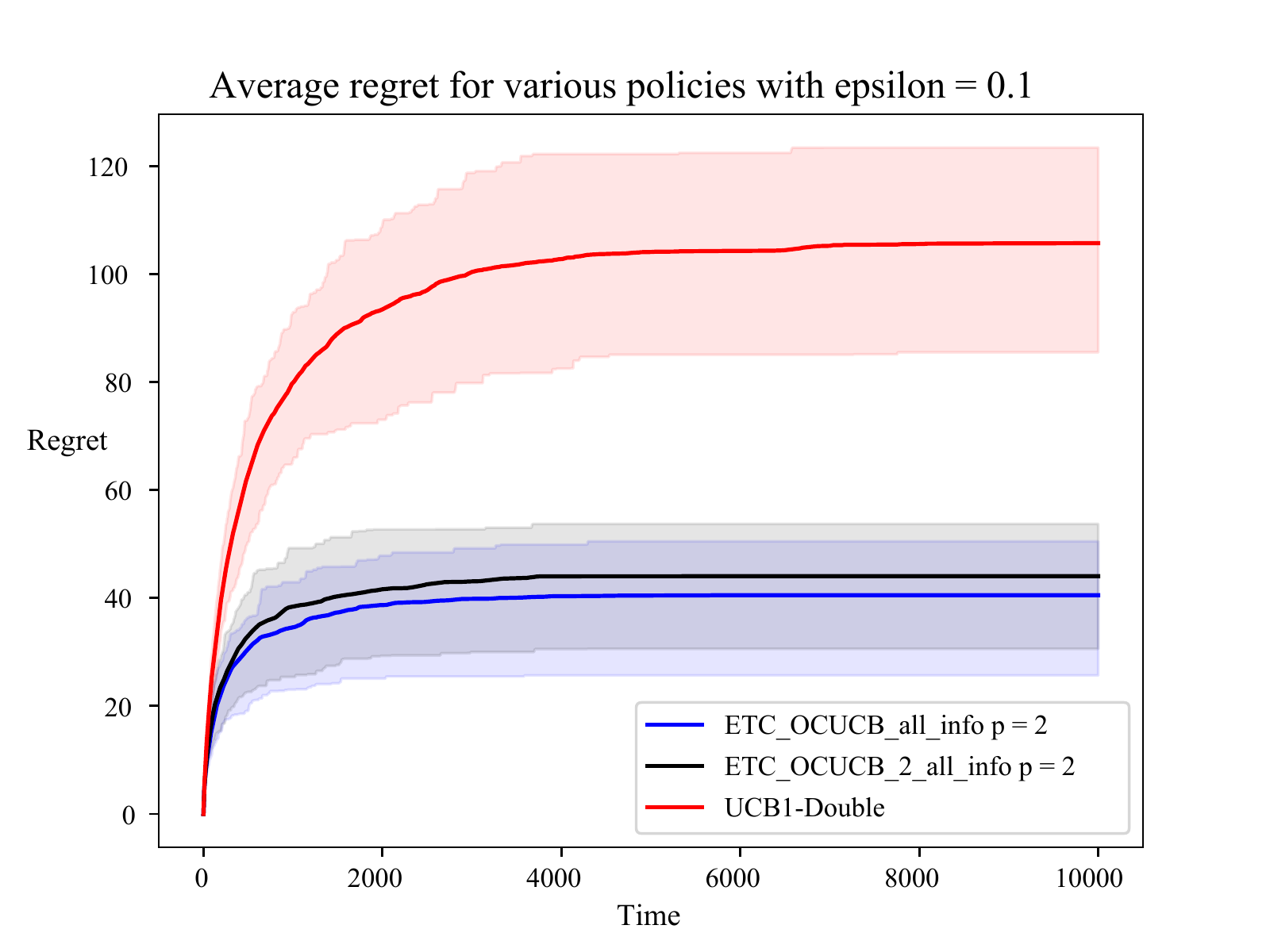}
\caption{\label{} Regret for $\epsilon = 0.1$ averaged over $300$ runs for $p=2$}
\end{figure}

This simple modification provides a clear improvement whether for the final regret or the speed at which this value is reached.

\section{CONCLUSION}

We analysed the multi-armed bandit problem with just a few extra free information. Interestingly, as the regret is uniformly bounded in time, standard lower bounds are void. However, a careful analysis allowed us to exhibit non-trivial guarantee that no reasonable algorithm can out-perform and we finally provided an optimal algorithm, whose regret matches the lower bound up to doubly logarithmic terms.

 We would like to finally emphasize that our algorithm can be used even if the $\varepsilon T$ observations are not free. Since we used ETC on these observations, we get that our algorithm has a regret smaller (discarding multiplicative constants and $\log\log$ terms) than
 $$
 \sum_{i=2}^K \frac{\log(\varepsilon T \Delta_i^2)}{\Delta_i} + \sum_{i=2}^K \frac{\log(1/\varepsilon)}{\Delta_i}
 $$
 where the first term is the guarantee of ETC on $\varepsilon T$ samples, and the second one is the guarantee of our algorithm with ``free'' observations. As a consequence, no matter the value of $\varepsilon$ (as long as the $\log\log$ terms do not become dominant), its  dependency vanishes, and we recover the expected performance of ETC.

\subsubsection*{Acknowledgements}
V. Perchet has benefited from the support of the ANR (grant n.ANR-13- JS01-0004-01), of the FMJH Program Gaspard Monge in optimization and operations research (supported in part by EDF), from the Labex LMH and from the CNRS, PEPS project Lacreme.

\newpage


\nocite{yu2009piecewise}

\bibliographystyle{abbrvnat}
\setcitestyle{authoryear,open={[},close={]}}
\bibliography{bibli}

\begin{thebibliography}{18}
\providecommand{\natexlab}[1]{#1}
\providecommand{\url}[1]{\texttt{#1}}
\expandafter\ifx\csname urlstyle\endcsname\relax
  \providecommand{\doi}[1]{doi: #1}\else
  \providecommand{\doi}{doi: \begingroup \urlstyle{rm}\Url}\fi

\bibitem[Alon et~al.(2015)Alon, Cesa-Bianchi, Dekel, and Koren]{alon2015online}
N.~Alon, N.~Cesa-Bianchi, O.~Dekel, and T.~Koren.
\newblock Online learning with feedback graphs: Beyond bandits.
\newblock In \emph{Conference on Learning Theory}, pages 23--35, 2015.

\bibitem[Audibert and Bubeck(2009)]{audibert2009minimax}
J.-Y. Audibert and S.~Bubeck.
\newblock Minimax policies for adversarial and stochastic bandits.
\newblock In \emph{COLT}, pages 217--226, 2009.

\bibitem[Audibert and Bubeck(2010)]{audibert2010regret}
J.-Y. Audibert and S.~Bubeck.
\newblock Regret bounds and minimax policies under partial monitoring.
\newblock \emph{Journal of Machine Learning Research}, 11\penalty0
  (Oct):\penalty0 2785--2836, 2010.

\bibitem[Auer and Ortner(2010)]{auer2010ucb}
P.~Auer and R.~Ortner.
\newblock Ucb revisited: Improved regret bounds for the stochastic multi-armed
  bandit problem.
\newblock \emph{Periodica Mathematica Hungarica}, 61\penalty0 (1-2):\penalty0
  55--65, 2010.

\bibitem[Auer et~al.(2002)Auer, Cesa-Bianchi, and Fischer]{auer2002finite}
P.~Auer, N.~Cesa-Bianchi, and P.~Fischer.
\newblock Finite-time analysis of the multiarmed bandit problem.
\newblock \emph{Machine learning}, 47\penalty0 (2-3):\penalty0 235--256, 2002.

\bibitem[Caron et~al.(2012)Caron, Kveton, Lelarge, and
  Bhagat]{caron2012leveraging}
S.~Caron, B.~Kveton, M.~Lelarge, and S.~Bhagat.
\newblock Leveraging side observations in stochastic bandits.
\newblock \emph{arXiv preprint arXiv:1210.4839}, 2012.

\bibitem[Cesa-Bianchi et~al.(2006)Cesa-Bianchi, Lugosi, and
  Stoltz]{cesa2006regret}
N.~Cesa-Bianchi, G.~Lugosi, and G.~Stoltz.
\newblock Regret minimization under partial monitoring.
\newblock \emph{Mathematics of Operations Research}, 31\penalty0 (3):\penalty0
  562--580, 2006.

\bibitem[Chatzigeorgiou(2013)]{chatzigeorgiou2013bounds}
I.~Chatzigeorgiou.
\newblock Bounds on the lambert function and their application to the outage
  analysis of user cooperation.
\newblock \emph{IEEE Communications Letters}, 17\penalty0 (8):\penalty0
  1505--1508, 2013.

\bibitem[Chen et~al.(2016)Chen, Wang, Yuan, and Wang]{chen2016combinatorial}
W.~Chen, Y.~Wang, Y.~Yuan, and Q.~Wang.
\newblock Combinatorial multi-armed bandit and its extension to
  probabilistically triggered arms.
\newblock \emph{The Journal of Machine Learning Research}, 17\penalty0
  (1):\penalty0 1746--1778, 2016.

\bibitem[Degenne and Perchet(2016)]{degenne2016anytime}
R.~Degenne and V.~Perchet.
\newblock Anytime optimal algorithms in stochastic multi-armed bandits.
\newblock In \emph{International Conference on Machine Learning}, pages
  1587--1595, 2016.

\bibitem[Even-Dar et~al.(2006)Even-Dar, Mannor, and Mansour]{even2006action}
E.~Even-Dar, S.~Mannor, and Y.~Mansour.
\newblock Action elimination and stopping conditions for the multi-armed bandit
  and reinforcement learning problems.
\newblock \emph{Journal of machine learning research}, 7\penalty0
  (Jun):\penalty0 1079--1105, 2006.

\bibitem[Garivier et~al.(2016)Garivier, M{\'e}nard, and
  Stoltz]{garivier2016explore}
A.~Garivier, P.~M{\'e}nard, and G.~Stoltz.
\newblock Explore first, exploit next: The true shape of regret in bandit
  problems.
\newblock \emph{arXiv preprint arXiv:1602.07182}, 2016.

\bibitem[Lai and Robbins(1985)]{lai1985asymptotically}
T.~L. Lai and H.~Robbins.
\newblock Asymptotically efficient adaptive allocation rules.
\newblock \emph{Advances in applied mathematics}, 6\penalty0 (1):\penalty0
  4--22, 1985.

\bibitem[Lattimore(2016)]{lattimore2016regret}
T.~Lattimore.
\newblock Regret analysis of the anytime optimally confident ucb algorithm.
\newblock \emph{arXiv preprint arXiv:1603.08661}, 2016.

\bibitem[Mannor and Shamir(2011)]{mannor2011bandits}
S.~Mannor and O.~Shamir.
\newblock From bandits to experts: On the value of side-observations.
\newblock In \emph{Advances in Neural Information Processing Systems}, pages
  684--692, 2011.

\bibitem[Okamoto(1959)]{okamoto1959some}
M.~Okamoto.
\newblock Some inequalities relating to the partial sum of binomial
  probabilities.
\newblock \emph{Annals of the institute of Statistical Mathematics},
  10\penalty0 (1):\penalty0 29--35, 1959.

\bibitem[Perchet and Rigollet(2013)]{perchet2013multi}
V.~Perchet and P.~Rigollet.
\newblock The multi-armed bandit problem with covariates.
\newblock \emph{The Annals of Statistics}, pages 693--721, 2013.

\bibitem[Yu and Mannor(2009)]{yu2009piecewise}
J.~Y. Yu and S.~Mannor.
\newblock Piecewise-stationary bandit problems with side observations.
\newblock In \emph{Proceedings of the 26th Annual International Conference on
  Machine Learning}, pages 1177--1184. ACM, 2009.

\end{thebibliography}

\clearpage
\appendix

\section{LOWER BOUND PROOF}

Consider a bandit problem with Gaussian arms with distributions $\nu^{(i)}=\mathcal{N}(\mu^{(i)},1)$ with $\mu^{(1)}>\mu^{(2)}\geq\ldots\geq\mu^{(K)}$ , denoted by problem 1. We define $K-1$ other bandit problems in which an arm is changed to bring it above the optimal arm. Formally, problem $i$ with distributions $(\nu_i^{(1)},\ldots,\nu_i^{(K)})$ is such that $\nu_i^{(j)}=\nu^{(j)}$ for all $j\in[K]\setminus\{i\}$, and $\nu_i^{(i)} = \mathcal{N}(\mu^{(1)} + \Delta, 1)$ . The distributions of $(Z_t)_{t\geq 1}$ are the same in all problems.

Let $I_t = (i_1, f_1,Z_1, X_1^{(i_1)},\ldots,i_T,f_T,Z_T, X_i^{(i_T)})$. The Kullback-Leibler divergence between the observations up to time $T$ coming from problem 1 and problem $i\neq 1$ is
\begin{align*}
\KL(\mathbb{P}_1^{I_T}	, \mathbb{P}_i^{I_T}) = \EE_1 O_i(T) \frac{(\Delta+\Delta_i)^2}{2}
\end{align*}
By showing lower bounds on this divergence, we prove constraints on $\EE_1 O_i(T)$, leading to a lower bound on the regret. By using the principle of contraction of entropy \citep{garivier2016explore}, we can relate the divergence between the observations in the two problems to the Kullback-Leibler divergence between Bernoulli variables. Let $\kl(a,b)$ denote the Kullback-Leibler divergence between Bernoulli distributions with parameters $a$ and $b$. $\kl(a,b) = a\log\frac{a}{b} + (1-a)\log\frac{1-a}{1-b}$ .
\begin{align*}
\KL(\mathbb{P}_1^{I_T}	, \mathbb{P}_i^{I_t})
&\geq \kl(\mathbb{E}_1\frac{N_1(T)}{T}, \mathbb{E}_i\frac{N_1(T)}{T})\\
&\geq \mathbb{E}_1\frac{N_1(T)}{T}\log\frac{1}{\mathbb{E}_i\frac{N_1(T)}{T}} - \log(2)
\end{align*}
where we used that $\kl(p,q)\geq p\log(1/q)-\log 2$ .

The Expectations of the number of pulls will be bounded through the hypothesis of sub-logarithmic regret: as the regret must be low, the number of pulls of sub-optimal arms must also be low. The algorithm is sub-logarithmic with constants $C$, $C_0$ at all stages $T\in \mathbb{N}$ . i.e. on all multi-armed bandit problems, for all stages $T$,  $\mathbb{E}R_T \leq C_0\sum_{i=2}^K\Delta_i + C\sum_{i:\Delta_i>0}\frac{\log T}{\Delta_i}$ . let $C_\Delta = \sum_{i:\Delta_i>0}\frac{1}{\Delta_i}$ and $C_K = C_0\sum_{i=2}^K\Delta_i$.
\begin{align*}
\EE_1\frac{N_1(T)}{T}
&= 1 - \sum_{i\neq 1} \EE_1\frac{N_i(T)}{T}\\
&\geq 1 -\frac{1}{T}(C_K + CC_\Delta\log T) \: ,\\
\EE_i\frac{N_1(T)}{T}
&\leq \frac{1}{T\Delta}(C_K + C\sum_{j\neq i}\frac{1}{\Delta + \Delta_j}\log T) \: .
\end{align*}
We obtain finally the constraint
\begin{align*}
\EE_1 O_i(T) &\geq \frac{2}{(\Delta + \Delta_i)^2}h_i(T) \: ,
\end{align*}
where
\begin{align*}
h_i(T) &= \log(\frac{T\Delta^2}{2C\log T\sum_{j\neq i}\frac{\Delta}{\Delta+\Delta_j}}) +\eta_i(T) \: ,\\
\eta_i(T) &= -\frac{1}{T}(C_K + CC_\Delta\log T)\\
&\quad \times\log(\frac{T\Delta^2}{C_K\Delta + C\log T\sum_{j\neq i}\frac{\Delta}{\Delta+\Delta_j}})\\
&- \log(1 + \frac{C_K}{C\log T \sum_{j\neq i}\frac{1}{\Delta + \Delta_j}}) \: .
\end{align*}

\subsection{Properties of $h_i$ and $\eta_i$.}

The function $h_i$ is increasing over $[e,+\infty)$. Its derivative verify
\begin{align*}
h_i'(t) \geq \frac{1}{t}\left(1 - \frac{1}{\log t + \frac{C_K}{C\sum_{j\neq i}\frac{1}{\Delta + \Delta_j}}}\right) \: .
\end{align*}
If $t\geq e^2$, then $h_i'(t) \geq 1/(2t)$.

Let $f_i(t) = \frac{2h_i(t)}{(\Delta+\Delta_i)^2} - \epsilon p^{(i)} t$. then for $t\geq e^2$, $f_i'(t) \geq \frac{1}{(\Delta+\Delta_i)^2 t} - \epsilon p^{(i)} $, such that $f_i(t)$ is increasing over $[e^2, 1/(\epsilon p^{(i)} (\Delta+\Delta_i)^2)]$.

\subsection{Passive Static Setting}

In this setting, $\EE_1 F_i(T) = \epsilon T p^{(i)}$ for all $i\in[K]$ (with $\sum_{i=1}^Kp^{(i)}=1$), such that $\EE_1 O_i(T) = \EE_1 N_i(T) + \epsilon T p^{(i)}$ . Using the constraint on $O_i(T)$, we deduce that the regret of a sub-logarithmic algorithm must be bigger than the solution of the optimization problem
\begin{align*}
\mbox{minimize in $n$: }\quad & \sum_{i=2}^K n_i \Delta_i \\
\mbox{subject to }\quad & \forall i\geq 2 ,\: n_i \geq \frac{2h_i(T)}{(\Delta+\Delta_i)^2} - \epsilon T p^{(i)} \: ,\\
& n\succeq 0 \: .
\end{align*}
The solution is given by $n_i =\max(0,\: \frac{2h_i(T)}{(\Delta+\Delta_i)^2} - \epsilon Tp^{(i)})$ . We see that for $T$ big enough, the lower bound is 0. This does not reflect the problem at hand since some regret is unavoidable at the beginning, when few free observations are available.

Since $\EE_1N_i(T)$ is non-decreasing, we can aggregate the constraints on this quantity up to stage $T$ to get the stronger constraint
\begin{align*}
\EE_1 N_i(T) &\geq \sup_{3\leq t \leq T} \left\{\frac{2h_i(t)}{(\Delta+\Delta_i)^2} - \epsilon tp^{(i)}\right\}
\end{align*}
3 is taken as the starting point for $t$ to ensure that $h_i$ is increasing.

\paragraph{Small horizon: $T\leq 1/(\epsilon p^{(i)} (\Delta + \Delta_i)^2)$.}
\begin{align*}
\EE N_i(T)
&\geq \frac{2h_i(T)}{(\Delta+\Delta_i)^2} - \epsilon Tp^{(i)}\\
&\geq \frac{1}{(\Delta+\Delta_i)^2}\left(2h_i(T) - 1\right)\: .
\end{align*}
\paragraph{Big horizon: $T\geq 1/(\epsilon p^{(i)} (\Delta + \Delta_i)^2)$.}
\begin{align*}
\EE_1 N_i(T)
&\geq \frac{1}{(\Delta + \Delta_i)^2}\left[2h_i(\frac{1}{\epsilon p^{(i)}(\Delta + \Delta_i)^2}) - 1\right] \: ,
\end{align*}
where this value is obtained by taking $t = \frac{1}{\epsilon p^{(i)}(\Delta+\Delta_i)^2}$.

In the construction of this lower bound, we can choose $\Delta$ separately for each arm. Using $\Delta = \Delta_i$ in each $h_i$, we get
\begin{align*}
\EE_1R_T
&\geq \sum_{i=2}^K\frac{1}{2\Delta_i} \Bigg[ \log\left(\frac{1}{\epsilon}\frac{1}{8C p^{(i)}\sum_{j\neq i}\frac{\Delta_i}{\Delta_i+\Delta_j}}\right) \\
& \quad - \log\log(\frac{1}{4\epsilon p^{(i)}\Delta_i^2}) + \eta_i(\frac{1}{4\epsilon p^{(i)}\Delta_i^2}) - \frac{1}{2} \Bigg] \: .
\end{align*}

\subsection{Active Setting}

Using the constraints, we obtain that the regret of any sub-logarithmic algorithm must verify that $\EE_1 R_T$ is bigger than the solution of the problem
\begin{align*}
\mbox{minimize in $n,f$: }\quad & \sum_{i=2}^K n_i \Delta_i \\
\mbox{subject to }\quad & \forall i\geq 2, \: n_i + f_i \geq \frac{2h_i(T)}{(\Delta+\Delta_i)^2} \: ,\\
& \sum_{i=2}^K f_i \leq \epsilon T \: , \: n\succeq 0 \: , \: f \succeq 0 \: .
\end{align*}

The solution of this optimization problem has the following structure: there exists a $\nu \in \{ 0, \Delta_2, \ldots, \Delta_K\}$ such that for all $i$ such that $\Delta_i < \nu$, $n_i = \frac{2h_i(T)}{(\Delta + \Delta_i)^2}$ and $f_i=0$ ; for all $j$ such that $\Delta_j>\nu$, $n_j=0$ and $f_j=\frac{2h_j(T)}{(\Delta + \Delta_j)^2}$ ; for the possible index $k$ with $\Delta_k = \nu$, $f_k = \epsilon T - \sum_{j>k}\frac{2h_j(T)}{(\Delta + \Delta_j)^2}$ and $n_k = \frac{2h_k(T)}{(\Delta+\Delta_k)^2} - f_k$ . That is, an optimal algorithm uses the free information on bad arms and uses the costly pulls on good arms. The optimal attainable expected regret is then
\begin{align*}
\mathbb{E}R_T \geq \sum_{i\leq k} \frac{2 \Delta_i h_i(T)}{(\Delta+\Delta_i)^2} - \Delta_k(\epsilon T - \sum_{j>k} \frac{2 h_j(T)}{(\Delta+\Delta_j)^2}) \: ,
\end{align*}
where $k = \min \{ i\in\{2,\ldots,K\} \: : \: \sum_{j>i} \frac{2 h_j(T)}{(\Delta+\Delta_j)^2} \leq \epsilon T \}$ .

\subsubsection{Increasing number of pulls}

The lower bounds for increasing stages $T$ show that free information should progressively replace pulls, starting from worse arms. For $T$ big enough the lower bound on $\EE R_T$ is 0. An optimal algorithm should somehow have used only free information to explore. This is impossible, since the algorithm doesn't know at first which arm is the best. The lower bound exhibits this behaviour because it is written for fixed $T$ and ignores that both $\EE N_i(t)$ and $\EE R_t$ must be non-decreasing. We can get a tighter lower bound by using this monotonicity.
\begin{align*}
\EE R_T
&\geq \max_{t\leq T}\sum_{i\leq k_t} \frac{h_i(t)}{2\Delta_i} - \Delta_{k_t}(\epsilon t - \sum_{j>k_t} \frac{h_j(t)}{2\Delta_j^2})\\
&\geq \max_{t\leq T}\sum_{i< k_t} \frac{h_i(t)}{2\Delta_i} \: .
\end{align*}
For $k\in\{2, \ldots, K\}$, let $t_k = \max\{t\geq 1 \: : \: \sum_{j>k} \frac{h_j(t)}{2\Delta_j^2} > \epsilon t \}$, such that $t\leq t_k \Leftrightarrow k_t>k$ . We can rewrite the lower bound on the regret to introduce these stages,
\begin{align*}
\EE R_T
\geq \max_{k:t_k \leq T} \sum_{i=2}^k \frac{h_i(t_k)}{\Delta_i}
\end{align*}
$t_k$ and $h(t_k)$ verify
\begin{align*}
t_k &\geq \frac{1}{\epsilon}\sum_{j=k+1}^K\frac{1}{2\Delta_j^2} \: ,\\
h_i(t_k) &\geq \log(\frac{1}{\epsilon}\frac{\sum_{j=k+1}^K\frac{\Delta_i^2}{\Delta_j^2}}{4C \sum_{j\neq i} \frac{\Delta_i}{\Delta_i+\Delta_j}})\\
&- \log\log(\frac{1}{\epsilon}\sum_{j=k+1}^K\frac{1}{2\Delta_j^2})+ \eta(\frac{1}{\epsilon}\sum_{j=k+1}^K\frac{1}{2\Delta_j^2}) \: .
\end{align*}
Let $B_{i,k}(\epsilon)$ be this lower bound for $h_i(t_k)$. Then
\begin{align*}
\EE R_T \geq \max_{k:t_k \leq T} \sum_{i=2}^k \frac{B_{i,k}(\epsilon)}{\Delta_i} \: .
\end{align*}

\subsubsection{Alternative lower bound}

Alternatively, the regret in the active setting is lower bounded by the regret of a passive setting in which, when $Z_t=1$, all arms get a free observation. 
The constraint on $\EE N_i(T)$ becomes
\begin{align*}
\EE N_i(T) \geq \EE N_i(t) \geq \frac{h_i(t)}{\Delta^2} - \epsilon t \: .
\end{align*}
From that get a lower bound as if $p_t^{(i)} = 1$ for all $t$ and $i$.

\begin{align*}
\EE_1R_T
&\geq \sum_{i=2}^K\frac{1}{2\Delta_i} \Bigg[ \log\left(\frac{1}{\epsilon}\frac{1}{4C\sum_{j\neq i}\frac{\Delta_i}{\Delta_i+\Delta_j}}\right) \\
& \quad - \log\log(\frac{1}{2\epsilon \Delta_i^2}) + \eta_i(\frac{1}{2\epsilon \Delta_i^2}) - 1 \Bigg] \: .
\end{align*}

When all gaps are equal, this is of order $\frac{K}{\Delta}\log(\frac{1}{\epsilon K})$ and the $K$ factor in the logarithm is suboptimal.

\section{UPPER BOUND}

\subsection{Generalities}

\subsubsection{Concentration of Sub-Gaussian Random Variables}

Recall that a process $\{Z_t\}_{t\geq 0}$ is a $\sigma^2$-sub-Gaussian martingale difference sequence if $\EE[Z_{t+1}|Z_1,\ldots,Z_t] =
0$ and $\log \EE[e^{\lambda Z_{t+1}}] \leq \frac{1}{2}\sigma^2\lambda^2$ for every $\lambda > 0$, $t \geq 0$.

\begin{lemma}
Let $Z_t$ be a $\sigma^2$-sub-Gaussian martingale difference sequence then, for every $\delta > 0$ and every integers $T_1 \leq T_2 \in \N^*$,
\begin{align*}
\PP\{\exists t\in[T_1,T_2],\: \overline{Z}_t \geq \sqrt{\frac{2\sigma^2}{t}\log(\frac{1}{\delta})\phi(\frac{T_2}{T_1})} \} \leq \delta \: .
\end{align*}
where the mapping $\phi(\cdot)$ is defined by $\phi(x) = \frac{1+x+2\sqrt{x}}{4\sqrt{x}}$ and it holds that $1 - \frac{(x-1)^2}{16} \leq \frac{1}{\phi(x)} \leq 1$ .
\end{lemma}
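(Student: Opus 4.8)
The plan is to prove a peeling (or ``geometric slicing'') bound, which is the standard route to maximal inequalities of this type. First I would fix $\delta>0$ and $T_1\le T_2$ and slice the interval $[T_1,T_2]$ geometrically: set $u_k = T_1 x_0^k$ for a base $x_0>1$ to be chosen, and let $K$ be the smallest integer with $u_K \ge T_2$. On each block $[u_{k-1},u_k]$ I would control the maximal partial sum $\max_{t \le u_k} \sum_{s\le t} Z_s = t\overline{Z}_t$ via a maximal version of the sub-Gaussian/Hoeffding inequality for martingales (Doob's maximal inequality applied to the supermartingale $\exp(\lambda \sum_{s\le t} Z_s - \tfrac12\sigma^2\lambda^2 t)$). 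The key trick is that on the block, $t\overline Z_t \ge \sqrt{2\sigma^2 t \log(1/\delta)\,\phi(T_2/T_1)}$ forces the partial sum $\sum_{s\le t} Z_s$ to exceed a threshold that I can bound below by something proportional to $\sqrt{u_{k-1}}$ times the deviation level, so a single application of the maximal inequality with the level tuned to $u_{k-1}$ suffices.

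Next I would carry out the union bound over the $K$ blocks. Applying Doob/Markov to block $k$ with a well-chosen $\lambda_k$ gives probability at most $\exp\!\big(-\tfrac{c_k^2 u_{k-1}}{2\sigma^2 u_k}\big)$ where $c_k^2 = 2\sigma^2\log(1/\delta)\phi(T_2/T_1)$ is the squared deviation constant; since $u_{k-1}/u_k = 1/x_0$ this is $\exp(-\log(1/\delta)\phi(T_2/T_1)/x_0) = \delta^{\phi(T_2/T_1)/x_0}$. Summing over $k=1,\dots,K$ and using $K \le \log(T_2/T_1)/\log x_0 + 1$, I get a bound of roughly $(\tfrac{\log(T_2/T_1)}{\log x_0}+1)\,\delta^{\phi(T_2/T_1)/x_0}$. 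Now I choose $x_0$ as a function of $x:=T_2/T_1$ so that the exponent $\phi(x)/x_0$ equals exactly $1$, i.e.\ $x_0 = \phi(x)$; one checks $\phi(x)\ge 1$ for all $x\ge 1$ (with equality at $x=1$), so this is legitimate, and it makes the $\delta^{\phi(x)/x_0}$ factor equal to $\delta$. The remaining prefactor $\tfrac{\log x}{\log \phi(x)}+1$ must then be absorbed, which is where the precise form of $\phi$ matters: the function $\phi(x) = \frac{1+x+2\sqrt x}{4\sqrt x} = \frac{(1+\sqrt x)^2}{4\sqrt x}$ is engineered so that $\log\phi(x)$ grows like $\tfrac14\log x$ for large $x$ and the whole prefactor stays bounded (indeed $\le 1$ after the optimization), yielding the clean statement with no extra logarithmic loss.

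The step I expect to be the main obstacle is precisely this last bookkeeping: verifying that with the choice $x_0=\phi(x)$ the number of slices times the per-slice probability collapses to exactly $\delta$ rather than $C_x\,\delta$, which forces one to be careful about the endpoints (the last partial block $[u_{K-1},T_2]$), about the inequality $1-\tfrac{(x-1)^2}{16}\le 1/\phi(x)\le 1$ used to relate $\phi$ to its linear lower bound, and about not losing constants when passing from the continuous optimization to integer $K$. I would handle the endpoint issue by either allowing $u_K$ to overshoot $T_2$ (harmless, since enlarging the interval only enlarges the event) or by treating the last block with threshold level $T_2$ directly. A secondary technical point is making the maximal inequality rigorous for a martingale difference sequence rather than i.i.d.\ increments: this is handled by Ville's/Doob's maximal inequality for the nonnegative supermartingale $M_t = \exp(\lambda\sum_{s\le t}Z_s - \tfrac12\sigma^2\lambda^2 t)$, whose supermartingale property is exactly the sub-Gaussian MGF hypothesis. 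The stated two-sided estimate on $1/\phi$ is then recorded separately as an elementary calculus fact about $g(x) = 1/\phi(x) = \tfrac{4\sqrt x}{(1+\sqrt x)^2}$, obtained by substituting $y=\sqrt x$ and checking $g(x) = 1 - \big(\tfrac{y-1}{y+1}\big)^2$ together with $\big|\tfrac{y-1}{y+1}\big|\le \tfrac{|x-1|}{4}$ on the relevant range.
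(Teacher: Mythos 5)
There is a genuine gap, and it lies exactly where you flagged it: the peeling-plus-union-bound strategy cannot produce the stated bound with constant exactly $1$ for every $\delta>0$. With your choice $x_0=\phi(x)$ each block contributes probability $\delta^{\phi(x)/x_0}=\delta$, so the union bound gives (number of blocks)$\times\,\delta$, and the number of blocks is never absorbed: asymptotically $\log\phi(x)\sim\tfrac12\log x$ (not $\tfrac14\log x$ as you claim, since $\phi(x)=\frac{(1+\sqrt x)^2}{4\sqrt x}\sim\frac{\sqrt x}{4}$), so the prefactor tends to about $2$--$3$, and for $x=T_2/T_1$ close to $1$ it blows up (e.g.\ $x=2$ gives $\phi(2)\approx1.03$ and roughly $\log 2/\log 1.03\approx 23$ blocks). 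No re-tuning of $x_0$ fixes this: to have $K\,\delta^{\phi(x)/x_0}\le\delta$ for \emph{all} $\delta>0$ (the lemma allows $\delta$ near $1$) you would need $K=1$, i.e.\ a single block covering $[T_1,T_2]$ — which is no longer a peeling argument. Your secondary ingredients are fine (the Ville/Doob exponential-supermartingale maximal inequality is indeed the right tool, and the identity $1/\phi(x)=1-\big(\frac{\sqrt x-1}{\sqrt x+1}\big)^2$ with $\big|\frac{\sqrt x-1}{\sqrt x+1}\big|\le\frac{x-1}{4}$ for $x\ge1$ gives the stated two-sided estimate), but the core combinatorial step fails.

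The paper's proof avoids any union bound: it keeps the whole window $[T_1,T_2]$ and replaces the concave boundary $\sqrt t$ by its chord, $\sqrt t\ge \eta t+\lambda$ with $\eta=\frac{1}{\sqrt{T_1}+\sqrt{T_2}}$ and $\lambda=\frac{\sqrt{T_1T_2}}{\sqrt{T_1}+\sqrt{T_2}}$. The event $\{\exists t\in[T_1,T_2]:\ t\overline Z_t\ge\sqrt{2\sigma^2\beta}\,(\eta t+\lambda)\}$ is a single line-crossing event for the exponential supermartingale $\exp\big(\alpha\sum_{s\le t}Z_s-\tfrac12\alpha^2\sigma^2 t\big)$ with $\alpha=\sqrt{8\beta/\sigma^2}\,\eta$, so one application of Doob--Markov bounds it by $\exp(-4\eta\lambda\beta)$, with no multiplicative constant. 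The function $\phi$ is then not a tuning parameter of a grid but the exact price of linearization: $4\eta\lambda=\frac{4\sqrt{T_1T_2}}{(\sqrt{T_1}+\sqrt{T_2})^2}=1/\phi(T_2/T_1)$, and choosing $\beta=\log(1/\delta)\,\phi(T_2/T_1)$ yields precisely $\delta$. (The geometric peeling you describe is exactly what the paper does in the \emph{next} lemma, where the present lemma is the per-block tool and the resulting union bound is why that statement carries the weaker constant $6\delta\sqrt{\log(1/\delta)}$ rather than $\delta$.) To repair your proposal, drop the slicing and prove the single-window, chord-linearized maximal inequality directly.
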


\begin{proof}
Using directly Hoeffding's inequality would not be that useful, instead, we shall go a step back in its classical proof. Since $t\mapsto \sqrt{t}$ is concave on $[T_1 , T_2]$ we can lower bound by the following linear form:
\begin{align*}
\sqrt{t} &\geq \frac{\sqrt{T_2} - \sqrt{T_1}}{T_2 - T_1} t + (\sqrt{T_1} - T_1 \frac{\sqrt{T_2} - \sqrt{T_1}}{T_2 - T_1})\\
&= \frac{t}{\sqrt{T_1} + \sqrt{T_2}} + \frac{\sqrt{T_1T_2}}{\sqrt{T_1} + \sqrt{T_2}} := \eta t + \lambda \: .
\end{align*}
As a consequence we obtain, for $\beta>0$, denoting $\alpha = \sqrt{8 \beta / \sigma^2 } \eta$,
\begin{align*}
 &\PP \left\{ \exists t\in[T_1,T_2], \overline{Z}_t \geq \sqrt{\frac{2\sigma^2}{t} \beta}\right\}\\
=   &\PP \left\{ \exists t\in[T_1,T_2], t\overline{Z}_t \geq \sqrt{2\sigma^2 \beta}\sqrt{t} \right\}\\
\leq &\PP \left\{ \exists t\in[T_1,T_2], t\overline{Z}_t \geq \sqrt{2\sigma^2 \beta}\eta t + \sqrt{2\sigma^2 \beta} \lambda \right\}\\
=   &\PP \left\{ \exists t\in[T_1,T_2], t\overline{Z}_t \geq \frac{\alpha \sigma^2}{2}t + \frac{1}{\alpha} 4\eta\lambda \beta \right\}\\
=   &\PP \left\{ \exists t\in[T_1,T_2], \alpha t\overline{Z}_t \geq \frac{\alpha^2 \sigma^2}{2}t +  4\eta\lambda \beta \right\}\\
&\leq \exp(-4 \eta \lambda \beta)
\end{align*}
where the last inequality is just a consequence of Doob and Markov inequalities. By definition of $\eta$ and $\lambda$, we always have
\begin{align*}
4\eta\lambda = \frac{4\sqrt{T_1T_2}}{T_1+T_2+2\sqrt{T_1T_2}} \leq 1 \: .
\end{align*}
Finally, if $T_2 = (1+\gamma)T_1$, then this ratio is independent of $T_1$ and
\begin{align*}
1 \geq 4\eta\lambda = \frac{4\sqrt{1+\gamma}}{2 + \gamma + 2\sqrt{1+\gamma}} \geq 1 - \frac{\gamma^2}{16} \: .
\end{align*}
Taking $\beta = \frac{\log(1/\delta)}{4\eta\lambda} = \frac{1}{4}\log(\frac{1}{\delta})\phi(\frac{T_2}{T_1})$ gives the concentration inequality.
\end{proof}

\begin{lemma}\label{lemma:martingale_concentration}
Let $Z_t$ be a $\sigma^2$-sub-Gaussian martingale difference sequence then, for every $\delta \in (0,0.2]$ and every integers $T \in \N^*$,
\begin{align*}
\PP\left\{ \exists t \leq T, \overline{Z}_t \geq \sqrt{\frac{2\sigma^2}{t}\log(\frac{T}{\delta t})} \right\} \leq 6\delta \sqrt{\log(\frac{1}{\delta})} \: .
\end{align*}
Asymptotically, we obtain
\begin{align*}
\limsup_{\delta \to 0} \frac{\PP\left\{ \exists t \leq T, \overline{Z}_t \geq \sqrt{\frac{2\sigma^2}{t}\log(\frac{T}{\delta t})} \right\}}{\delta \sqrt{\log(\frac{1}{\delta})}} \leq\sqrt{e/8}  \:  .
\end{align*}
and $\sqrt{e/8} \approx 0.6$.
\end{lemma}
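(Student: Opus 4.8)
The plan is to prove this maximal inequality by a \emph{peeling} argument built on the preceding lemma (the linearized Hoeffding bound with the function $\phi$). I would slice the horizon into geometrically growing blocks, apply that lemma on each block with a confidence level tuned so that its confidence radius never exceeds the target radius $\sqrt{\tfrac{2\sigma^2}{t}\log(\tfrac{T}{\delta t})}$, take a union bound, sum the resulting (essentially geometric) series, and finally optimize the common block ratio.

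\textbf{Grid and confidence levels.} Fix $\gamma>0$ to be chosen at the end and set $T_j=(1+\gamma)^j$ for $j\ge 0$; let $m$ be the least integer with $T_m\ge T$, so that $T_{m-1}<T\le T_m<(1+\gamma)T$ and the intervals $[T_j,T_{j+1}]$, $0\le j\le m-1$, cover $\{1,\dots,T\}$. The preceding lemma stays valid with these non-integer endpoints (its proof only uses concavity of $\sqrt{\cdot}$ on $[T_1,T_2]$ and Doob's inequality at the integer times inside). Write $\Phi=\phi(1+\gamma)$; on block $j$ the ratio is $T_{j+1}/T_j=1+\gamma$, so the lemma there carries the factor $\Phi$. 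Apply it with confidence level $\delta_j=\big(\tfrac{\delta T_{j+1}}{T}\big)^{1/\Phi}$, i.e.\ $\log(1/\delta_j)\,\Phi=\log\!\big(\tfrac{T}{\delta T_{j+1}}\big)$. Since $t\mapsto\log(\tfrac{T}{\delta t})$ is decreasing, for every integer $t\in[T_j,T_{j+1}]$ with $t\le T$ we get $\log(1/\delta_j)\Phi\le\log(\tfrac{T}{\delta t})$, hence $\sqrt{\tfrac{2\sigma^2}{t}\log(1/\delta_j)\Phi}\le\sqrt{\tfrac{2\sigma^2}{t}\log(\tfrac{T}{\delta t})}$; thus the target crossing event, restricted to block $j$, is contained in the event of the preceding lemma, of probability at most $\delta_j$.

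\textbf{Union bound and summation.} Summing over the $m$ blocks, with $q=(1+\gamma)^{1/\Phi}>1$,
\[
\PP\Big\{\exists t\le T,\ \overline{Z}_t\ge\sqrt{\tfrac{2\sigma^2}{t}\log(\tfrac{T}{\delta t})}\Big\}\ \le\ \sum_{j=0}^{m-1}\delta_j\ =\ \Big(\tfrac{\delta}{T}\Big)^{1/\Phi}\sum_{j=1}^{m}q^{\,j}\ \le\ \Big(\tfrac{\delta}{T}\Big)^{1/\Phi}\frac{q^{m+1}}{q-1}\,.
\]
Using $q^m=T_m^{1/\Phi}<\big((1+\gamma)T\big)^{1/\Phi}$ the $T$-factors cancel and we obtain
\[
\PP\{\cdots\}\ <\ \frac{\big(\delta(1+\gamma)^2\big)^{1/\Phi}}{(1+\gamma)^{1/\Phi}-1}\,.
\]

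\textbf{Choice of $\gamma$, and the main obstacle.} Take $\gamma=\sqrt{8/\log(1/\delta)}$. Using the stated property $1/\phi(1+\gamma)\ge 1-\gamma^2/16$ gives $\delta^{1/\Phi}=\delta\,(1/\delta)^{1-1/\Phi}\le\delta\,(1/\delta)^{\gamma^2/16}=\delta\,e^{(\gamma^2/16)\log(1/\delta)}=\delta\sqrt{e}$, while the denominator $(1+\gamma)^{1/\Phi}-1$ is comparable to $\gamma=\sqrt{8/\log(1/\delta)}$; substituting yields a bound of the form $\mathrm{const}\cdot\delta\sqrt{\log(1/\delta)}$. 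Letting $\delta\to0$ (so $\gamma\to0$, $\Phi\to1$, $(1+\gamma)^{1/\Phi}-1\sim\gamma$, $\delta^{1/\Phi}\sim\delta\sqrt e$, $(1+\gamma)^{2/\Phi}\to1$), the ratio of this bound to $\delta\sqrt{\log(1/\delta)}$ tends to $\sqrt{e}/\sqrt{8}=\sqrt{e/8}$, giving the asymptotic statement. The delicate point is the explicit constant $6$ over the \emph{whole} range $\delta\in(0,0.2]$: naive majorizations of $(1+\gamma)^{2/\Phi}$ and of $(1+\gamma)^{1/\Phi}-1$ degrade badly once $\gamma$ is not small (i.e.\ $\delta$ near $0.2$), so one must either invoke that $6\delta\sqrt{\log(1/\delta)}\ge 1$ already for $\delta$ close to $0.2$ (making the claim trivial there) or establish monotonicity of the one-variable function $\big(\delta(1+\gamma)^2\big)^{1/\Phi}/\big((1+\gamma)^{1/\Phi}-1\big)$ after the substitution $\log(1/\delta)=8/\gamma^2$, which I expect to be the only non-routine computation.
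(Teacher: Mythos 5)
Your proposal is correct and follows essentially the same route as the paper: a peeling over the geometric grid $(1+\gamma)^j$, the linearized Hoeffding lemma on each block with confidence levels tuned to the right endpoint, a geometric summation, the choice $\gamma=\sqrt{8/\log(1/\delta)}$ together with $1/\phi(1+\gamma)\geq 1-\gamma^2/16$ (valid since $\delta\leq 0.2$ gives $\gamma\leq 2.4$), and the same asymptotic constant $\sqrt{e/8}$. The one step you flag as non-routine is exactly where the paper also resorts to a one-variable bound, asserting $(1+\eta)^{2(1-\eta^2/16)}/\big((1+\eta)^{1-\eta^2/16}-1\big)\leq 10/\eta$ for $\eta\leq 2.4$, which yields $5\sqrt{e/2}\,\delta\sqrt{\log(1/\delta)}\leq 6\,\delta\sqrt{\log(1/\delta)}$.
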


\begin{proof}
Define $\epsilon_t = \sqrt{\frac{2\sigma^2}{t}\log(\frac{T}{\delta t})}$. We use the classical peeling argument with respect to the grid $\gamma = (1 + \eta)$, $\gamma^2 = (1 + \eta)^2$, ... Let $I_m$ be the interval $[\gamma^m,\gamma^{m+1}]$.
\begin{align*}
&\PP\left\{ \exists t \leq T, \overline{Z}_t \geq \epsilon_t \right\}\\
\leq & \sum_{m=0}^{\lfloor \log_\gamma(T)\rfloor} \PP\left\{ \exists t \in I_m, \sqrt{\frac{2\sigma^2}{t}\phi(\gamma) (1-\frac{\gamma^2}{16})\log(\frac{T}{\delta t}}) \right\}\\
\leq & \kern-1em\sum_{m=0}^{\lfloor \log_\gamma(T)\rfloor}\kern-1em \PP\left\{ \exists t \in I_m, \sqrt{\frac{2\sigma^2}{t}\phi(\gamma) (1{-}\frac{\gamma^2}{16})\log(\frac{T}{\delta \gamma^{m+1}}}) \right\}
\end{align*}
As a consequence
\begin{align*}
&\PP\left\{  \exists t \leq T, \overline{Z}_t \geq \epsilon_t  \right\}\\
\leq & \sum_{m=0}^{\lfloor \log_\gamma(T)\rfloor} \left( \frac{\delta \gamma^{m+1}}{T} \right)^{1-\eta^2/16}\\
=    & \left( \frac{\delta \gamma}{T} \right)^{1-\eta^2/16} \sum_{m=0}^{\lfloor \log_\gamma(T)\rfloor} \left( \gamma^{1-\eta^2/16} \right)^m\\
= & \left( \frac{\delta \gamma}{T} \right)^{1-\eta^2/16} \frac{\left( \gamma^{1-\eta^2/16} \right)^{\lfloor \log_\gamma(T)\rfloor} - 1}{\gamma^{1-\eta^2/16} - 1}\\
\leq & (\delta \gamma^2)^{1-\eta^2/16} \frac{1}{\gamma^{1-\eta^2/16} - 1}\\
\leq & \delta \delta^{-\eta^2/16} \frac{10}{\eta}
\end{align*}
as soon as $\eta \leq 2.4$ . Now the specific choice of $\eta = \sqrt{8/\log(1/\delta)}$, which is valid as soon as $\delta \leq 0.2$, gives
\begin{align*}
&\PP\left\{ \exists t \leq T, \overline{Z}_t \geq \sqrt{\frac{2\sigma^2}{t}\log(\frac{T}{\delta t})} \right\}\\
\leq & 5 \sqrt{\frac{e}{2}}\delta \sqrt{\log(1/\delta)} \leq 6\delta \sqrt{\log(1/\delta)} \: .
\end{align*}

\end{proof}

\subsubsection{Concentration of sums of binary variables}

\begin{lemma}\label{lemma:binary_concentration_variance}
Let $X_1,\ldots,X_n$ be independent Bernoulli random variables with means $(p_i)_{1\leq i \leq n}$ and let $S_n$ be their sum. Let $p = \frac{1}{n}\sum_{i=1}^n p_i$. For $\alpha>0$,
\begin{align*}
\PP\left\{ S_n - np \leq -n\alpha \right\} \leq \exp(-n\phi_p^*(p+\alpha)) \: ,
\end{align*}
where the application $\phi_p^*$ is defined for $x\in[0,1]$ by $\phi_p^*(x) = x\log\frac{x}{p} + (1-x)\log \frac{1-x}{1-p}$ . It $p\leq 1/2$ then
\begin{align*}
\PP\left\{ S_n - np \leq -n\alpha \right\} \leq \exp\left( -\frac{n\alpha^2}{2p(1-p)} \right) \: .
\end{align*}
\end{lemma}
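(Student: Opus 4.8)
The final statement to prove is Lemma~\ref{lemma:binary_concentration_variance}, a Chernoff-type lower-tail bound for sums of independent (non-identically distributed) Bernoulli random variables, with an explicit sub-Gaussian-style consequence when $p \le 1/2$.

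\textbf{Plan of the proof.} The plan is to run the standard Cram\'er--Chernoff argument but being careful that the $X_i$ are not identically distributed. First I would write, for any $\lambda > 0$,
\begin{align*}
\PP\{S_n - np \le -n\alpha\} = \PP\{e^{-\lambda S_n} \ge e^{-\lambda n(p - \alpha)}\} \le e^{\lambda n(p-\alpha)} \prod_{i=1}^n \EE e^{-\lambda X_i} = e^{\lambda n(p-\alpha)}\prod_{i=1}^n (1 - p_i + p_i e^{-\lambda}),
\end{align*}
using Markov's inequality and independence. The key step is to bound $\prod_{i=1}^n (1 - p_i + p_i e^{-\lambda})$ in terms of $p = \frac1n\sum_i p_i$ only. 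Since $\log(1 - p_i + p_i e^{-\lambda})$ is a concave function of $p_i$ (the map $q \mapsto \log(1 - q + q e^{-\lambda})$ has negative second derivative), Jensen's inequality gives $\frac1n\sum_i \log(1 - p_i + p_i e^{-\lambda}) \le \log(1 - p + p e^{-\lambda})$, hence $\prod_i(1-p_i+p_i e^{-\lambda}) \le (1 - p + p e^{-\lambda})^n$. This reduces everything to the i.i.d. case with common mean $p$: $\PP\{S_n - np \le -n\alpha\} \le \big(e^{\lambda(p-\alpha)}(1 - p + p e^{-\lambda})\big)^n$.

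\textbf{Optimizing over $\lambda$.} Then I would minimize $g(\lambda) = \lambda(p-\alpha) + \log(1 - p + p e^{-\lambda})$ over $\lambda > 0$. Setting $g'(\lambda) = 0$ yields $e^{-\lambda} = \frac{(p-\alpha)(1-p)}{p(1-p+\alpha)}$, i.e. the optimal exponent equals $-\phi_p^*(p+\alpha)$ where $\phi_p^*(x) = x\log\frac{x}{p} + (1-x)\log\frac{1-x}{1-p}$ is the binary relative entropy; this is a routine computation (one checks the chosen $\lambda$ is positive precisely when $\alpha > 0$ and $p + \alpha \le 1$, and that the boundary cases are handled trivially). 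This gives the first displayed inequality. For the second inequality, when $p \le 1/2$, I would lower-bound $\phi_p^*(p+\alpha) \ge \frac{\alpha^2}{2p(1-p)}$. The cleanest route is Pinsker-type / Taylor estimates: writing $\psi(x) = \phi_p^*(x)$, we have $\psi(p) = 0$, $\psi'(p) = 0$, and $\psi''(x) = \frac{1}{x(1-x)} \ge \frac{1}{p(1-p)}$ is \emph{not} uniformly true on all of $[p,1]$, so instead I would use that $\psi''(x) = \frac1{x(1-x)}$ and integrate twice, using that on $[p, p+\alpha]$ with $p \le 1/2$ one has $x(1-x) \le$ its value controlled appropriately; more simply, the standard fact $\kl(a,b) \ge \frac{(a-b)^2}{2\max(a,b)(1-\min(a,b))}$ combined with $p \le 1/2$ gives the claimed form. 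I'd state this as a known inequality for the binary KL divergence.

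\textbf{Main obstacle.} The only genuinely non-mechanical point is the reduction from non-identical to identical means via the concavity of $q \mapsto \log(1 - q + q e^{-\lambda})$ — this is what makes the bound depend on the $p_i$ only through their average $p$, and it must be verified carefully (second derivative negative for all $\lambda > 0$). Everything else — Markov's inequality, the explicit minimization giving the relative-entropy rate function, and the quadratic lower bound on $\phi_p^*$ when $p \le 1/2$ — is standard and I would not grind through it in detail.
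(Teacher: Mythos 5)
Your first step is exactly the paper's argument: Chernoff/Markov plus Jensen applied to the concave map $q \mapsto \log(1-q+qe^{-\lambda})$ (affine in $q$ inside the concave $\log$) to replace the individual $p_i$ by their average $p$; the paper runs the identical computation on $n-S_n=\sum_i(1-X_i)$ instead of on $e^{-\lambda S_n}$, which is the same calculation up to relabelling, and it also concludes the second display by invoking a known quadratic lower bound on the binary KL divergence (Okamoto, 1959). So the route is the same; the issue is with how you finish it.

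There is a genuine direction error that breaks your second half. Optimizing $g(\lambda)=\lambda(p-\alpha)+\log(1-p+pe^{-\lambda})$ at your own stationary point $e^{-\lambda^*}=\frac{(p-\alpha)(1-p)}{p(1-p+\alpha)}$ gives the exponent $-\kl(p-\alpha,p)=-\phi_p^*(p-\alpha)$, not $-\phi_p^*(p+\alpha)$: a lower tail of $S_n$ is governed by $\kl(p-\alpha,p)$. (The paper's own derivation likewise produces $\phi_{1-p}^*(1-p+\alpha)=\kl(p-\alpha,p)$; the ``$p+\alpha$'' in the lemma statement is best read as a typo.) Because you then set out to lower-bound $\phi_p^*(p+\alpha)$, you hit exactly the obstacle you flag — $\frac{1}{x(1-x)}\ge\frac{1}{p(1-p)}$ fails on $[p,p+\alpha]$ — and your fallback ``standard fact'' $\kl(a,b)\ge\frac{(a-b)^2}{2\max(a,b)(1-\min(a,b))}$, even if granted, only yields $\frac{\alpha^2}{2(p+\alpha)(1-p)}$, which is strictly weaker than the claimed $\frac{\alpha^2}{2p(1-p)}$; in fact $\kl(p+\alpha,p)\ge\frac{\alpha^2}{2p(1-p)}$ is false in general (take $p=0.1$, $\alpha=0.8$). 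Once the exponent is tracked correctly as $\kl(p-\alpha,p)$, the second display is immediate for $p\le 1/2$: on $[p-\alpha,p]\subset[0,p]$ one has $x(1-x)\le p(1-p)$, so the second derivative $\frac{1}{x(1-x)}$ of $x\mapsto\kl(x,p)$ is at least $\frac{1}{p(1-p)}$ there, and integrating twice gives $\kl(p-\alpha,p)\ge\frac{\alpha^2}{2p(1-p)}$ — which is precisely the Okamoto inequality the paper cites.
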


\begin{proof}
The first inequality relates the sum of $n$ Bernoulli random variable to the case of a single Binomial variable with parameters $n$ and $p=\frac{1}{n}\sum_{i=1}^n p_i$ . It is an application of Chernov's method. For $\lambda>0$,
\begin{align*}
&\PP\left\{ S_n - np \leq -n\alpha \right\}\\
=&    \PP\left\{ (n-S_n) \geq n(1-p) + n\alpha \right\}\\
\leq& \PP\left\{ \exp(\lambda (n-S_n)) \leq e^{\lambda n(1 - p + \alpha)} \right\}\\
\leq& e^{-\lambda n(1 - p + \alpha)} \EE e^{\lambda \sum_{i=1}^n (1-X_i)}\\
=&    e^{-\lambda n(1 - p + \alpha)} \prod_{i=1}^n \EE e^{\lambda (1-X_i)}\\
\leq& \exp\left( -\lambda n (1 - p + \alpha)+ \sum_{i=1}^n\phi_{1-p_i}(\lambda) \right)
\end{align*}
with $\phi_{1-p_i}(\lambda) = \log(1+(1-p_i)+(1-p_i)e^\lambda)$. By concavity of the logarithm,
\begin{align*}
\sum_{i=1}^n &\frac{1}{n}\log(1+(1-p_i)+(1-p_i)e^\lambda)\\
&\leq \log (1 + (1-\frac{1}{n}\sum_{i=1}^n p_i) + (1 - \sum_{i=1}^n\frac{1}{n} p_i) e^{\lambda})\\
&=    \phi_{1-p}(\lambda) \: .
\end{align*}
The probability is then bounded by
\begin{align*}
\PP\left\{ S_n - np \leq -n\alpha \right\} \leq e^{-n[\lambda (1 - p + \alpha)- \phi_{1-p}(\lambda)]}
\end{align*}
Minimizing over $\lambda\geq 0$, we obtain
\begin{align*}
\PP\left\{ S_n - np \leq -n\alpha \right\} \leq e^{-n \phi_{1-p}^*(1-p+\alpha)}
\end{align*}
where $\phi_{1-p}^*$ is the convex conjugate of $\phi_{1-p}$.
\begin{align*}
\phi_{1-p}^*(x) &= x\log\frac{x}{1-p} + (1-x)\log\frac{1-x}{p} \: ,\\
\phi_{1-p}^*(1-p - \alpha) &= \phi_p^*(p+\alpha) \: .
\end{align*}

The second inequality follows from \citep{okamoto1959some}: $\phi_p^*(\alpha) \geq \frac{(p-\alpha)^2}{2p(1-p)}$ for $0\leq \alpha \leq p\leq 1/2$.
\end{proof}



\begin{lemma}\label{lemma:binary_concentration}
Let $X_s$ denote independent random variables satisfying $X_s \leq \EE(X_s)+ M_i$ for $1 \leq s \leq t$. We order the $X_s$ such that the $M_s$ are in increasing order. Let $X =\sum_{s=1}^t X_s$. Then for any $u\in[t]$ we have
\begin{align*}
&\PP\{ X\geq \EE X + \alpha \}\\
\leq&\exp\left(- \frac{\alpha^2}{2(Var(X) {+} \sum_{v=u}^t (M_v{-}M_u)^2 {+} M_u \frac{\alpha}{3})} \right) \: .
\end{align*}
\end{lemma}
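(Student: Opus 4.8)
The plan is to use the Cram\'er--Chernoff method. Write $Y_s = X_s - \EE X_s$, so that $\EE Y_s = 0$ and $Y_s \le M_s$; I may assume $M_s \ge 0$ for all $s$ (otherwise $Y_s$ is degenerate) and $M_u>0$ (the case $M_u=0$ being the sub-Gaussian limit, handled in the same way). By independence, for any $\lambda\in(0,3/M_u)$,
$$
\PP\{X \ge \EE X + \alpha\} \;\le\; e^{-\lambda\alpha}\prod_{s=1}^t \EE e^{\lambda Y_s}\,,
$$
and the final step will be the standard Bernstein optimization $\lambda = \alpha/(\sigma^2 + M_u\alpha/3)$ with $\sigma^2 := \mathrm{Var}(X) + \sum_{v=u}^t (M_v - M_u)^2$, which turns a bound of the form $\log\prod_s \EE e^{\lambda Y_s}\le \tfrac{\lambda^2\sigma^2/2}{1-\lambda M_u/3}$ into exactly the claimed inequality.

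The workhorse is the one-sided Bennett estimate: for a zero-mean $Y$ with $Y\le M$, $M>0$, the map $y\mapsto (e^{\lambda y}-1-\lambda y)/y^2$ is nondecreasing on $\R$, hence pointwise $e^{\lambda Y}-1-\lambda Y \le Y^2\,\psi(\lambda M)/M^2$ with $\psi(x)=e^x-1-x$, giving $\EE e^{\lambda Y}\le \exp\!\big(\tfrac{\psi(\lambda M)}{M^2}\mathrm{Var}(Y)\big)$; combined with $\psi(x)/x^2\le \tfrac1{2(1-x/3)}$ on $(0,3)$ this produces the Bernstein shape. For every index $s$ with $M_s\le M_u$ I apply this with $M=M_s$ and, using monotonicity of $\psi(x)/x^2$, replace $M_s$ by $M_u$, obtaining $\log\EE e^{\lambda Y_s}\le \tfrac{\psi(\lambda M_u)}{M_u^2}\mathrm{Var}(Y_s)$.

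For an index $s$ with $M_s>M_u$ I truncate at level $M_u$: decompose $Y_s = A_s + B_s$ with $A_s = Y_s\wedge M_u \le M_u$ and $B_s = (Y_s-M_u)_+ \in [0,M_s-M_u]$, and exploit the structural fact that $B_s>0$ forces $A_s=M_u$. Two facts drive the estimate. First, truncation from above does not increase the second moment and keeps the mean nonpositive ($\EE A_s^2\le \mathrm{Var}(Y_s)$, $\EE A_s\le 0$), so $\EE e^{\lambda A_s}$ obeys the same $M_u$-Bennett bound as in the previous paragraph. Second, the overshoot $B_s$ lies in $[0,M_s-M_u]$ and is nonzero only on $\{Y_s>M_u\}$, whose probability (and $\EE B_s$) is controlled by Chebyshev/Cantelli; treated by a Hoeffding/convexity bound it therefore contributes only a term of order $(M_s-M_u)^2$ to the variance budget while, crucially, not enlarging the effective scale beyond $M_u$ in the $\alpha/3$ slot. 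Recombining the two pieces (using the complementary-support relation between $A_s$ and $B_s$ so that the linear-in-$\lambda$ contributions cancel) yields $\log\EE e^{\lambda Y_s}\le \tfrac{\psi(\lambda M_u)}{M_u^2}\big(\mathrm{Var}(Y_s)+(M_s-M_u)^2\big)$, up to the arrangement of the numerical constants.

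Summing the per-index bounds over $s$ and applying $\psi(x)/x^2\le \tfrac1{2(1-x/3)}$ gives $\log\prod_s\EE e^{\lambda Y_s}\le \tfrac{\lambda^2/2}{1-\lambda M_u/3}\,\sigma^2$ for $0<\lambda<3/M_u$, and the Bernstein choice of $\lambda$ above finishes the proof. The main obstacle is precisely the large-bound step: controlling the overshoot so that it enters only through the squared quantity $(M_v-M_u)^2$ and leaves neither an inflated sub-exponential scale nor an uncancelled linear term; this is where the interaction of $A_s$ and $B_s$ and the variance-based tail bound on $\{Y_s>M_u\}$ have to be handled with care, and where the bookkeeping of constants lives.
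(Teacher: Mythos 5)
First, note that the paper does not prove this lemma at all: it is stated in the appendix and used as a black box (it is a known Bernstein-type inequality for summands with heterogeneous one-sided bounds, essentially Theorem~3.7 of Chung and Lu's survey on concentration and martingale inequalities), so there is no in-paper argument to compare yours with; your proposal has to stand on its own as a proof.

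As written it does not. The outer frame is fine and standard: Chernoff, the one-sided Bennett bound $\EE e^{\lambda Y}\le\exp\bigl(\psi(\lambda M)\mathrm{Var}(Y)/M^2\bigr)$ with $\psi(x)=e^x-1-x$ for the indices with $M_s\le M_u$, the comparison $\psi(x)/x^2\le \tfrac{1}{2(1-x/3)}$, and the choice $\lambda=\alpha/(\sigma^2+M_u\alpha/3)$ do combine exactly as you say. But the entire content of the lemma sits in the per-summand claim for the indices with $M_s>M_u$, namely $\log\EE e^{\lambda Y_s}\le \tfrac{\psi(\lambda M_u)}{M_u^2}\bigl(\mathrm{Var}(Y_s)+(M_s-M_u)^2\bigr)$, and this is precisely what you do not establish. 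Your decomposition $Y_s=A_s+B_s$ with $A_s=Y_s\wedge M_u$ and $B_s=(Y_s-M_u)_+$ runs into the problem that $A_s$ and $B_s$ are strongly dependent, so bounds on $\EE e^{\lambda A_s}$ and $\EE e^{\lambda B_s}$ neither multiply nor do their logarithms add, and the advertised cancellation of the linear terms ($\EE A_s=-\EE B_s$) lives across two factors that you have no license to treat separately; splitting by H\"older/Cauchy--Schwarz instead inflates the variance and the $M_u\alpha/3$ scale and destroys the exact constants the statement requires. Nor can you retreat to a pointwise inequality of the form $e^{\lambda y}\le 1+\lambda y+\bigl(y^2+(M_s-M_u)^2\bigr)\psi(\lambda M_u)/M_u^2$ for all $y\le M_s$, which would sidestep the dependence: it is false in the relevant range of $\lambda$ (take $M_u=1$, $M_s=10$, $\lambda=2<3/M_u$, $y=M_s$: the left side is about $e^{20}$ while the right side is a few hundred). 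So the overshoot must be controlled using the distributional constraints (zero mean, finite variance) in a genuinely global way, and the sentence ``treated by a Hoeffding/convexity bound it contributes only a term of order $(M_s-M_u)^2$ \dots up to the arrangement of the numerical constants'' is exactly the missing proof, not a proof: the final bound has hard-wired constants (the factor $2$, the coefficient $1$ in front of $\sum_{v\ge u}(M_v-M_u)^2$, the unchanged scale $M_u\alpha/3$), so ``order of'' slack is not permissible. Until that per-summand moment bound (or some other mechanism producing the $(M_v-M_u)^2$ terms at scale $M_u$) is actually proved, the argument has a genuine gap at its core; citing the Chung--Lu result, as the paper implicitly does, would be the honest alternative.
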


\begin{lemma}\label{lemma:binary_concentration_t2}
Let $Y = C\log t -\sum_{s=1}^t Z_s$  where $Z_s\sim Ber(p)$ and $C$ is a constant. Then
\begin{align*}
\PP \{Y \geq C\log t - pt + \sqrt{5pt\log t}\} \leq \frac{1}{t^2}
\end{align*}
\end{lemma}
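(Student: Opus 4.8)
The plan is to apply Lemma~\ref{lemma:binary_concentration_variance} to the sum $\sum_{s=1}^t Z_s$, which is a sum of i.i.d.\ Bernoulli$(p)$ variables, in its ``lower tail'' form. First I would rewrite the event $\{Y \geq C\log t - pt + \sqrt{5pt\log t}\}$ as a lower-deviation event for $S_t := \sum_{s=1}^t Z_s$: since $Y = C\log t - S_t$, the event is exactly $\{S_t \leq pt - \sqrt{5pt\log t}\}$, i.e.\ $\{S_t - pt \leq -t\alpha\}$ with $t\alpha = \sqrt{5pt\log t}$, so $\alpha = \sqrt{5p\log t / t}$. (If $pt \leq \sqrt{5pt\log t}$ the event is empty or trivial, so we may assume $\alpha \leq p$.)

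Next I would invoke the second bound of Lemma~\ref{lemma:binary_concentration_variance}, which under $p \le 1/2$ gives
\begin{align*}
\PP\{S_t - pt \leq -t\alpha\} \leq \exp\Big(-\frac{t\alpha^2}{2p(1-p)}\Big) \leq \exp\Big(-\frac{t\alpha^2}{2p}\Big).
\end{align*}
Substituting $t\alpha^2 = 5p\log t$ yields $\exp(-\tfrac{5p\log t}{2p}) = \exp(-\tfrac52 \log t) = t^{-5/2} \leq t^{-2}$, which is the claimed bound. To remove the hypothesis $p \leq 1/2$, I would note that the probability bound only worsens the variance proxy by replacing $p(1-p)$ with its maximum; more directly, one can keep $p(1-p) \leq p$ throughout, which holds for all $p\in[0,1]$, so the restriction $p\leq 1/2$ from Lemma~\ref{lemma:binary_concentration_variance} is not actually needed here — we only use $\phi_p^*(\alpha) \geq \frac{(p-\alpha)^2}{2p(1-p)} \geq \frac{(p-\alpha)^2}{2p}$, via the first (Chernoff) inequality, valid for $0 \leq \alpha \leq p$.

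The only genuinely delicate point is the edge case handling: when $p$ is so small (relative to $t$) that $\sqrt{5pt\log t} > pt$, the deviation $t\alpha$ exceeds $pt$ and the event $\{S_t \leq pt - \sqrt{5pt\log t}\}$ requires $S_t < 0$, hence has probability $0 \leq t^{-2}$; this case is dispatched immediately. Similarly the bound $\alpha \leq p$ needed to apply Okamoto's inequality is exactly the complement of this trivial case, so no range is left uncovered. Thus the main obstacle is essentially bookkeeping rather than analysis: confirming that the constant $5$ in $\sqrt{5pt\log t}$ is large enough to push the exponent past $2$, which it is, with room to spare ($5/2$ versus $2$).
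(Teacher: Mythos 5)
Your proof is correct and reaches the same constant as the paper ($t^{-5/2}\le t^{-2}$), but it runs through a different key lemma. The paper makes the same reduction you do (rewriting the event as a lower deviation of $S_t=\sum_s Z_s$, phrased as an upper deviation of $\sum_s(-Z_s)$), but then applies the Bernstein-type bound of Lemma~\ref{lemma:binary_concentration}, whose exponent carries the extra $M\alpha/3$ term; it must then set $\alpha=\sqrt{2apt\log t}$ and tune $a$ (taking $a=5/2$) so that $\frac{a}{1+a/18}\ge 2$ after bounding the denominator $1-p+\sqrt{2ap\log t/(9t)}$. You instead invoke the lower-tail Chernoff/Okamoto bound of Lemma~\ref{lemma:binary_concentration_variance}, for which no $+\alpha/3$ correction is needed, so plugging in $t\alpha^2=5p\log t$ immediately gives the exponent $\tfrac52\log t$ with no tuning; you also dispatch explicitly the trivial regime $\sqrt{5pt\log t}>pt$, which the paper leaves implicit. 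The only point to tighten is your removal of the hypothesis $p\le 1/2$: the chain $\phi_p^*(\alpha)\ge\frac{(p-\alpha)^2}{2p(1-p)}\ge\frac{(p-\alpha)^2}{2p}$ starts from Okamoto's inequality, which the paper states only for $p\le 1/2$, so as written this is circular for $p>1/2$. The inequality you actually need, $\phi_p^*(x)\ge\frac{(p-x)^2}{2p}$ for $0\le x\le p\le 1$, is true and has a one-line proof (the difference is convex in $x$ on $[0,p]$ since $\frac{1}{x(1-x)}\ge\frac{1}{x}\ge\frac{1}{p}$, and it vanishes together with its derivative at $x=p$); alternatively, for $p>1/2$ Pinsker's bound $\phi_p^*(x)\ge 2(p-x)^2$ already yields exponent $10p\log t\ge 5\log t$. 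With that one-line justification added, your argument is complete and arguably cleaner than the paper's.
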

\begin{proof}
For any $\alpha>0$,
\begin{align*}
\PP\{Y \geq C\log t - pt + \alpha\}
&= \PP \{ \sum_{s=1}^t(-Z_s) \geq -pt + \alpha\}\\
&\leq \exp(-\frac{\alpha^2}{2pt(1 - p + \frac{\alpha}{3t})})
\end{align*}
from lemma~\ref{lemma:binary_concentration}, since $Z_s\leq \EE Z_s + p$. Let $\alpha = \sqrt{2apt\log t}$.
\begin{align*}
\PP\{X \geq \EE X + \alpha\}
&\leq \exp\left( -\frac{\alpha^2}{2(tp(1-p) + p \frac{\alpha}{3})} \right) \\
&= \exp\left( -\frac{a\log t}{(1-p +  \sqrt{\frac{2a p \log t}{9t}})} \right)\\
&\leq \exp\left( -\log t\frac{a}{1+a/18} \right)
\end{align*}
$\frac{a}{1+a/18} \geq 2 \Leftrightarrow a\geq \frac{9}{4}$. Taking $a=\frac{5}{2}$ leads to the claimed inequality. 
\end{proof}

\subsubsection{The Lambert Function}

\begin{definition}
The Lambert $W$ function is defined on $[-1/e,+\infty)$ by $W(x)e^{W(x)} = x$.
\end{definition}

The value $W(x)$ is close to $\log x - \log\log x$ for $x>e$:
\begin{align*}
\frac{\log \log x}{2 \log x} \leq W(x) {-} (\log x {-} \log\log x) \leq \frac{e}{e{-}1}\frac{\log \log x}{\log x}
\end{align*}
This function is increasing and such that for $x>0$ and $a>0$,
\begin{align*}
\frac{1}{x}\log(\frac{1}{x}) \geq a \Leftrightarrow x \leq \frac{W(a)}{a} \: .
\end{align*}

\subsection{Deterministic Passive Setting}

The rewards $X_t^{(i)}$ are 1-sub-Gaussian, meaning that for $\lambda>0$,
\begin{align*}
\log\EE e^{\lambda (X_t^{(i)} - \mu^{(i)})} \leq \frac{1}{2}\lambda^2 \: .
\end{align*}

Consider the following events, stating that all rewards are well concentrated around their means:
\begin{align*}
\mbox{for $i>1$, } \mathcal{E}_{i,t,s} &= \{ \frac{1}{s}\sum_{u=1}^s X_u^{(i)} - \mu^{(i)} \leq \sqrt{\frac{6\log t}{s}} \} \: ,\\
\mathcal{E}_{1,t,s} &= \{ \frac{1}{s}\sum_{u=1}^s X_u^{(1)} - \mu^{(1)} \geq \sqrt{\frac{6\log t}{s}} \} \: .
\end{align*}
As the rewards are 1-sub-Gaussian, for all $i\in[K]$,
\begin{align*}
\PP(\mathcal{E}_{i,t,s}) \geq 1 - \frac{1}{t^3} \: .
\end{align*}

Since the observations arrive deterministically, $F_i(t) = \lfloor\epsilon t p^{(i)}\rfloor$.
\begin{algorithm}
\caption{UCB with passive observations.}
\begin{algorithmic}[1]
  \State Pull each arm once.
  \Loop
  	  : at stage $t$,
  	  \State $i_t = \arg \max_{i} \overline{X}_t^{(i)} + \sqrt{\frac{6\log t}{O_i(t)}}$
      \State Pull arm $i_t$, observe $X_t^{(i)}$.
      \State Sample $f_t$.
      \State If $Z_t = 1$, observe $X_t^{(f_t)}$.
      \State Update $\overline{X}_t$, $N_i(t)$, $F_i(t)$, $O_i(t) = N_i(t)+F_i(t)$.
  \EndLoop
\end{algorithmic}
\end{algorithm}

We follow \citep{auer2002finite} to decompose the number of pulls of arm $i$,
\begin{align*}
\EE N_i(T) &\leq \EE \sum_{t=1}^T (\mathbb{I}\{O_i(t)\leq \frac{24\log t}{\Delta_i^2}\}\\
&\quad+ \mathbb{I}\{\overline{\mathcal{E}_{1,t,O_1(t)}}\} + \mathbb{I}\{\overline{\mathcal{E}_{i,t,O_i(t)}}\})\\
&\leq \EE \sum_{t=1}^T \mathbb{I}\{O_i(t)\leq \frac{24\log t}{\Delta_i^2}\}\\
&\quad+ \sum_{t=1}^T \sum_{s=1}^{2t} \PP\{\overline{\mathcal{E}_{1,t,s}}\} + \sum_{t=1}^T \sum_{s=1}^{2t} \PP\{\overline{\mathcal{E}_{i,t,s}}\}\:.
\end{align*}
We use here that $\PP(\mathcal{E}_{i,t,s}) \geq 1 - \frac{1}{t^3}$.
\begin{align*}
\EE N_i(T)&\leq \EE \sum_{t=1}^T \mathbb{I}\{N_i(t)\leq \frac{24\log t}{\Delta_i^2} - \epsilon p^{(i)} t\}\\
&\quad+ \sum_{t=1}^T \sum_{s=1}^{2t} \frac{1}{t^3} + \sum_{t=1}^T \sum_{s=1}^{2t} \frac{1}{t^3}\\
&= \EE \sum_{t=1}^T \mathbb{I}\{N_i(t)\leq \frac{24\log t}{\Delta_i^2} - \epsilon p^{(i)} t\}
+ \frac{2\pi^2}{3}\\
&\leq \sup_{t\in [1,T]}(\frac{24\log t}{\Delta_i^2} - \epsilon p^{(i)} t) + \frac{2\pi^2}{3}\\
&\leq \frac{24}{\Delta_i^2}\log(\frac{24}{\epsilon p^{(i)}\Delta_i^2 e}) + \frac{2\pi^2}{3} \: .
\end{align*}
The regret of UCB in this setting verifies
\begin{align*}
\EE R_T \leq \sum_{i=2}^K \frac{24}{\Delta_i} \log(\frac{24}{\epsilon p^{(i)}\Delta_i^2 e}) +  \frac{2\pi^2}{3}\sum_{i=2}^K \Delta_i \: .
\end{align*}

\subsection{Passive Setting}

As in the deterministic setting, we have the following inequality for $\EE N_i(T)$:
\begin{align*}
\EE N_i(T) \leq \EE\sum_{t=1}^T \mathbb{I}\{ N_i(t) + F_i(t) \leq \frac{24\log t}{\Delta_i^2} \} + \frac{2\pi^2}{3}
\end{align*}
The only difference here is that $F_i(t)$ is random. It is the sum of $t$ independent Bernoulli random variables $Ber(\epsilon_s p_s^{(i)})$. for $s\in[t]$. We consider here the static case, in which $\epsilon$ and all $p^{(i)}$ are independent of $t$.

Denote by $\mathcal{E}_{F,t}$ the event that
\begin{align*}
\frac{24\log t}{\Delta_i^2}-F_i(t)\leq \frac{24\log t}{\Delta_i^2} - \epsilon t p^{(i)} + \sqrt{5\epsilon p^{(i)}t\log t}
\end{align*}
$\PP(\mathcal{E}_{F,t}) \geq 1 - \frac{1}{t^2}$ from Lemma~\ref{lemma:binary_concentration_t2} .
\begin{align*}
\EE N_i(T)
&\leq \EE\sum_{t=1}^T \mathbb{I}\{ N_i(t) \leq \frac{24\log t}{\Delta_i^2} - F_i(t) \} + \frac{2\pi^2}{3}\\
&\leq \EE\sum_{t=1}^T \mathbb{I}(\{ N_i(t) \leq \frac{24\log t}{\Delta_i^2} - F_i(t)\}\cap \mathcal{E}_{F,t})\\
&\quad + \sum_{t=1}^T \PP( \overline{\mathcal{E}_{F,t}} ) + \frac{2\pi^2}{3}\\
&\leq \EE\sum_{t=1}^T \mathbb{I}\{ N_i(t) \leq \frac{24\log t}{\Delta_i^2} - \epsilon p^{(i)} t\\
&\qquad \qquad  \qquad   \qquad + \sqrt{5\epsilon p^{(i)} t\log t}\}\\
&\qquad + \sum_{t=1}^T\frac{1}{t^2} + \frac{2\pi^2}{3}
\end{align*}

\begin{align*}
\EE N_i(T)
&\leq \sup_{t\in[1,T]} \frac{24\log t}{\Delta_i^2} {-} t(\epsilon p^{(i)} {-} \sqrt{5\epsilon p^{(i)}\frac{\log t}{t}}) {+} \frac{5\pi^2}{6}
\end{align*}

We now need to determine the supremum of that function, which we denote by $g(t)$.

For  $\frac{t}{\log t} \geq \frac{20}{\epsilon p^{(i)}}$,
\begin{align*}
g(t)
&\leq \frac{24}{\Delta_i^2}\log t - \frac{1}{2}\epsilon p^{(i)} t\\
&\leq \frac{24}{\Delta_i^2}\log(\frac{48}{e \epsilon p^{(i)} \Delta_i^2})
\end{align*}

If Let $a = \frac{20}{\epsilon p^{(i)}}$. If $\frac{t}{\log t} \leq a$ then
\begin{align*}
-\frac{t}{a}e^{-t/a} \leq -\frac{1}{a} \Rightarrow t \leq -a W_{-1}(-\frac{1}{a})
\end{align*}
where $W_{-1}$ is the branch of the Lambert $W$ function defined on $[-1/e,0)$. It verifies for $u>0$ \citep{chatzigeorgiou2013bounds},
\begin{align*}
W_{-1}(-e^{-u-1}) \geq -1 -\sqrt{2u}-u \: .
\end{align*}
We obtain 
\begin{align*}
t \leq a (\log a + \sqrt{2\log a}) \leq \frac{5}{2}a\log a \: ,
\end{align*}
where the last inequality is valid for $a\geq e$. We can now bound $g(t)$:
\begin{align*}
g(t) &\leq \frac{24}{\Delta_i^2}\log t\\
&\leq \frac{24}{\Delta_i^2} \log(\frac{50}{\epsilon p^{(i)}}\log \frac{20}{\epsilon p^{(i)}}) \: .
\end{align*}

Overall,
\begin{align*}
g(t) \leq \frac{24}{\Delta_i^2}\left(\log\frac{50}{\epsilon p^{(i)}} {+} \max\left\{\log\frac{1}{e\Delta_i^2} ,\log\log\frac{20}{\epsilon p^{(i)}} \right\}\right) . 
\end{align*}

\subsection{Active Setting}

\subsubsection{OCUCBn + ETC}

At epoch $m$, OCUCBn is used to decide which arm to pull, starting from zero. During the same epoch, ETC is used with the free observations (but with confidence levels adapted to epoch $m+1$). Epoch $m$ has length $d_m$.

At epoch $m$, ETC has eliminated arms such that only a set $S_m\subseteq [K]$ remains. The regret of OCUCBn on this epoch is given by the following lemma, taken from \citep{lattimore2016regret}.
\begin{lemma}\label{lemma:ocucb_regret}
If $\rho \in [1/2, 1]$ and $\eta > 1$, then the expected regret on epoch $m$ is
\begin{align*}
&\EE R_{(m)}(S_m)\\
\leq&  \sum_{i\in S_m:\Delta_i>0} \frac{C_\eta}{\Delta_i} \log(\max\{\frac{d_m\Delta_i^2}{k_{i,\rho(S_m)}},1\}\log(d_m))\\
&\quad + \sum_{i\in S_m:\Delta_i>0}C_\eta \Delta_i \: ,
\end{align*}
where $k_{i,\rho}(S_m) = \sum_{j\in S_m} \min\{1, \frac{\Delta_i^{2\rho}}{\Delta_j^{2\rho}}\}$ and $C_\eta > 0$ is a constant that depends only on $\eta$. Furthermore, for all $\rho \in [0, 1]$ it holds that $\limsup_{d_m\to+\infty}\frac{\EE R_{(m)}(S_m)}{\log d_m} \leq \sum_{i\in S_m:\Delta_i>0} \frac{2\eta}{\Delta_i}$ .
\end{lemma}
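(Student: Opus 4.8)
The plan is to obtain Lemma~\ref{lemma:ocucb_regret} as an immediate consequence of the published regret analysis of OCUCB-n in \citep{lattimore2016regret}, applied not to the whole run of Algorithm~\ref{algo:active} but to the sub-instance played during a single epoch. The one point that genuinely needs checking is that, within epoch $m$, the OCUCB-n subroutine is a self-contained stochastic bandit algorithm: its internal counts $N_i(t)$ and empirical means $\overline{X}_t^{(i)}$ are reset at the start of the epoch, it only ever pulls arms of $S_m$, and — crucially — its index computations, including the data-dependent radius $B_{t-1}^{(i)} = \max\{e, \log t, t\log t/\sum_{i} \min\{N_i, N_j^\rho N_i^{1-\rho}\}\}$, use only the within-epoch counts and the rewards it gathered from its own pulls during this epoch. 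The free observations are routed to a separate ETC instance with its own state and never feed back into OCUCB-n. Hence, from OCUCB-n's point of view, epoch $m$ is an ordinary stochastic bandit problem with arm set $S_m$, horizon $d_m$, i.i.d. reward streams $(\nu^{(i)})_{i\in S_m}$ (the rewards are i.i.d. across time, so resetting the counters disturbs nothing), and sub-optimality gaps measured against $\max_{i\in S_m}\mu^{(i)}$.

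First I would invoke the finite-time bound of \citep{lattimore2016regret} for OCUCB-n on this sub-instance, with horizon $d_m$ and arm set $S_m$; it yields exactly $\EE R_{(m)}(S_m)\leq \sum_{i\in S_m:\Delta_i>0}\frac{C_\eta}{\Delta_i}\log(\max\{d_m\Delta_i^2/k_{i,\rho}(S_m),1\}\log d_m) + \sum_{i\in S_m:\Delta_i>0}C_\eta\Delta_i$, with $k_{i,\rho}(S_m)=\sum_{j\in S_m}\min\{1,\Delta_i^{2\rho}/\Delta_j^{2\rho}\}$. Here I would record that the lemma is implicitly stated for a fixed $S_m$ that still contains the global best arm $1$ — the complementary event $1\notin S_m$ has small probability and is charged separately in the regret decomposition of Algorithm~\ref{algo:active} — so that $\max_{i\in S_m}\mu^{(i)}=\mu^\star$ and OCUCB-n's internal regret coincides with the true contribution $R_{(m)}(S_m)$ of epoch $m$ to the regret. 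This gives the first displayed inequality.

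Second, for the asymptotic claim I would fix $S_m$ and all gaps and let $d_m\to\infty$. Then $\max\{d_m\Delta_i^2/k_{i,\rho}(S_m),1\}=d_m\Delta_i^2/k_{i,\rho}(S_m)$ for $d_m$ large, so the argument of the logarithm is $\Theta(d_m\log d_m)$ and its logarithm equals $\log d_m+O(\log\log d_m)\sim\log d_m$, while the $\sum C_\eta\Delta_i$ term stays bounded; the crude conclusion is $\limsup_{d_m\to\infty}\EE R_{(m)}(S_m)/\log d_m\leq\sum_{i\in S_m:\Delta_i>0}C_\eta/\Delta_i$. Upgrading the constant $C_\eta$ to the sharper $2\eta$ requires the second, asymptotic half of Lattimore's theorem, which shows OCUCB-n is asymptotically optimal up to the factor $\eta$: for $1$-sub-Gaussian rewards the Lai--Robbins constant is $2/\Delta_i$, so OCUCB-n achieves $2\eta/\Delta_i$ per sub-optimal arm. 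Again this is a verbatim quotation of the cited result.

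The main difficulty is therefore purely bookkeeping — making the reduction to a per-epoch standalone instance airtight (reset counters each epoch, no leakage of the free observations into OCUCB-n's state, gaps taken against $\mu^\star$ under the event $1\in S_m$) — rather than any new analysis. Were one to insist on reproving the bound from scratch, the real work would be the peeling / self-normalized concentration argument controlling the growing but data-dependent confidence width $B_{t-1}^{(i)}$, which is the technical heart of \citep{lattimore2016regret}; since Lemma~\ref{lemma:ocucb_regret} is imported verbatim, that argument is not reproduced here.
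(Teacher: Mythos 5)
Your proposal matches the paper exactly: the paper gives no proof of this lemma, stating that it is ``taken from \citep{lattimore2016regret}'' and applying it per epoch to the OCUCB-n subroutine restarted on $S_m$ with horizon $d_m$, which is precisely your reduction. Your additional bookkeeping remarks (counters reset each epoch, free observations routed only to ETC, the event $1\in S_m$ handled separately at cost $\delta_m d_m$) are consistent with how the paper invokes the lemma in the subsequent regret decomposition, so nothing is missing.
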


We denote the upper bound of the regret by $B_m(S_m)$. The mapping $A\mapsto B_m(\{1\}\cup A)$ is increasing with respect to set inclusion, i.e. if $A\subset B \subset [K]$, $B_m(\{1\}\cup A) \leq B_m(\{1\}\cup B)$.
The total regret of our algorithm will be the sum over epochs of these terms. The only unknown at this point is the set $S_m$, which is determined by the Explore-Then-Commit algorithm on the free observations.

There exists $i_m\in[K+1]$ to be determined later such that with probability $1-\delta_m$, $\delta_m$ also to be computed later, ETC has eliminated all arms with $i\geq i_m$ before the start of epoch $m$, and has not eliminated arm 1. In this case the regret is upper bounded by $B_m ([i_m-1])$. Otherwise the regret is bounded by $\delta_m d_m$ . Taking $\delta_m$ of order $\frac{1}{m^2d_m}$ or smaller leads to a finite expected regret for this bad case.

Let $H_{i,\rho}$ and $H_{i,\rho}^{(m)}$ be defined by
\begin{align*}
H_{i,\rho} &= \frac{i}{\Delta_i^2} + \sum_{j=i+1}^K \frac{1}{\Delta_j^{2\rho}\Delta_i^{2(1-\rho)}} \: ,\\
H_{i,\rho}^{(m)} &= \frac{i}{\Delta_i^2} + \sum_{j=i+1}^{i_m}\frac{1}{\Delta_j^{2\rho}\Delta_i^{2(1-\rho)}} \: .
\end{align*}
For all $m$, $H_{i,\rho}^{(m)}\leq H_{i,\rho}$.

\paragraph{Regret under concentration.}
\begin{lemma}
Let $m_i = \min(\lceil\log_2\log_2 T\rceil, \min\{m\in\N \: : \: i\geq i_m\})$. If $S_m \subset [i_m-1]$ and $1\in S_m$ for all $m\in \N$, then the regret of the algorithm is
\begin{align*}
\EE R_T
\leq & \sum_{i=2}^K \frac{C_\eta}{\Delta_i}4 \log \max(\frac{d_{m_i-1}}{H_{i,\rho}^{(m_i)}},\frac{H_{i,\rho}}{\sqrt{H_{i,\rho}^{(m_i)}}})\\
&\qquad + \sum_{i=2}^K \frac{C_\eta}{\Delta_i} (m_i+m_i^2) + C_\eta \sum_{i=2}^K \Delta_i  \: .
\end{align*}
\end{lemma}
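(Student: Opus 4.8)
The plan is to bound $\EE R_T$ epoch by epoch and then re-sum the resulting double series arm by arm. Since OCUCB-n restarts at the beginning of each epoch and the ``good event'' $S_m\subseteq[i_m-1]$, $1\in S_m$ is assumed for \emph{every} $m$, no bad-epoch contribution $\delta_m d_m$ ever arises, so the regret accrued during epoch $m$ is at most $B_m(S_m)$, the bound of Lemma~\ref{lemma:ocucb_regret}. Using that $A\mapsto B_m(\{1\}\cup A)$ is non-decreasing for inclusion together with $1\in S_m\subseteq[i_m-1]$, we get $B_m(S_m)\le B_m([i_m-1])$, hence $\EE R_T\le\sum_m B_m([i_m-1])$, the sum running over the (at most $\lceil\log_2\log_2 T\rceil+O(1)$) epochs reached before $T$. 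Expanding $B_m([i_m-1])$ as a sum over arms and exchanging summation order, arm $i$ appears only in epochs $m$ with $i\le i_m-1$, i.e.\ (also accounting for the horizon cap) only for $m\le m_i-1$; so it suffices to control, for each fixed sub-optimal $i$, the quantity $\sum_{m=0}^{m_i-1}\big(\frac{C_\eta}{\Delta_i}\log(\max\{\frac{d_m\Delta_i^2}{k_{i,\rho}([i_m-1])},1\}\log d_m)+C_\eta\Delta_i\big)$.

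Next I would translate $k_{i,\rho}([i_m-1])$ into $H_{i,\rho}^{(m)}$. Since the gaps are non-decreasing, $\min\{1,\Delta_i^{2\rho}/\Delta_j^{2\rho}\}$ equals $1$ for $j\le i$ and $\Delta_i^{2\rho}/\Delta_j^{2\rho}$ for $j>i$; summing over $j\in[i_m-1]$ and comparing with the definition of $H_{i,\rho}^{(m)}$ gives the clean identity $k_{i,\rho}([i_m-1])=\Delta_i^2 H_{i,\rho}^{(m)}-\Delta_i^{2\rho}/\Delta_{i_m}^{2\rho}$. As $\Delta_{i_m}\ge\Delta_i$ and $H_{i,\rho}^{(m)}\ge i/\Delta_i^2\ge 2$ (rewards in $[0,1]$), this is at least $\frac12\Delta_i^2 H_{i,\rho}^{(m)}$, so $\frac{d_m\Delta_i^2}{k_{i,\rho}([i_m-1])}\le\frac{2d_m}{H_{i,\rho}^{(m)}}$. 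Because later epochs have eliminated more arms, $i_m$ and hence $H_{i,\rho}^{(m)}$ are non-increasing in $m$, so for $m\le m_i-1$ we may replace $H_{i,\rho}^{(m)}$ by the smaller $H_{i,\rho}^{(m_i)}$ (and $H_{i,\rho}^{(m_i)}\le H_{i,\rho}$); also $d_{m+1}=d_m^2$, so $\log d_m=2^{-(m_i-1-m)}\log d_{m_i-1}$.

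It then remains to bound $\sum_{m=0}^{m_i-1}\log(\max\{2d_m/H_{i,\rho}^{(m)},1\}\log d_m)$, which I would split as $\sum_m\log\max\{2d_m/H_{i,\rho}^{(m)},1\}+\sum_m\log\log d_m$. For the second sum, $\log\log d_m=\log(2^m\log 2)\le m\log 2$, so it is at most $\frac12 m_i^2\log 2$, producing (after dividing by $\Delta_i$) the $m_i^2$ term; the per-epoch constants $C_\eta\Delta_i$ contribute $C_\eta m_i\Delta_i$, and the $\log 2$ factors picked up along the way contribute $O(m_i)$ per arm, all absorbed (using $\Delta_i\le 1$) into the $\frac{C_\eta}{\Delta_i}(m_i+m_i^2)$ and $C_\eta\sum_i\Delta_i$ terms. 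For the first sum, only epochs with $2d_m>H_{i,\rho}^{(m)}$ contribute; on those, replace $H_{i,\rho}^{(m)}$ by $H_{i,\rho}^{(m_i)}$, use the geometric series $\sum_m\log d_m\le 2\log d_{m_i-1}$ coming from $d_{m+1}=d_m^2$, and finish with a short case distinction according to whether $d_{m_i-1}$ dominates a fixed power of $H_{i,\rho}^{(m_i)}$ or not; this collapses the sum (up to the absorbed constants) into $4\log\max\{d_{m_i-1}/H_{i,\rho}^{(m_i)},\,H_{i,\rho}/\sqrt{H_{i,\rho}^{(m_i)}}\}$, the second entry of the max governing precisely the regime where $d_{m_i-1}$ is small. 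Summing over $i=2,\dots,K$ yields the claim.

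The main obstacle is this last step: extracting the clean constant $4$ and exactly the form $\log\max\{d_{m_i-1}/H_{i,\rho}^{(m_i)},H_{i,\rho}/\sqrt{H_{i,\rho}^{(m_i)}}\}$ from the epoch sum, rather than a crude $m_i\cdot\log(\cdot)$ estimate. It relies on simultaneously exploiting the doubly-exponential growth of $d_m$ (so that the final active epoch dominates and the geometric tail only costs a factor $2$) and the monotonicity of $H_{i,\rho}^{(m)}$ in $m$, plus the case analysis on the size of $d_{m_i-1}$ relative to $H_{i,\rho}^{(m_i)}$. The remaining ingredients --- the epoch decomposition via Lemma~\ref{lemma:ocucb_regret} and the monotonicity of $B_m$, the $k_{i,\rho}$-to-$H_{i,\rho}^{(m)}$ identity, and the $\log\log$/constant bookkeeping --- are routine.
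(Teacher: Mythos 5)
Your outline follows the paper's own proof quite closely: per-epoch decomposition through Lemma~\ref{lemma:ocucb_regret} and the monotonicity of $B_m$, conversion of $k_{i,\rho}$ into $H_{i,\rho}^{(m)}$ (your identity, which keeps track of the missing $j=i_m$ term and pays a factor $2$, is in fact more careful than the paper, which silently replaces $k_{i,\rho}(S_m)$ by $\Delta_i^2 H_{i,\rho}^{(m)}$), monotonicity of $H_{i,\rho}^{(m)}$ in $m$, the bound $\sum_{m\le m_i}\log\log d_m\le \tfrac{\log 2}{2}m_i(m_i+1)$, and the telescoping of $\sum_m \log d_m$ permitted by $d_{m+1}=d_m^2$; the bookkeeping you defer (the extra $\log 2$'s, the $m_i$ copies of $C_\eta\Delta_i$) is indeed absorbable into $\frac{C_\eta}{\Delta_i}(m_i+m_i^2)$, as the paper itself implicitly does.

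The gap is exactly where you place it: the collapse into $4\log\max\{d_{m_i-1}/H_{i,\rho}^{(m_i)},\,H_{i,\rho}/\sqrt{H_{i,\rho}^{(m_i)}}\}$ is asserted, and the case split you propose (``does $d_{m_i-1}$ dominate a fixed power of $H_{i,\rho}^{(m_i)}$'') does not readily deliver it. The paper instead sets $m_{i,0}=\min\{m:\ d_m\ge H_{i,\rho}^{(m)}\}$ and splits on $m_i-m_{i,0}$. When $m_i\ge m_{i,0}+2$, it uses $\sum_{m=m_{i,0}}^{m_i}\log d_m=\log d_{m_i+1}-\log d_{m_{i,0}}\le \log d_{m_i+1}-\log H_{i,\rho}^{(m_i)}$, i.e.\ the first active epoch buys one extra power of $H$, so the whole sum is at most $\log\bigl(d_{m_i+1}/(H_{i,\rho}^{(m_i)})^{m_i-m_{i,0}+2}\bigr)\le 4\log(d_{m_i-1}/H_{i,\rho}^{(m_i)})$, using $d_{m_i+1}^{1/4}=d_{m_i-1}$. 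When $m_i-m_{i,0}\le 1$, the decisive observation is $d_{m_{i,0}-1}<H_{i,\rho}^{(m_{i,0}-1)}\le H_{i,\rho}$: the epoch just below the contributing range is controlled by the \emph{full} $H_{i,\rho}$, so $d_{m_i}$ is a second or fourth power of $H_{i,\rho}$, which is what produces the branch $H_{i,\rho}/\sqrt{H_{i,\rho}^{(m_i)}}$. Your split never brings in the full $H_{i,\rho}$, and carried out literally (bounding each contributing epoch by $\log(2d_{m_i-1}/H_{i,\rho}^{(m_i)})$ in the ``small $d_{m_i-1}$'' regime) it either loses the clean constant $4$ or forces you back to precisely the telescoping-plus-$m_{i,0}$ device above. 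So the skeleton is right and all surrounding steps are fine, but this one step --- the only non-routine one in the argument --- still has to be supplied.
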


\begin{proof}
From Lemma~\ref{lemma:ocucb_regret}, the sum of the $i$ terms over all epochs is
\begin{align*}
B_i(T)
&\leq C_\eta\sum_{m\leq m_i} \Delta_i + \frac{1}{\Delta_i}\log\max\{1,\frac{d_m}{H_{i,\rho}^{(m)}}\}\\
&\qquad \qquad + \frac{1}{\Delta_i}\log\log(d_m) \: .
\end{align*}
Let $m_{i,0} = \min\{m\in\N \: : \: d_m \geq H_{i,\rho}^{(m)}\}$.
\begin{align*}
\sum_{m=0}^{m_i}\log(\max\{1,\frac{d_m}{H_{i,\rho}^{(m)}}\})
&= \sum_{m=m_{i,0}}^{m_i}\log\frac{d_m}{H_{i,\rho}^{(m)}}\\
&\leq \sum_{m=m_{i,0}}^{m_i}\log\frac{d_m}{H_{i,\rho}^{(m_i)}}\: ,
\end{align*}
For $a,b\geq 0$, such that $d_a \geq H_{i,\rho}^{(a)}$,
\begin{align*}
\sum_{m=a}^b \log d_m &= \log(2)(2^{b+1} - 2^a)\\
&= \log d_{b+1} - \log d_a\\
&\leq \log d_{b+1} - \log H_{i,\rho}^{(a)}\\
&\leq \log d_{b+1} - \log H_{i,\rho}^{(b)} \: ,
\end{align*}
For $H_{i,\rho}^{(m_i)} \geq 1$, if  $m_i\geq m_{i,0}+2$,
\begin{align*}
\sum_{m=0}^{m_i}\log(\max\{1,\frac{d_m}{H_{i,\rho}^{(m)}}\})
&\leq \log \frac{d_{m_i+1}}{(H_{i,\rho}^{(m_i)})^{m_i - m_{i,0}+2}}\\
&\leq \log \frac{d_{m_i+1}}{(H_{i,\rho}^{(m_i)})^4}\\
&=    4\log \frac{d_{m_i-1}}{H_{i,\rho}^{(m_i)}} \: .
\end{align*}
If $m_i=m_{i,0}+1$, we have
\begin{align*}
d_{m_i} \leq d_{m_{i,0}-1}^4 \leq (H_{i,\rho}^{(m_{i,0}-1)})^4
\end{align*}
and the sum take the form
\begin{align*}
\sum_{m=0}^{m_i}\log(\max\{1,\frac{d_m}{H_{i,\rho}^{(m)}}\}) \leq 2\log(\frac{(H_{i,\rho})^2}{H_{i,\rho}^{(m_i)}}) \: .
\end{align*}
If $m_i = m_{i,0}$ then the sum is reduced to one term and $d_{m_i} \leq (H_{i,\rho}^{(m_{i,0}-1)})^2$. We obtain
\begin{align*}
\sum_{m=0}^{m_i}\log(\max\{1,\frac{d_m}{H_{i,\rho}^{(m)}}\}) \leq \log(\frac{(H_{i,\rho})^2}{H_{i,\rho}^{(m_i)}}) \: .
\end{align*}

And the $\log\log$ term is
\begin{align*}
\sum_{m=0}^{m_i} \log \log d_m \leq m_i(m_i+1) \frac{\log 2}{2} \: .
\end{align*}
\end{proof}

\paragraph{Explore-Then-Commit.}

We run the Explore-Then-Commit algorithm presented as Algorithm~\ref{algo:etc}.

The aim of the ETC subroutine during epoch $m$ is to gather information on each arm to eliminate bad arms from epoch $m+1$. We prove that it achieves this goal by proving two facts:
\begin{enumerate}
\item the total number of free observations available during epoch $m$ is big enough with respect to its expectation $\epsilon d_m$,
\item if the total number of free observations is greater than a threshold $\tau_i$, then all arms $j\geq i$ are eliminated.
\end{enumerate}

\begin{lemma}\label{lemma:etc_enough_observations}
Let $\epsilon\in(0,\frac{1}{2}]$ be the probability of getting a free information and suppose that $\frac{d_m}{\log (d_{m+1}\log d_{m+1})}\geq \frac{8}{\epsilon}$. With probability greater than $1 - \frac{1}{d_{m+1}\log d_{m+1}}$,
\begin{align*}
\sum_{t\in (m)}Z_t \geq \frac{1}{2}\epsilon d_m \: .
\end{align*}
\end{lemma}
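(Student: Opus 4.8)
The plan is to treat $\sum_{t\in(m)} Z_t$ as a sum of $d_m$ independent $\Ber(\epsilon)$ random variables. In the static random setting the $Z_t$ are i.i.d.\ with mean $\epsilon$, and epoch $m$ has \emph{deterministic} length $d_m$, so this sum has mean exactly $\epsilon d_m$ and no adaptive stopping is involved. I would then apply the lower-tail Chernoff bound of Lemma~\ref{lemma:binary_concentration_variance} directly.

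Concretely, set $n = d_m$, $p = \epsilon$ and $\alpha = \epsilon/2$, and note that the complement of the desired event is $\{\sum_{t\in(m)} Z_t < \tfrac12 \epsilon d_m\} = \{S_n - np \leq -n\alpha\}$. Since $\epsilon \leq \tfrac12$ by assumption, the second inequality of Lemma~\ref{lemma:binary_concentration_variance} applies and yields
\begin{align*}
\PP\Big\{ \sum_{t\in(m)} Z_t < \tfrac12 \epsilon d_m \Big\}
\leq \exp\Big( -\frac{d_m (\epsilon/2)^2}{2\epsilon(1-\epsilon)} \Big)
= \exp\Big( -\frac{\epsilon d_m}{8(1-\epsilon)} \Big)
\leq \exp\Big( -\frac{\epsilon d_m}{8} \Big),
\end{align*}
where the last step uses $1-\epsilon \leq 1$.

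It then remains only to feed in the hypothesis $\frac{d_m}{\log(d_{m+1}\log d_{m+1})} \geq \frac{8}{\epsilon}$, i.e.\ $\frac{\epsilon d_m}{8} \geq \log(d_{m+1}\log d_{m+1})$, so that the right-hand side above is bounded by $\exp\!\big(-\log(d_{m+1}\log d_{m+1})\big) = \frac{1}{d_{m+1}\log d_{m+1}}$; taking complements gives the stated bound. I do not expect any real obstacle here: the only point deserving care is the observation that the $Z_t$ over epoch $m$ are genuinely fresh, mutually independent Bernoullis with a common mean $p=\epsilon$ on a window of fixed length, which is precisely what makes Lemma~\ref{lemma:binary_concentration_variance} applicable without any martingale-type refinement.
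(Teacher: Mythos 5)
Your proof is correct and follows essentially the same route as the paper: both apply the lower-tail Chernoff bound of Lemma~\ref{lemma:binary_concentration_variance} to the $d_m$ i.i.d.\ $\Ber(\epsilon)$ variables of epoch $m$ and then invoke the hypothesis $\frac{d_m}{\log(d_{m+1}\log d_{m+1})}\geq \frac{8}{\epsilon}$; the only cosmetic difference is that you plug in the deviation $\tfrac12\epsilon d_m$ directly, whereas the paper uses the deviation $\sqrt{2\epsilon d_m\log(d_{m+1}\log d_{m+1})}$ and checks it is at most $\tfrac12\epsilon d_m$, which is the same computation arranged in the opposite order.
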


\begin{proof}
The constraint $\frac{d_m}{\log (d_{m+1}\log d_{m+1})}\geq \frac{8}{\epsilon}$ is equivalent to $\frac{1}{2}\epsilon d_m \leq \epsilon d_m - \sqrt{2\epsilon d_m \log (d_{m+1}\log d_{m+1})}$. 
From Lemma~\ref{lemma:binary_concentration_variance},
\begin{align*}
&\PP\left\{ \sum_{t\in (m)}Z_t \leq \epsilon d_m - \sqrt{2\epsilon d_m \log (d_{m+1}\log d_{m+1})} \right\}\\
&\leq  \exp\left( - \frac{1}{2d_m\epsilon(1-\epsilon)} (\sqrt{2\epsilon d_m \log(d_m)})^2 \right)\\
&=  \frac{1}{d_{m+1}\log (d_{m+1})} \: .
\end{align*}
\end{proof}

\begin{lemma}
Let the total number of free observations during epoch $m\in\N$ be $\tau \leq d_m$. Then with probability greater than $1 - 12K\frac{d_m}{T}\sqrt{\log(\frac{T}{d_m})}$ the ETC algorithm with $\alpha = 1$ and horizon $T$ ensures that $1\in S_{m+1}$.

Furthermore, if
\begin{align*}
\tau \geq K + C_\alpha (i\frac{\log(T\Delta_i^2)}{\Delta_i^2} + \sum_{j=i+1}^K\frac{\log(T\Delta_j^2)}{\Delta_j^2} ) \: ,
\end{align*}
with $C_\alpha = 8(\alpha+1)$, then with probability greater than $1 - 12K\frac{d_m}{T}\sqrt{\log(\frac{T}{d_m})}$, the ETC algorithm with parameter $\alpha$ and horizon $T$ ensures that
\begin{align*}
1 \in S_{m+1} \: ,\quad
j\geq i \Rightarrow j\notin S_{m+1} \: .
\end{align*}
\end{lemma}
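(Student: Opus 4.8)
The statement is a high-probability guarantee for the ETC subroutine (Algorithm~\ref{algo:etc}) run with horizon $T$ on $\tau$ free observations. The proof proceeds in two parts matching the two conclusions: (i) a ``correctness'' part showing that the optimal arm $1$ is never eliminated, and (ii) a ``progress'' part showing that once enough observations have accumulated, every arm $j\geq i$ has been discarded. Both parts hinge on a single good event $\mathcal{G}$ on which all empirical means stay inside their confidence windows; the factor $12K\frac{d_m}{T}\sqrt{\log(T/d_m)}$ in the failure probability will come from a union bound over the $K$ arms of the maximal concentration inequality in Lemma~\ref{lemma:martingale_concentration}, applied with $\delta = d_m/T$ (note $\tau\leq d_m$, so the relevant time range is $t\leq d_m \leq T$, and $\delta\leq 0.2$ holds for the regime of interest).

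\textbf{Step 1: define the good event and bound its failure probability.} For each arm $k\in[K]$, let $\overline{X}_s^{(k)}$ denote its empirical mean after $s$ free observations and set $Z_s^{(k)} = X_s^{(k)} - \mu^{(k)}$, a $1$-sub-Gaussian martingale difference sequence (rewards are in $[0,1]$, hence $1/4$-sub-Gaussian; I keep $\sigma^2=1$ for slack). Apply Lemma~\ref{lemma:martingale_concentration} with $\delta = d_m/T$ to the sequences $(Z_s^{(k)})$ and $(-Z_s^{(k)})$: with probability at least $1 - 12\,\frac{d_m}{T}\sqrt{\log(T/d_m)}$ for each arm, we have $|\overline{X}_s^{(k)} - \mu^{(k)}| < \sqrt{\frac{2}{s}\log(\frac{T}{\delta s})} = \sqrt{\frac{2}{s}\log(\frac{T^2}{d_m s})}$ simultaneously for all $s\leq d_m$; since $\tau\leq d_m$ this covers all $s$ that occur. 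Wait---the confidence radius in Algorithm~\ref{algo:etc} is $\sqrt{\frac{2\alpha}{s}\log(\frac{T}{s})}$, which is smaller than $\sqrt{\frac{2}{s}\log(\frac{T^2}{d_m s})}$ only if $\alpha\log(T/s)\geq \log(T^2/(d_m s))$; this is where I must be slightly careful, and the cleanest route is to invoke Lemma~\ref{lemma:martingale_concentration} directly with the horizon parameter equal to $T$ itself (not $T^2/d_m$), i.e. with $\delta=1$ absorbed, reading off $\PP\{\exists s\leq T,\ \overline{Z}_s \geq \sqrt{\frac{2}{s}\log(T/s)}\}$; the bound $12\,\frac{d_m}{T}\sqrt{\log(T/d_m)}$ then follows by restricting attention to $s\leq d_m$ and re-peeling, or equivalently by the monotonicity of the tail. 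A union bound over the $K$ arms gives $\PP(\mathcal{G}^c)\leq 12K\frac{d_m}{T}\sqrt{\log(T/d_m)}$.

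\textbf{Step 2: correctness ($1\in S_{m+1}$).} On $\mathcal{G}$, for any $s$ and any $j$ still in $S$, $\hat\mu_s^{(1)} + \sqrt{\frac{2\alpha}{s}\log(T/s)} > \mu^{(1)} \geq \mu^{(j)} > \hat\mu_s^{(j)} - \sqrt{\frac{2\alpha}{s}\log(T/s)}$, so the elimination test in Algorithm~\ref{algo:etc} can never fire against arm $1$ (arm $1$ is always the $\arg\max$ candidate, so the test ``$\hat\mu_s^{(1)}+\mathrm{rad}< \max_j \hat\mu_s^{(j)}-\mathrm{rad}$'' fails). Hence $1\in S_{m+1}$.

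\textbf{Step 3: progress (arms $j\geq i$ eliminated).} On $\mathcal{G}$, an arm $j$ with gap $\Delta_j$ is eliminated as soon as the number $s$ of collected observations satisfies $2\sqrt{\frac{2\alpha}{s}\log(T/s)} < \Delta_j$, i.e. $s \geq 8(\alpha+1)\frac{\log(T\Delta_j^2)}{\Delta_j^2}$ up to the usual $W$-function bookkeeping (this is the standard ETC stopping-time computation; $\log(T/s)\leq \log(T\Delta_j^2) + O(1)$ once $s\gtrsim 1/\Delta_j^2$, and $C_\alpha = 8(\alpha+1)$ absorbs the constant). Because ETC observes \emph{all} arms in the current set $S$ at each round, $s$ equals $\tau$ minus the observations ``wasted'' on at most $K$ arms that get removed at various times; summing the per-arm thresholds of arms $i, i+1,\dots,K$ (ordered by increasing gap, and using that the first $i$ arms each need at most $\log(T\Delta_i^2)/\Delta_i^2$ since $\Delta_1,\dots,\Delta_i\geq$ nothing smaller than $\Delta_i$ among the survivors---hence the $i\,\frac{\log(T\Delta_i^2)}{\Delta_i^2}$ term) together with the $+K$ initialization slack gives exactly the hypothesized lower bound on $\tau$. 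Thus if $\tau$ meets that bound, by the time all $\tau$ observations are consumed every arm $j\geq i$ has met its elimination threshold, so $j\notin S_{m+1}$.

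\textbf{Main obstacle.} The delicate point is the bookkeeping that reconciles the confidence radius $\sqrt{\frac{2\alpha}{s}\log(T/s)}$ used in Algorithm~\ref{algo:etc} with the coverage guarantee of Lemma~\ref{lemma:martingale_concentration} while keeping the failure probability exactly $12K\frac{d_m}{T}\sqrt{\log(T/d_m)}$ and not something with an extra $\log$ or a worse power of $d_m/T$; this forces the right choice of which ``horizon'' and which ``$\delta$'' to feed the lemma. The second, milder obstacle is turning the per-arm inequality $2\,\mathrm{rad}(s)<\Delta_j$ into the clean closed form $s\geq 8(\alpha+1)\log(T\Delta_j^2)/\Delta_j^2$ via the Lambert-$W$ estimates quoted earlier, and then checking that the \emph{sum} of these thresholds over $j=i,\dots,K$---with the $i/\Delta_i^2$ lumped term for the better arms and the $+K$ for initialization---matches the stated condition on $\tau$; this is routine but must be done carefully so the constants line up with $C_\alpha=8(\alpha+1)$.
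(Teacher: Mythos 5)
Your architecture is the same as the paper's: a uniform concentration event built from Lemma~\ref{lemma:martingale_concentration} plus a union bound over the arms, on which arm $1$ is never eliminated, and Lambert-$W$ elimination thresholds combined with a budget-accounting argument over the round-robin structure of Algorithm~\ref{algo:etc}. But the step you yourself flag as the main obstacle is left genuinely unresolved, and its resolution is exactly the piece of the paper's proof you needed: apply Lemma~\ref{lemma:martingale_concentration} with horizon parameter equal to $d_m$ (not $T$) and $\delta = d_m/T$. Since $\tau\le d_m$ and, for $s\le d_m$, $\log\bigl(\tfrac{d_m}{(d_m/T)\,s}\bigr)=\log\tfrac{T}{s}$, this gives precisely the algorithm's radius $\sqrt{\tfrac{2}{s}\log(T/s)}$ for all relevant $s$, with failure probability $6\tfrac{d_m}{T}\sqrt{\log(T/d_m)}$ per tail and per arm, hence $12K\tfrac{d_m}{T}\sqrt{\log(T/d_m)}$ after the union bound (this is the paper's event $\mathcal{C}_{\tau,1}$; the requirement $\delta\le 0.2$ holds since $T=d_{m+1}^{3/2}\log d_{m+1}$). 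Your two candidate applications do not work: horizon $T$ with $\delta=d_m/T$ yields the inflated radius $\sqrt{\tfrac{2}{s}\log\tfrac{T^2}{d_m s}}$, which does not dominate the algorithm's test, and $\delta=1$ lies outside the lemma's range and makes the bound $6\delta\sqrt{\log(1/\delta)}$ vacuous; ``re-peeling'' would amount to reproving the lemma rather than invoking it.

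Second, your sufficient condition for eliminating arm $j$, namely $2\sqrt{\tfrac{2\alpha}{s}\log(T/s)}<\Delta_j$, ignores the deviations of the empirical means inside their confidence windows: on the good event the empirical gap is only guaranteed to be at least $\Delta_j-2\sqrt{\tfrac{2c}{s}\log(T/s)}$ (with $c=1$), so elimination is guaranteed only once $2\bigl(\sqrt{\alpha}+\sqrt{c}\bigr)\sqrt{\tfrac{2}{s}\log(T/s)}<\Delta_j$. This is precisely why the paper introduces the auxiliary parameter $a$ with $a^2\alpha\ge c$ and argues by a three-case split, and it is where the constant $C_\alpha$ comes from, so you cannot simply assert that $8(\alpha+1)$ ``absorbs the constant''. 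Finally, the budget accounting should be phrased as in the paper: arms $1,\dots,i-1$ do not carry thresholds of their own; rather, because every surviving arm is observed once per round, each of arms $1,\dots,i$ has been observed at most as many times as arm $i$ at the moment arm $i$ is discarded (giving the $i/\Delta_i^2$ term), while each already-removed arm $j>i$ cost at most its own threshold, plus $K$ for rounding; the paper formalizes this by eliminating iteratively from arm $K$ upwards. With these repairs your argument coincides with the paper's proof.
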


\begin{proof}
The proof of this statement is the object of section~\ref{section:ETC}.
\end{proof}

\begin{lemma}\label{lemma:etc_total_obs_to_S_m}
Let $\epsilon\in(0,\frac{1}{2}]$ be the probability of getting a free information. Let the epoch number verify $2^m-m\geq \log_2(\frac{1}{\epsilon}H_{i,1}10C_\alpha\log 2)$ for some $i\in[K]$. Let $\delta_{m+1}$ be defined by 
\begin{align*}
\delta_{m+1} = \frac{1}{d_{m+1}}\frac{1+12K\sqrt{\log(d_{m+1}\log d_{m+1})}}{\log d_{m+1}} \:,
\end{align*}
With probability greater than $1 - \delta_{m+1}$, the ETC algorithm with $\alpha = 1$ and $T = d_{m+1}^{3/2} \log(d_{m+1})$ ensures that
\begin{align*}
1 \in S_{m+1} \: ,\quad
j\geq i \Rightarrow j\notin S_{m+1} \: .
\end{align*}
\end{lemma}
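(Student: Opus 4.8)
The plan is to obtain this lemma purely by chaining the two preceding results --- Lemma~\ref{lemma:etc_enough_observations} on how many free observations epoch $m$ delivers, and the lemma bounding the failure probability of Explore-Then-Commit once it is fed enough observations --- and then checking that the stated epoch condition $2^m - m \ge \log_2(\tfrac1\epsilon H_{i,1}10C_\alpha\log 2)$ is exactly what makes the numbers line up. First I would record the arithmetic of the epoch lengths: $d_m = 2^{2^m}$, so $\log d_m = 2^m\log 2$; $d_{m+1} = d_m^2$, so $\log d_{m+1} = 2\log d_m$; and with $T = d_{m+1}^{3/2}\log d_{m+1}$ one has $\frac{d_m}{T} = \frac{1}{d_{m+1}\log d_{m+1}}$, $\log(T/d_m) = \log(d_{m+1}\log d_{m+1})$, and $\log T \le 2\log d_{m+1} = 4\log d_m$ for all but the very smallest values of $m$.

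Then I would verify the hypotheses of Lemma~\ref{lemma:etc_enough_observations}: $\epsilon\le 1/2$ is given, and $\frac{d_m}{\log(d_{m+1}\log d_{m+1})}\ge \frac{8}{\epsilon}$ holds because the numerator is doubly exponential in $m$ while the denominator is only singly exponential, and the epoch condition already forces $2^m$ to exceed $\log_2(1/\epsilon)$ (among other things). That lemma then yields: with probability at least $1-\frac{1}{d_{m+1}\log d_{m+1}}$, the number $\tau$ of free observations gathered during epoch $m$ satisfies $\tau\ge\frac12\epsilon d_m$, and $\tau\le d_m$ trivially since there are only $d_m$ stages in the epoch.

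The crux of the plan is showing that $\frac12\epsilon d_m$ clears the ETC threshold $\tau_i := K + C_\alpha\big(i\frac{\log(T\Delta_i^2)}{\Delta_i^2} + \sum_{j=i+1}^K\frac{\log(T\Delta_j^2)}{\Delta_j^2}\big)$. Using $\Delta_j\le 1$ to bound $\log(T\Delta_j^2)\le\log T\le 4\cdot 2^m\log 2$, and recognising that $\frac{i}{\Delta_i^2}+\sum_{j>i}\frac{1}{\Delta_j^2}$ is exactly $H_{i,1}$ (the $\rho=1$ instance), I get $\tau_i \le K + 4C_\alpha\log 2\,H_{i,1}\,2^m$. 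Meanwhile the epoch hypothesis rearranges to $\frac12\epsilon d_m = \frac12\epsilon\, 2^{2^m} \ge 5C_\alpha\log 2\,H_{i,1}\,2^m$; since $H_{i,1}\ge 1$ and $2^m\log 2 = \log d_m \ge \log d_1$, the gap between the constants $5$ and $4$ absorbs the additive $K$ (which is at most a constant multiple of $\max_i H_{i,1}$), giving $\frac12\epsilon d_m \ge \tau_i$. I expect this to be the only place where real care is required --- matching the $10C_\alpha\log 2$ in the hypothesis against the $C_\alpha$ and logarithmic factors floating around in $\tau_i$ and $\log T$, and isolating the few small $m$ that must be treated by hand --- but it is bookkeeping, not a conceptual difficulty.

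Finally I would assemble everything with a union bound. The free-arrival process $(Z_t)$ is independent of the observed reward values, and under the good concentration event used in the ETC proof having \emph{more} observations only makes ETC discard more arms (and never arm $1$, whose empirical mean stays concentrated); so, conditioning on $\{\tau\ge\tau_i\}$ and applying the ETC lemma with horizon $T$, we get $1\in S_{m+1}$ and $j\ge i\Rightarrow j\notin S_{m+1}$ with probability at least $1 - 12K\frac{d_m}{T}\sqrt{\log(T/d_m)}$. Substituting $\frac{d_m}{T} = \frac{1}{d_{m+1}\log d_{m+1}}$ and $\log(T/d_m) = \log(d_{m+1}\log d_{m+1})$ and adding the two failure probabilities gives an overall failure probability at most
\begin{align*}
\frac{1}{d_{m+1}\log d_{m+1}} + \frac{12K\sqrt{\log(d_{m+1}\log d_{m+1})}}{d_{m+1}\log d_{m+1}} = \delta_{m+1},
\end{align*}
which is exactly the claimed bound.
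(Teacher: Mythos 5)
Your proposal is correct and follows essentially the same route as the paper: the epoch condition $2^m-m\geq\log_2(\tfrac1\epsilon H_{i,1}10C_\alpha\log 2)$ is rearranged to show $\tfrac12\epsilon d_m$ exceeds both the $8/\epsilon$ requirement of Lemma~\ref{lemma:etc_enough_observations} and the ETC elimination threshold, and the two failure probabilities are combined by a union bound into exactly $\delta_{m+1}$. Your bookkeeping via $\log(T\Delta_j^2)\leq\log T\leq 4\cdot 2^m\log 2$ and $H_{i,1}\geq K$ is a clean (indeed slightly tidier) version of the paper's chain through $\log(d_{m+1}^{5/2})$, and no separate treatment of small $m$ turns out to be needed.
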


\begin{proof}
The constraint on $m$ ensures that
\begin{align*}
2^m-m &\geq \log_2(\frac{1}{\epsilon}H_{i,1}10C_\alpha\log 2)\\
\Rightarrow \frac{2^{2^m}}{2^m\log 2} &\geq \frac{1}{\epsilon}H_{i,1}10C_\alpha\\
\Leftrightarrow \frac{d_m}{\log d_m} &\geq \frac{1}{\epsilon}H_{i,1}10C_\alpha\\
\Leftrightarrow \frac{1}{2}\epsilon d_m &\geq  C_\alpha H_{i,1} \log(d_{m+1}^{5/2})\\
\Rightarrow \frac{1}{2}\epsilon d_m &\geq C_\alpha H_{i,1} \log(e d_{m+1}^{3/2}\log d_{m+1})\\
\Rightarrow \frac{1}{2}\epsilon d_m &\geq K+C_\alpha H_{i,1} \log(d_{m+1}^{3/2}\log d_{m+1}\Delta_i^2) \: .
\end{align*}
Now remark that this proves also that
\begin{align*}
\frac{d_m}{\log(d_{m+1}\log d_{m+1})} \geq \frac{8}{\epsilon} \: ,
\end{align*}
hence Lemma~\ref{lemma:etc_enough_observations} is fully applicable. With probability $1-\frac{1}{d_{m+1}\log d_{m+1}}$, the total number of free observations during epoch $m$ is greater than $\frac{1}{2}\epsilon d_m$. When this happens, the hypotheses of Lemma~\ref{lemma:etc_total_obs_to_S_m} are verified and with probability greater than $1 - \frac{1}{d_{m+1}}\frac{12K\sqrt{\log(d_{m+1}\log d_{m+1})}}{\log d_{m+1}}$, 
\begin{align*}
1 \in S_{m+1} \: ,\quad
j\geq i \Rightarrow j\notin S_{m+1} \: .
\end{align*}
\end{proof}

For any epoch $m$, we can now define $i_m \in [K]$ such that for all $j\geq i_m$ and $m'\geq m+1$, $j\notin S_{m'}$ .
\begin{align*}
i_m = \min&\{ i\in[K+1] \: : \\
& 2^m-m \geq \log_2(\frac{1}{\epsilon}H_{i,1}10C_\alpha\log 2) \}
\end{align*}
with the convention that the minimum has value $K+1$ if the set is empty.

\paragraph{Putting all together.}
Let $\mathcal{C}$ be the event that $1\in S_m$ and $S_{m+1} \subset [i_m-1]$ for all $m$. Then
\begin{align*}
\PP\{\overline{\mathcal{C}}\}\leq&\sum_{m=0}^{\lceil\log_2\log_2 T\rceil} d_{m+1} \delta_{m+1}\\
= &\sum_{m=0}^{\lceil\log_2\log_2 T\rceil} \frac{1+12K\sqrt{\log(d_{m+1}\log d_{m+1})}}{\log d_{m+1}}\\
\leq & 51 K \: .
\end{align*}
The total regret is
\begin{align*}
\EE R_T & \leq
C_\eta \sum_{i=2}^K \frac{1}{\Delta_i} 4\log \max(\frac{d_{m_i-1}}{H_{i,\rho}^{(m_i)}},\frac{H_{i,\rho}}{\sqrt{H_{i,\rho}^{(m_i)}}})\\
&\quad + C_\eta \sum_{i=2}^K \frac{1}{\Delta_i} (m_i+m_i^2) + C_\eta \sum_{i=2}^K \Delta_i + 51 K  \: .
\end{align*}

\begin{lemma}\label{lemma:epoch_m_i_upper_bound}
The epoch length $d_{m_i-1}$ verify
\begin{align*}
d_{m_i-1} \leq \frac{25C_\alpha}{\log 2}\frac{H_{i,1}}{\epsilon}\log(\frac{10 C_\alpha H_{i,1}}{\epsilon \log 2})
\end{align*}
\end{lemma}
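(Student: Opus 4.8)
The plan is to unwind the definitions of $m_i$ and $i_m$ into a clean implicit inequality of the form $d/\log d < b$ for $d = d_{m_i-1}$, and then make that bound explicit.

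First I would record that, for $i\in\{2,\dots,K\}$, the constants $H_{i,1}$ are non-increasing in $i$: since $\Delta_2\le\dots\le\Delta_K$ one computes $H_{i+1,1}-H_{i,1}=i\big(\Delta_{i+1}^{-2}-\Delta_i^{-2}\big)\le 0$. Hence $\{i:2^m-m\ge \log_2(\tfrac1\epsilon H_{i,1}10C_\alpha\log 2)\}$ is an upper set, so ``$i\ge i_m$'' is \emph{equivalent} to ``$2^m-m\ge \log_2(\tfrac1\epsilon H_{i,1}10C_\alpha\log 2)$''. Writing $L:=\log_2\!\big(\tfrac{10C_\alpha H_{i,1}\log 2}{\epsilon}\big)$, the untruncated $m_i$ is therefore the least $m$ with $2^m-m\ge L$, and it exists because $2^m-m\to\infty$; the truncation by $\lceil\log_2\log_2 T\rceil$ only lowers $m_i$ and $d_m$ is increasing, so it suffices to bound $d_{m_i-1}$ for this untruncated value. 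By minimality $2^{m_i-1}-(m_i-1)<L$, which after multiplying by $\log 2$ and using $\log d_m=2^m\log 2$ becomes
\begin{align*}
\frac{d_{m_i-1}}{\log d_{m_i-1}} < \frac{10C_\alpha H_{i,1}}{\epsilon}\: .
\end{align*}

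Next I would make this explicit. If $m_i\le 2$ the claim is trivial, since then $d_{m_i-1}\le d_1=4$ while the stated right-hand side is far larger ($C_\alpha=8(\alpha+1)\ge 16$, $H_{i,1}\ge 2$ for rewards in $[0,1]$, $\epsilon\le 1$). So assume $m_i\ge 3$. A one-line induction gives $m\le 2^{m-1}$ for $m\ge 2$, hence $2^m-m\ge 2^{m-1}$; at $m=m_i-1$ this together with $2^{m_i-1}-(m_i-1)<L$ yields $2^{m_i-2}<L$, i.e. $2^{m_i-1}<2L$, so
\begin{align*}
\log d_{m_i-1}=2^{m_i-1}\log 2 < 2L\log 2 = 2\log\!\Big(\tfrac{10C_\alpha H_{i,1}\log 2}{\epsilon}\Big)\: .
\end{align*}
Feeding this back,
\begin{align*}
d_{m_i-1} &< \frac{10C_\alpha H_{i,1}}{\epsilon}\,\log d_{m_i-1}
< \frac{20C_\alpha H_{i,1}}{\epsilon}\log\!\Big(\tfrac{10C_\alpha H_{i,1}\log 2}{\epsilon}\Big)\\
&\le \frac{25C_\alpha}{\log 2}\,\frac{H_{i,1}}{\epsilon}\,\log\!\Big(\tfrac{10C_\alpha H_{i,1}}{\epsilon\log 2}\Big)\: ,
\end{align*}
where the last step uses only $\log 2<1$ (so that $20\le 25/\log 2$ and the argument of the logarithm grows). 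This is the asserted inequality.

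I expect no serious obstacle. The two steps with any content are the equivalence ``$i\ge i_m\Leftrightarrow 2^m-m\ge L$'', which hinges on the monotonicity of $H_{i,1}$ in $i$ (itself a consequence of $\Delta_2\le\dots\le\Delta_K$), and converting $d/\log d<b$ into an explicit bound on $d$ despite the $-(m_i-1)$ term, which is handled by the elementary fact $2^m-m\ge 2^{m-1}$ (one could instead invoke the Lambert-$W$ estimate recalled earlier in the appendix). Everything else is routine constant bookkeeping, and the visible slack in the stated constants means it need not be carried out carefully.
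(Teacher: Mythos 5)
Your proof is correct, and it reaches the bound by a different route than the paper in its second half. Both arguments begin the same way: minimality of $m_i$ gives $2^{m_i-1}-(m_i-1)\le \log_2\big(\tfrac{1}{\epsilon}H_{i,1}10C_\alpha\log 2\big)$, which, since $\log d_m = 2^m\log 2$, is exactly $d_{m_i-1}/\log d_{m_i-1}\le b$ with $b=10C_\alpha H_{i,1}/\epsilon$. (Your preliminary remarks — monotonicity of $H_{i,1}$ in $i$ so that ``$i\ge i_m$'' matches the explicit threshold condition, the truncation at $\lceil\log_2\log_2 T\rceil$, and the small-$m_i$ case — are details the paper passes over silently; note that only one direction is actually needed, since $i<i_{m_i-1}$ already implies that $i$ fails the threshold condition at epoch $m_i-1$ because $i_{m_i-1}$ is a minimum, so the monotonicity of $H_{i,1}$ is reassuring but not essential.) Where you genuinely diverge is in inverting $d/\log d\le b$: the paper goes through the Lambert function, writing $d_{m_i-1}\tfrac{\log 2}{b}\le -W_{-1}\big(-\tfrac{\log 2}{b}\big)$ and invoking the cited bound $W_{-1}(-e^{-u-1})\ge -1-\sqrt{2u}-u$ to conclude $d_{m_i-1}\le \tfrac{5}{2}\tfrac{b}{\log 2}\log\tfrac{b}{\log 2}$, whereas you exploit the doubly exponential epoch grid directly, using $2^m-m\ge 2^{m-1}$ to get $\log d_{m_i-1}<2\log(b\log 2)$ and substituting this back once into $d<b\log d$. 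Your inversion is more elementary and self-contained (no Lambert $W$, no external reference) and in fact yields a marginally tighter intermediate constant, $20\,b\log(b\log 2)$ against the paper's $\tfrac{25}{\log 2}b\log\tfrac{b}{\log 2}$; the paper's version is more generic, since it applies to any quantity satisfying $d/\log d\le b$ irrespective of the epoch schedule and reuses the Lambert-$W$ estimates already introduced elsewhere in the appendix. Either way the stated inequality follows, and your final constant bookkeeping (using $\log 2<1$ and the positivity of the logarithms involved) is sound.
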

\begin{proof}
By definition of $m_i$ as the first integer such that $2^m-m$ is greater than a value, $2^{m_i-1}-(m_i-1)$ is smaller than the same value.
\begin{align*}
2^{m_i-1} - (m_i-1) \leq \log_2(\frac{1}{\epsilon}H_{i,1}10C_\alpha\log 2) \: .
\end{align*}
Let $C = 10C_\alpha$. This last inequality is equivalent to
\begin{align*}
&\frac{d_{m_i-1}}{\log d_{m_i-1}}\leq \frac{C}{\epsilon} H_{i,1}\\
\Leftrightarrow &-d_{m_i-1}\frac{\epsilon \log 2}{C H_{i,1}} \exp(-d_{m_i-1}\frac{\epsilon \log 2}{C H_{i,1}}) \geq -\frac{\epsilon \log 2}{C H_{i,1}}\\
\Leftrightarrow & d_{m_i-1}\frac{\epsilon \log 2}{C H_{i,1}} \leq -W_{-1} (-\frac{\epsilon \log 2}{C H_{i,1}}) \: ,
\end{align*}
where $W_{-1}$ is the branch of the Lambert $W$ function defined on $[-1/e,0)$. It verifies for $u>0$ \citep{chatzigeorgiou2013bounds},
\begin{align*}
W_{-1}(-e^{-u-1}) \geq -1 -\sqrt{2u}-u \: .
\end{align*}
We obtain 
\begin{align*}
d_{m_i-1}&\leq \frac{C H_{i,1}}{\epsilon \log 2}\left(\sqrt{2\log(\frac{C H_{i,1}}{\epsilon \log 2})} + \log(\frac{C H_{i,1}}{\epsilon \log 2}) \right)\\
&\leq \frac{5}{2}\frac{C H_{i,1}}{\epsilon \log 2}\log(\frac{C H_{i,1}}{\epsilon \log 2})
\end{align*}
\end{proof}

The leading term of the regret has the form
\begin{align*}
&4C_\eta\sum_{i=2}^K \frac{1}{\Delta_i} \log(\frac{1}{\epsilon}\frac{H_{i,1}}{H_{i,\rho}^{(m_i)}})
\end{align*}

\begin{lemma}\label{lemma:ratio_H}
The quantities $H_{i,1}$ and $H_{i,\rho}^{(m_i)}$ are such that for all $\rho\in[1/2,1]$,
\begin{align*}
\sum_{i=2}^K \frac{1}{\Delta_i}\log\left(\frac{H_{i,1}}{H_{i,\rho}^{(m_i)}} \right) \leq\sum_{i=2}^K\frac{1}{\Delta_i} \log(1+\log K) \: .
\end{align*}
\end{lemma}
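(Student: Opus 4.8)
The plan is to reduce the claim, one arm at a time, to a clean sum inequality in the gaps, and then settle that inequality with a single use of the concavity of $\log$.

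\textbf{Step 1 (pointwise reduction).} Straight from the definition, $H_{i,\rho}^{(m_i)} = \frac{i}{\Delta_i^2} + \sum_{j=i+1}^{i_{m_i}} \frac{1}{\Delta_j^{2\rho}\Delta_i^{2(1-\rho)}} \ge \frac{i}{\Delta_i^2}$, since the (possibly empty) sum has nonnegative terms; so I need not split into the $T$-capped and non-capped cases. Using $H_{i,1} = \frac{i}{\Delta_i^2} + \sum_{j=i+1}^K \frac{1}{\Delta_j^2}$, this gives
\[
\log\frac{H_{i,1}}{H_{i,\rho}^{(m_i)}} \le \log\bigl(1 + u_i\bigr), \qquad u_i := \frac{\Delta_i^2}{i}\sum_{j=i+1}^K \frac{1}{\Delta_j^2} \ge 0 ,
\]
and the right-hand side is nonnegative, which is all that matters even if the left-hand side is not. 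Hence it suffices to prove $\sum_{i=2}^K \frac{1}{\Delta_i}\log(1+u_i) \le \log(1+\log K)\sum_{i=2}^K\frac{1}{\Delta_i}$. I note that the naive estimates — $\log(1+u_i)\le u_i$, or $u_i\le (K-i)/i$ and hence $\log(1+u_i)\le\log(K/i)$ — are both too lossy: each can overshoot the target by a factor of order $\log K/\log\log K$ for suitable gap profiles, so a sharper argument is needed.

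\textbf{Step 2 (linearize $\log$ at the right point).} By concavity of $t\mapsto\log t$, the tangent line at $t_0 = 1+\log K$ yields $\log(1+u) \le \log(1+\log K) + \frac{u - \log K}{1+\log K}$ for every $u\ge 0$ (valid since $\log K>0$ as we may assume $K\ge2$). Summing against the weights $1/\Delta_i$,
\[
\sum_{i=2}^K \frac{\log(1+u_i)}{\Delta_i} \le \log(1+\log K)\sum_{i=2}^K\frac{1}{\Delta_i} + \frac{1}{1+\log K}\Biggl(\sum_{i=2}^K \frac{u_i}{\Delta_i} - \log K\sum_{i=2}^K\frac{1}{\Delta_i}\Biggr),
\]
so the lemma reduces to the single estimate $\sum_{i=2}^K \frac{u_i}{\Delta_i} \le \log K \sum_{i=2}^K \frac{1}{\Delta_i}$.

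\textbf{Step 3 (the sum estimate, by Fubini and monotonicity of the gaps).} Writing $\frac{u_i}{\Delta_i} = \frac{\Delta_i}{i}\sum_{j=i+1}^K \frac{1}{\Delta_j^2}$ and swapping the order of summation over the pairs $2\le i<j\le K$,
\[
\sum_{i=2}^K \frac{u_i}{\Delta_i} = \sum_{j=3}^K \frac{1}{\Delta_j^2}\sum_{i=2}^{j-1}\frac{\Delta_i}{i}.
\]
Since the ordering $\mu^{(2)}\ge\cdots\ge\mu^{(K)}$ makes the gaps non-decreasing, $\Delta_i\le\Delta_j$ for $i\le j$, so $\sum_{i=2}^{j-1}\frac{\Delta_i}{i} \le \Delta_j \sum_{i=2}^{j-1}\frac{1}{i} \le \Delta_j\log(j-1)\le \Delta_j\log K$ (bounding the harmonic partial sum by $\int_1^{j-1}dt/t$). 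Substituting back, $\sum_{i=2}^K \frac{u_i}{\Delta_i} \le \log K\sum_{j=3}^K\frac{1}{\Delta_j}\le\log K\sum_{j=2}^K\frac{1}{\Delta_j}$, which is exactly the estimate required by Step 2; combined with Step 2 this closes the proof.

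\textbf{Anticipated main obstacle.} Everything is routine except the choice in Step 2: one must linearize $\log(1+\cdot)$ precisely at $1+\log K$, because only at that tangent point do the ``small $u_i$'' contributions (where $\log(1+u_i)\approx u_i$) and the ``large $u_i$'' contributions (where $\log(1+u_i)\approx\log u_i$) both get absorbed against the right-hand side; once that point is identified, the remaining inequality is a one-line Fubini computation using that the gaps are increasing.
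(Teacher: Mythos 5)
Your proof is correct and follows essentially the same route as the paper: the same reduction $H_{i,\rho}^{(m_i)}\geq i/\Delta_i^2$, the same Fubini-plus-gap-monotonicity bound $\sum_{i=2}^{j-1}\Delta_i/i\leq \Delta_j\log K$, and a concavity step that is the paper's Jensen inequality with weights proportional to $1/\Delta_i$ written in tangent-line form at $1+\log K$. No gaps; the only cosmetic difference is that the paper applies Jensen directly and then bounds the weighted average $\sum_j \lambda_j\log j$ by $\log K$, whereas you linearize first and verify the equivalent estimate $\sum_i u_i/\Delta_i\leq \log K\sum_i 1/\Delta_i$.
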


\begin{proof}
We start by simplifying the expressions of the ratio of the $H$ constants,
\begin{align*}
H_{i,\rho}^{(m_i)} &= \frac{i}{\Delta_i^2} + \sum_{j=i+1}^{i_{m_i}} \frac{1}{\Delta_j^{2\rho}\Delta_i^{2(1-\rho)}} \geq \frac{i}{\Delta_i^2}\: ,\\
\frac{H_{i,1}}{H_{i,1}^{(m_i)}} &\leq 1 + \sum_{j=i+1}^K \frac{\Delta_i^2}{i\Delta_j^2}\: .
\end{align*}
By concavity of the logarithm,
\begin{align*}
&\sum_{i=2}^K \frac{1}{\Delta_i}\log(1 + \sum_{j=i+1}^K \frac{\Delta_i^2}{i\Delta_j^2})\\
\leq & (\sum_{i=2}^K \frac{1}{\Delta_i}) \log(1 + (\sum_{i=2}^K \frac{1}{\Delta_i})^{-1}\sum_{i=2}^K \sum_{j=i+1}^K \frac{\Delta_i}{i\Delta_j^2}))
\end{align*}
Let $A = \sum_{i=2}^K \sum_{j=i+1}^K \frac{\Delta_i}{i\Delta_j^2}$.
\begin{align*}
A = \sum_{j=3}^K\sum_{i=2}^{j-1} \frac{\Delta_i}{i\Delta_j^2}
&\leq \sum_{j=3}^K\sum_{i=2}^{j-1} \frac{1}{i\Delta_j}\\
&\leq \sum_{j=2}^K\frac{1}{\Delta_j}\log(j)  \: .
\end{align*}
$(\sum_{i=2}^K \frac{1}{\Delta_i})^{-1} A = \sum_{j=2}^K \log(j) \lambda_j$ with $\lambda_j = \frac{1/\Delta_j}{\sum_{k=2}^K1/\Delta_k}$. The values $(\lambda_j)_{2\leq j \leq K}$ are such that $\sum_{j=2}^K \lambda_j = 1$ and $1>\lambda_2 \geq \lambda_3 \geq \ldots \geq \lambda_K>0$.
\begin{align*}
\sum_{j=2}^K \log(j) \lambda_j \leq \max_{j=2,\ldots,K} \log(j) = \log K\: .
\end{align*}
\end{proof}

We can now give the final form of the regret bound.

\begin{lemma}
The regret of the active algorithm is
\begin{align*}
\EE R_T & \leq 
\sum_{i=2}^{K} \frac{4 C_\eta}{\Delta_i} \max\left\{\log(\frac{1}{\epsilon}),\log \sqrt{H_{i,\rho}}\right\}\\
&\qquad + 51K +O(\sum_{i=2}^K \frac{1}{\Delta_i}(\log\log \frac{H_{i,1}}{\epsilon})^2) \: .
\end{align*}
\end{lemma}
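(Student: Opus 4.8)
The plan is to treat this last statement as the final bookkeeping step: plug in the three auxiliary estimates already established and collect terms. The starting point is the inequality obtained just above, which combines the regret-under-concentration lemma with the bound $\sum_m d_{m+1}\delta_{m+1}\le 51K$ on the contribution of the bad event $\overline{\mathcal{C}}$, namely
\begin{align*}
\EE R_T &\le 4C_\eta\sum_{i=2}^K\frac{1}{\Delta_i}\log\max\!\left(\frac{d_{m_i-1}}{H_{i,\rho}^{(m_i)}},\frac{H_{i,\rho}}{\sqrt{H_{i,\rho}^{(m_i)}}}\right)\\
&\quad + C_\eta\sum_{i=2}^K\frac{m_i+m_i^2}{\Delta_i} + C_\eta\sum_{i=2}^K\Delta_i + 51K \: .
\end{align*}
So it remains to turn the first sum into the claimed leading term and to show that the last three sums are $O\!\big(\sum_{i}\frac{1}{\Delta_i}(\log\log\frac{H_{i,1}}{\epsilon})^2\big)$ up to the explicit $51K$.

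For the first sum I would fix $i$ and write $\log\max(A,B)=\max(\log A,\log B)$. For the first argument, Lemma~\ref{lemma:epoch_m_i_upper_bound} gives $d_{m_i-1}\le\frac{25C_\alpha}{\log2}\frac{H_{i,1}}{\epsilon}\log\frac{10C_\alpha H_{i,1}}{\epsilon\log2}$ — and this holds whichever of the two quantities in the definition of $m_i$ is active, since in both cases $2^{m_i-1}-(m_i-1)$ lies below the threshold used in that lemma; using also $H_{i,\rho}^{(m_i)}\ge i/\Delta_i^2\ge1$ (rewards in $[0,1]$, $i\ge2$) one gets $\log\frac{d_{m_i-1}}{H_{i,\rho}^{(m_i)}}\le\log\frac{1}{\epsilon}+\log\frac{H_{i,1}}{H_{i,\rho}^{(m_i)}}+O(\log\log\frac{H_{i,1}}{\epsilon})$. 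For the second argument, $\log\frac{H_{i,\rho}}{\sqrt{H_{i,\rho}^{(m_i)}}}=\log\sqrt{H_{i,\rho}}+\frac12\log\frac{H_{i,\rho}}{H_{i,\rho}^{(m_i)}}$, and since $H_{i,\rho}^{(m_i)}\ge i/\Delta_i^2$ and $\Delta_j\ge\Delta_i$ for $j>i$ one has $\frac{H_{i,\rho}}{H_{i,\rho}^{(m_i)}}\le 1+\frac1i\sum_{j>i}(\Delta_i/\Delta_j)^{2\rho}\le K/i$. Taking the maximum of the two bounds,
\begin{align*}
\log\max\!\left(\tfrac{d_{m_i-1}}{H_{i,\rho}^{(m_i)}},\tfrac{H_{i,\rho}}{\sqrt{H_{i,\rho}^{(m_i)}}}\right)&\le\max\!\left\{\log\tfrac{1}{\epsilon},\log\sqrt{H_{i,\rho}}\right\}\\
&\quad+\log\tfrac{H_{i,1}}{H_{i,\rho}^{(m_i)}}+\tfrac12\log\tfrac{K}{i}+O(\log\log\tfrac{H_{i,1}}{\epsilon}) \: .
\end{align*}

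Multiplying by $\frac{4C_\eta}{\Delta_i}$ and summing, the $\max$-term is exactly the claimed leading term. The residual $\sum_i\frac1{\Delta_i}\log\frac{H_{i,1}}{H_{i,\rho}^{(m_i)}}$ is bounded by $\sum_i\frac1{\Delta_i}\log(1+\log K)$ via Lemma~\ref{lemma:ratio_H}, and together with the $\log\frac Ki$ and the $\log\log\frac{H_{i,1}}{\epsilon}$ pieces it is subsumed (using $H_{i,1}\ge i/\Delta_i^2$ to control the $K$-dependence) into $O(\sum_i\frac1{\Delta_i}(\log\log\frac{H_{i,1}}{\epsilon})^2)$. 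For the $m_i$-term, $m_i\le\min\{m\in\N:i\ge i_m\}$; since $i_m$ is non-increasing in $m$ and $i_m\le i$ as soon as $2^m-m\ge\log_2(\frac1\epsilon H_{i,1}10C_\alpha\log2)$, this is finite and of order $\log_2\log_2(\frac1\epsilon H_{i,1})$, whence $m_i+m_i^2=O((\log\log\frac{H_{i,1}}{\epsilon})^2)$. Finally $C_\eta\sum_i\Delta_i\le C_\eta(K-1)$ is of order $K$ and absorbed together with the $51K$. Assembling the three contributions yields the stated bound. The main obstacle here is not a single inequality but the accounting in this last step: one must check that every term outside the leading $\max\{\log\frac1\epsilon,\log\sqrt{H_{i,\rho}}\}$ — the $B$-branch residual $\frac12\log(K/i)$, the ratio $\log(H_{i,1}/H_{i,\rho}^{(m_i)})$ from Lemma~\ref{lemma:ratio_H}, the doubly-logarithmic corrections coming out of Lemma~\ref{lemma:epoch_m_i_upper_bound}, and the $m_i+m_i^2$ cost of running $O(\log\log)$ epochs — is genuinely dominated by $\sum_i\frac1{\Delta_i}(\log\log\frac{H_{i,1}}{\epsilon})^2$, which requires keeping track of how $K$, $\epsilon$ and the gaps enter the $H_{i,\cdot}$ constants; a secondary point is verifying that the $d_{m_i-1}$ estimate applies uniformly over the two possible definitions of $m_i$ (horizon-limited versus elimination-limited).
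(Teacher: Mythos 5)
Your route is essentially the paper's (start from the "putting all together" display, bound $d_{m_i-1}$ via Lemma~\ref{lemma:epoch_m_i_upper_bound}, control the ratios $H/H^{(m_i)}$, absorb $m_i+m_i^2$ and the bad event), and most steps are fine — including your patch for the horizon-limited branch of $m_i$ and the bound $m_i+m_i^2=O\big((\log\log\frac{H_{i,1}}{\epsilon})^2\big)$. But one accounting step is genuinely wrong: the claim that the second-branch residual $\frac{1}{2}\log\frac{H_{i,\rho}}{H_{i,\rho}^{(m_i)}}\le\frac{1}{2}\log\frac{K}{i}$ is "subsumed" into $O\big(\sum_i\frac{1}{\Delta_i}(\log\log\frac{H_{i,1}}{\epsilon})^2\big)$. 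Take $\Delta_2=1/K$, $\Delta_3=\dots=\Delta_K=1$ and $\epsilon$ a fixed constant: then $\sum_{i\ge2}\frac{1}{2\Delta_i}\log\frac{K}{i}\ge\frac{K}{2}\log\frac{K}{2}=\Theta(K\log K)$, while $H_{2,1}=\Theta(K^2)$ and $H_{i,1}=K$ for $i\ge3$, so $\sum_i\frac{1}{\Delta_i}\big(\log\log\frac{H_{i,1}}{\epsilon}\big)^2=\Theta\big(K(\log\log K)^2\big)$, which is of strictly smaller order. Neither $H_{i,1}\ge i/\Delta_i^2$ nor $H_{i,1}\ge K$ rescues this: they only give $\log\log$-type lower bounds, not $\log K$. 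What your chain of inequalities actually delivers is the statement with leading constant $8C_\eta$ (since $\frac{1}{2}\log\frac{K}{i}\le\log\sqrt{H_{i,\rho}}\le\max\{\log\frac{1}{\epsilon},\log\sqrt{H_{i,\rho}}\}$), not the stated $4C_\eta$ with only a $(\log\log)^2$ remainder.

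The paper avoids this by never bounding the ratio per arm: it writes both branches as (main term) $+\log\frac{H_{i,\rho}}{H_{i,\rho}^{(m_i)}}$, pulls this common residual outside the max, and bounds its weighted sum at the level of the sum via the concavity argument of Lemma~\ref{lemma:ratio_H}, which yields $\log(1+\log K)$ per unit of weight $\frac{1}{\Delta_i}$; since rewards (hence gaps) lie in $[0,1]$, $H_{i,1}\ge K$ and this is indeed $O\big(\log\log\frac{H_{i,1}}{\epsilon}\big)$ per arm. Your fix is minimal: instead of $\frac{H_{i,\rho}}{H_{i,\rho}^{(m_i)}}\le K/i$, run the Lemma~\ref{lemma:ratio_H} computation on this ratio as well, using $\big(\Delta_i/\Delta_j\big)^{2\rho}\le\Delta_i/\Delta_j$ for $\rho\ge\frac{1}{2}$ and $j>i$, exactly as you already did for the first-branch residual $\log\frac{H_{i,1}}{H_{i,\rho}^{(m_i)}}$. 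With that replacement the rest of your bookkeeping goes through and coincides with the paper's proof.
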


\begin{proof}
\begin{align*}
\EE R_T & \leq
C_\eta \sum_{i=2}^K \frac{1}{\Delta_i}  4\log\max\left(\frac{d_{m_i-1}}{H_{i,\rho}^{(m_i)}},\frac{H_{i,\rho}}{\sqrt{H_{i,\rho}^{(m_i)}}}\right)\\
& + C_\eta \sum_{i=2}^K \frac{1}{\Delta_i}(m_i  + m_i^2) + C_\eta \sum_{i=2}^K \Delta_i + 51K \: .
\end{align*}

The inequality on $d_{m_i-1}$ of lemma~\ref{lemma:epoch_m_i_upper_bound} implies that
\begin{align*}
m_i \leq 1 + \log_2\log_2\left[\frac{25C_\alpha}{\log 2}\frac{H_{i,1}}{\epsilon}\log(\frac{10 C_\alpha H_{i,1}}{\epsilon \log 2})\right] \: .
\end{align*}
And by definition, $m_i \leq 1+\log_2\log_2 T$. The first term of the max is
\begin{align*}
\log \frac{d_{m_i-1}}{H_{i,\rho}^{(m_i)}}
&\leq \log \left(\frac{25C_\alpha}{\log(2) \epsilon}\log\frac{10C_\alpha H_{i,1}}{\epsilon \log 2}\right)\\
&\qquad + \log \frac{H_{i,1}}{H_{i,\rho}^{(m_i)}} \: .
\end{align*}
The second term is
\begin{align*}
\log \frac{H_{i,\rho}}{\sqrt{H_{i,\rho}^{(m_i)}}} &= \log \frac{H_{i,\rho}}{H_{i,\rho}^{(m_i)}} + \log \sqrt{H_{i,\rho}^{(m_i)}}\: .
\end{align*}
The maximum is then
\begin{align*}
&\sum_{i=2}^K \frac{1}{\Delta_i} \log\max\left(\log\frac{d_{m_i-1}}{H_{i,\rho}^{(m_i)}},\frac{H_{i,\rho}}{\sqrt{H_{i,\rho}^{(m_i)}}}\right)\\
\leq & \sum_{i=2}^K \frac{1}{\Delta_i} \log \frac{H_{i,\rho}}{H_{i,\rho}^{(m_i)}}\\
&\quad + \sum_{i=2}^K \frac{1}{\Delta_i} \max\Big\{ \log\left( \frac{25C_\alpha}{\log(2) \epsilon}\log\frac{10C_\alpha H_{i,1}}{\epsilon \log 2} \right),\\
&\qquad\qquad \qquad\log \sqrt{H_{i,\rho}^{(m_i)}} \Big\}\\
\leq& \sum_{i=2}^K \frac{1}{\Delta_i} \log \frac{H_{i,\rho}}{H_{i,\rho}^{(m_i)}}\\
&\quad + \sum_{i=2}^K \frac{1}{\Delta_i} \max\Big\{ \log\left( \frac{1}{\epsilon} \right),\log \sqrt{H_{i,\rho}^{(m_i)}} \Big\}\\
&\qquad + O(\sum_{i=2}^K \frac{1}{\Delta_i}\log\log \frac{H_{i,1}}{\epsilon}) \: .
\end{align*}
where the $O(\sum_{i=2}^K \frac{1}{\Delta_i}\log\log \frac{H_{i,1}}{\epsilon})$ term regroups the constant and doubly logarithmic terms in the previous expression.
The sum over $i$ of the  $\log \frac{H_{i,1}}{H_{i,\rho}^{(m_i)}}$ terms is bounded in Lemma~\ref{lemma:ratio_H} by $\sum_{i=2}^K\frac{1}{\Delta_i}(1+\log K)$.
\begin{align*}
\sum_{i=2}^K &\frac{1}{\Delta_i} \log\max\left(\log\frac{d_{m_i-1}}{H_{i,\rho}^{(m_i)}},\frac{H_{i,\rho}}{\sqrt{H_{i,\rho}^{(m_i)}}}\right)\\
\leq & \sum_{i=2}^K \frac{1}{\Delta_i} \max\Big\{ \log\left( \frac{1}{\epsilon} \right),\log \sqrt{H_{i,\rho}^{(m_i)}} \Big\}\\
&\qquad + O(\sum_{i=2}^K \frac{1}{\Delta_i}\log\log \frac{H_{i,1}}{\epsilon}) \: .
\end{align*}

\end{proof}

\subsubsection{The Explore-Then-Commit Algorithm.}\label{section:ETC}

The ETC algorithm with parameter $\alpha>0$ discards arm $i\in[K]$ at a comparison stage $t\in\N$ if for some other arm $j\in[K]$ not yet eliminated,
\begin{align*}
\hat{\mu}_t^{(i)} + \sqrt{\frac{2\alpha}{s}\log(\frac{T}{s})} < \hat{\mu}_t^{(j)} - \sqrt{\frac{2\alpha}{s}\log(\frac{T}{s})}
\end{align*}
where $s\in\N^*$ is their common number of observations at stage $t$.

\begin{lemma}
Define the event that all arms have empirical means concentrated around their expectations up to stage $\tau$, with parameter $c>0$, by
\begin{align*}
\mathcal{C}_{\tau,c} = \left\{ \forall i\in[K],\forall s\leq \tau, |\hat{\mu}_s^{(i)} {-} \mu^{(i)}| {\leq} \sqrt{\frac{2c}{s}\log(\frac{T}{s})} \right\}
\end{align*}
For $c=1$ and $\tau \leq 0.2 T$, this event happens with probability
\begin{align*}
\PP(\mathcal{C}_{\tau,1}) \geq 1 - 12K\frac{\tau}{T}\sqrt{\log(\frac{T}{\tau})} \: .
\end{align*}
\end{lemma}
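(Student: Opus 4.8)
The plan is to obtain $\mathcal{C}_{\tau,1}$ from the maximal concentration inequality of Lemma~\ref{lemma:martingale_concentration} by a union bound over the $K$ arms and the two signs of the deviation, after an appropriate rescaling of its confidence parameter.

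First I would fix an arm $i\in[K]$ and write $\hat\mu_s^{(i)}$ for the empirical mean of its first $s$ free observations, so that $\hat\mu_s^{(i)}-\mu^{(i)}=\overline{Z}_s^{(i)}$ where $Z_s^{(i)}=X_s^{(i)}-\mu^{(i)}$. Since the free observations of arm $i$ are i.i.d.\ and $1$-sub-Gaussian (the rewards lie in $[0,1]$, hence are bounded), both $(Z_s^{(i)})_s$ and $(-Z_s^{(i)})_s$ are $1$-sub-Gaussian martingale difference sequences, and the supremum over $s\le\tau$ of their rescaled partial averages is exactly what Lemma~\ref{lemma:martingale_concentration} controls.

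The key step is then to apply Lemma~\ref{lemma:martingale_concentration} \emph{not} at horizon $T$ but at horizon $\tau$, and with confidence parameter $\delta=\tau/T$: the hypothesis $\tau\le 0.2\,T$ is precisely what guarantees $\delta=\tau/T\in(0,0.2]$, so the lemma applies, and with this choice $\log\!\big(\tfrac{\tau}{\delta s}\big)=\log\!\big(\tfrac{T}{s}\big)$, so that (with $\sigma^2=1$)
\[
\PP\Big\{\exists\,s\le\tau,\ \overline{Z}_s^{(i)}\ge\sqrt{\tfrac{2}{s}\log\tfrac{T}{s}}\Big\}\le 6\,\tfrac{\tau}{T}\sqrt{\log\tfrac{T}{\tau}},
\]
and symmetrically for the event $\overline{Z}_s^{(i)}\le-\sqrt{\tfrac{2}{s}\log\tfrac{T}{s}}$, obtained by applying the lemma to $(-Z_s^{(i)})_s$. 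Summing these $2K$ events — two per arm — and noting that their union contains the complement of $\mathcal{C}_{\tau,1}$ yields $\PP(\overline{\mathcal{C}_{\tau,1}})\le 12K\,\tfrac{\tau}{T}\sqrt{\log\tfrac{T}{\tau}}$, which is the claim.

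I do not expect a genuine obstacle here: the only point requiring care is the rescaling $\delta=\tau/T$ — a direct application of Lemma~\ref{lemma:martingale_concentration} at horizon $T$ would leave a spurious $\log(T/(\delta s))$ term rather than the $\log(T/s)$ appearing in $\mathcal{C}_{\tau,c}$ — together with checking that this choice keeps $\delta$ in the admissible range $(0,0.2]$, which is exactly what the assumption $\tau\le 0.2\,T$ provides; everything else is a routine union bound.
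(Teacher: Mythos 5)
Your proof is correct and is exactly the argument the paper intends: its one-line proof invokes Lemma~\ref{lemma:martingale_concentration} for a single arm and a union bound over arms, and your rescaling (horizon $\tau$, $\delta=\tau/T$, so that $\log(\tau/(\delta s))=\log(T/s)$ and $\delta\le 0.2$ follows from $\tau\le 0.2T$), together with the two-sided application giving the factor $12K=2\cdot 6\cdot K$, is precisely the reading that produces the stated constants.
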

\begin{proof}
The result for one arm is Lemma~\ref{lemma:martingale_concentration}. An union bound over the arms gives the wanted inequality.
\end{proof}

\begin{lemma}
Suppose that $\alpha>c$. If the concentration event $\mathcal{C}_{\tau,c}$ holds and arm $i\in[K]$ is eliminated by arm $j\in[K]$ at a stage $t\leq \tau$, then $\mu_j>\mu_i$. In particular, arm 1 is not eliminated before $\tau$.
\end{lemma}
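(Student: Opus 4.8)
The plan is to show that the elimination condition, combined with the concentration event $\mathcal{C}_{\tau,c}$, forces the eliminating arm to have strictly larger mean. First I would fix a stage $t \leq \tau$ at which arm $i$ is eliminated by arm $j$, and let $s \leq \tau$ be their common number of observations at that stage. By definition of the ETC discarding rule,
\begin{align*}
\hat{\mu}_s^{(i)} + \sqrt{\frac{2\alpha}{s}\log(\tfrac{T}{s})} < \hat{\mu}_s^{(j)} - \sqrt{\frac{2\alpha}{s}\log(\tfrac{T}{s})} \: .
\end{align*}

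Next I would invoke the concentration event $\mathcal{C}_{\tau,c}$, which since $s \leq \tau$ gives $|\hat{\mu}_s^{(i)} - \mu^{(i)}| \leq \sqrt{\frac{2c}{s}\log(\tfrac{T}{s})}$ and the analogous bound for arm $j$. Hence $\mu^{(i)} \geq \hat{\mu}_s^{(i)} - \sqrt{\frac{2c}{s}\log(\tfrac{T}{s})}$ is false in the wrong direction — rather I want $\hat{\mu}_s^{(i)} \geq \mu^{(i)} - \sqrt{\frac{2c}{s}\log(\tfrac{T}{s})}$ and $\hat{\mu}_s^{(j)} \leq \mu^{(j)} + \sqrt{\frac{2c}{s}\log(\tfrac{T}{s})}$. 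Substituting these into the elimination inequality yields
\begin{align*}
\mu^{(i)} - \sqrt{\tfrac{2c}{s}\log(\tfrac{T}{s})} + \sqrt{\tfrac{2\alpha}{s}\log(\tfrac{T}{s})} < \mu^{(j)} + \sqrt{\tfrac{2c}{s}\log(\tfrac{T}{s})} - \sqrt{\tfrac{2\alpha}{s}\log(\tfrac{T}{s})} \: ,
\end{align*}
which rearranges to $\mu^{(i)} - \mu^{(j)} < 2\left(\sqrt{\tfrac{2c}{s}\log(\tfrac{T}{s})} - \sqrt{\tfrac{2\alpha}{s}\log(\tfrac{T}{s})}\right)$. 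Since $\alpha > c$, the right-hand side is strictly negative (note $\log(T/s) > 0$ because the comparisons stop well before $T$, and if $\log(T/s) = 0$ the radicals vanish but then the elimination inequality $\hat\mu_s^{(i)} < \hat\mu_s^{(j)}$ together with concentration still gives $\mu^{(i)} \le \mu^{(j)}$, and strictness needs a touch of care — I would simply assume the comparison stages satisfy $s < T$, which holds in our setup). Therefore $\mu^{(j)} > \mu^{(i)}$.

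For the ``in particular'' clause: arm $1$ has the strictly largest mean $\mu^{(1)} > \mu^{(i)}$ for all $i \neq 1$, so by the contrapositive of what we just proved, no arm $j$ can eliminate arm $1$ under $\mathcal{C}_{\tau,c}$, since that would require $\mu^{(j)} > \mu^{(1)}$, impossible. Hence $1 \in S$ throughout, i.e.\ arm $1$ survives up to stage $\tau$. The only mild obstacle is handling the degenerate boundary where $\log(T/s)$ could vanish and the strict inequality threatens to become non-strict; this is dispatched by noting that the ETC subroutine's horizon $T$ is always taken strictly larger than the number of comparison rounds performed in an epoch, so $s < T$ and $\log(T/s) > 0$ on all relevant stages, making the gap bound genuinely strict.
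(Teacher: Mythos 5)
Your proof is correct and follows essentially the same route as the paper: both combine the strict elimination inequality with the concentration bounds $|\hat{\mu}_s^{(\cdot)}-\mu^{(\cdot)}|\leq\sqrt{\tfrac{2c}{s}\log(\tfrac{T}{s})}$ and the fact that $\alpha>c$ makes the confidence width dominate the concentration width, forcing $\mu^{(j)}>\mu^{(i)}$ (the paper merely phrases this as ruling out two of three deviation events via an auxiliary parameter $a$ with $a^2\alpha\geq c$). Your extra care about the degenerate case $\log(T/s)=0$ is harmless but not even needed, since the strictness already propagates through the chain of inequalities from the strict elimination condition.
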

\begin{proof}
Arm $i$ is eliminated by arm $j$ if
\begin{align*}
\hat{\mu}_i + \sqrt{\frac{2\alpha\log(T/s)}{s}} < \hat{\mu}_j - \sqrt{\frac{2\alpha\log(T/s)}{s}} \: .
\end{align*}
Then one of three inequalities is true for $a\in(0,1)$ such that $a^2\alpha\geq c$,
\begin{align*}
&\hat{\mu}_i \leq \mu_i - a\sqrt{\frac{2\alpha\log(T/s)}{s}}\\
\mbox{or } & \hat{\mu}_j \leq \mu_j  + a\sqrt{\frac{2\alpha\log(T/s)}{s}} \\
\mbox{or } & 2\sqrt{\frac{8\log(T/s)}{s}} < \frac{\mu_j - \mu_i}{1-a} \: .
\end{align*}
Since concentration holds and $a^2\alpha \geq c$, the two first inequalities are false, so that the third one is true. Hence $\mu_j - \mu_i$ is positive.
\end{proof}

\begin{lemma}
Let $\mathcal{A}$ be the ETC algorithm with parameter $\alpha>c$ and comparisons done each time all remaining arms are incremented by 1.
Let $C_\alpha = 8c(\frac{\alpha}{c}+1)$. If the concentration event $\mathcal{C}_{\tau,c}$ holds and the number of observations of all arms is smaller than $\tau$, the time at which arm $i$ is discarded by $\mathcal{A}$ is
\begin{align*}
\tau_i \leq K + C_\alpha\left[ \frac{i}{\Delta_i^2}W(\frac{T\Delta_i^2}{C_\alpha}) + \sum_{j=i+1}^K\frac{1}{\Delta_j^2}W(\frac{T\Delta_j^2}{C_\alpha}) \right] \: ,
\end{align*}
where $W$ is the Lambert $W$ function. We denote this bound by $H_i(T)$. After $H_i(T)$, all arms $j \in \{i,\ldots,K\}$ are eliminated and arm 1 is not eliminated.
\end{lemma}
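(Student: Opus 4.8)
The plan is to prove the two parts of the statement separately, throughout conditioning on the concentration event $\mathcal{C}_{\tau,c}$. That arm $1$ is never discarded (while all counts stay below $\tau$) is immediate from the previous lemma: on $\mathcal{C}_{\tau,c}$ with $\alpha>c$ an arm can only be eliminated by a strictly better arm, and nothing lies above arm $1$; moreover one checks easily that at every comparison round the already-discarded arms form a suffix $\{\ell,\dots,K\}$ of the gap order. It therefore remains to bound, for a fixed suboptimal arm $i\in\{2,\dots,K\}$, the total number $\tau_i$ of observations gathered by the time arm $i$ leaves $S$.

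First I would extract a sufficient condition for arm $i$ to be gone. Comparing arm $i$ with arm $1$ in the ETC elimination rule and using $\mathcal{C}_{\tau,c}$ to replace $\hat\mu_s^{(i)}$ and $\hat\mu_s^{(1)}$ by $\mu^{(i)}$ and $\mu^{(1)}$ up to an additive $\sqrt{\tfrac{2c}{s}\log\tfrac{T}{s}}$ each, one gets that arm $i$ is discarded at a comparison round $s$ as soon as $\Delta_i>2(\sqrt{2c}+\sqrt{2\alpha})\sqrt{\tfrac1s\log\tfrac Ts}$, equivalently $\tfrac{\log(T/s)}{s}<\Delta_i^2/C_\alpha$ for the constant $C_\alpha$ obtained by squaring that prefactor (this is the constant in the statement). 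Since $s\mapsto\tfrac{\log(T/s)}{s}$ is decreasing on the relevant range $s\le\tau$, this holds for all $s$ beyond a threshold $s_i^\star$. Putting $x=s/T$ turns the condition into $\tfrac1x\log\tfrac1x<\tfrac{T\Delta_i^2}{C_\alpha}=:a$, and the stated property of the Lambert function, $\tfrac1x\log\tfrac1x\ge a\iff x\le W(a)/a$, yields $s_i^\star\le 1+\tfrac{C_\alpha}{\Delta_i^2}W\!\big(\tfrac{T\Delta_i^2}{C_\alpha}\big)$.

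Next I would convert these per-arm thresholds into a bound on $\tau_i$. The map $\Delta\mapsto\tfrac{1}{\Delta^2}W(T\Delta^2/C_\alpha)$ is non-increasing — after the substitution $y=T\Delta^2/C_\alpha$ it equals $\tfrac{T}{C_\alpha}e^{-W(y)}$ and $W$ is increasing — so the ordering $\Delta_2\le\dots\le\Delta_K$ forces $s_K^\star\le\dots\le s_2^\star$: arms leave $S$ in order of decreasing gap. Consequently, at every comparison round $s'\le s_i^\star$ one has $|S_{s'}|\le i+|\{j>i:s_j^\star\ge s'\}|$ (arm $1$ plus arms $2,\dots,i$, whose thresholds are at least $s_i^\star\ge s'$, plus whichever larger-indexed arms have not yet reached their threshold). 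Summing over $s'=1,\dots,s_i^\star$ gives $\tau_i\le i\,s_i^\star+\sum_{j>i}\min(s_i^\star,s_j^\star)=i\,s_i^\star+\sum_{j>i}s_j^\star$, and substituting the Lambert bound for each $s_\ell^\star$ — the unit ceilings contributing at most $i+(K-i)=K$ in total — reproduces exactly $H_i(T)$. Finally, for every $j\ge i$ we have $s_j^\star\le s_i^\star$, so once the observation count reaches $H_i(T)\ge\tau_i$ the comparison round $s_i^\star$ is complete and arm $j$ has been discarded; together with the survival of arm $1$ this is the last sentence of the lemma.

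The main obstacle is the bookkeeping of the last step: one has to justify carefully that the elimination order coincides with the gap order (which rests on the monotonicity of the Lambert-$W$ expression, and on the fact that an arm eliminated ``early'' by another suboptimal arm only contributes fewer observations), and then check that collapsing $\sum_{s'\le s_i^\star}|S_{s'}|$ into the closed form $i\,s_i^\star+\sum_{j>i}s_j^\star$ costs only an additive $K$. By contrast, the concentration step and the Lambert-$W$ inversion are routine given the results already established in the paper.
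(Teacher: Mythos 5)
Your proof is correct and follows essentially the same route as the paper's: concentration turns the elimination test against arm $1$ into the threshold condition $\log(T/s)/s < \Delta_i^2/C_\alpha$, which is inverted with the Lambert $W$ function, and the bound on $\tau_i$ comes from counting how long each arm can remain under observation (your direct summation of $|S_{s'}|$ over comparison rounds, using that thresholds decrease with the gap, is just a reorganization of the paper's peeling argument that eliminates arm $K$ first and iterates). The one caveat, which you share with the paper, is that squaring the prefactor actually yields $8(\sqrt{c}+\sqrt{\alpha})^2$ rather than the stated $C_\alpha = 8(c+\alpha)$, so the constant in the lemma is slightly optimistic under both derivations.
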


\begin{proof}
Let $a>0$ be such that $a^2\alpha\geq c$. If an arm $i$ is not eliminated by arm 1 when they were both observed $s$ times then
\begin{align*}
&\hat{\mu}_i \geq \mu_i + a\sqrt{\frac{2\alpha\log(T/s)}{s}}\\
\mbox{or } & \hat{\mu}_1 \leq \mu_1  - a\sqrt{\frac{2\alpha\log(T/s)}{s}} \\
\mbox{or } & 2\sqrt{\frac{2\alpha\log(T/s)}{s}} > \frac{\Delta_i}{1+a} \: .
\end{align*}
The two first inequalities are false from concentration. The third inequality leads to
\begin{align*}
\frac{1}{s}\log(\frac{T}{s}) &> \frac{\Delta_i^2}{8\alpha(1+a)^2}\\
\Rightarrow s &\leq  \frac{8\alpha(1+a)^2}{\Delta_i^2} W(\frac{T\Delta_i^2}{8\alpha(1+a)^2}) \: .
\end{align*}
If an arm $i$ is eliminated by another arm, by the same reasoning,
\begin{align*}
s \geq \frac{8\alpha(1-a)^2}{\Delta_i^2} W(\frac{T\Delta_i^2}{8\alpha(1-a)^2}) \: .
\end{align*}
The comparison is done at each time for which all remaining arms have been incremented by 1, such that if an arm $i\in[K]$ is eliminated at a number of observations $s_i$, it was not eliminated at $s_i-1$. At the elimination stage of arm $i$, its number of observations $s_i$ verify
\begin{align*}
s_i &\geq \frac{8\alpha(1-a)^2}{\Delta_i^2} W(\frac{T\Delta_i^2}{8\alpha(1-a)^2})\\
s_i &\leq 1 + \frac{8\alpha(1+a)^2}{\Delta_i^2} W(\frac{T\Delta_i^2}{8\alpha(1+a)^2})
\end{align*}

The concentration holds if $a^2\geq \frac{c}{\alpha}$. Taking the smallest valid parameter $a^2=\frac{c}{\alpha}$, the number of observations of arm $i$ at elimination is
\begin{align*}
s_i &\geq (\frac{\alpha}{c}-1)\frac{8c}{\Delta_i^2} W(\frac{T\Delta_i^2}{8c(\alpha/c-1)})\\
s_i &\leq 1 + (\frac{\alpha}{c}+1)\frac{8c}{\Delta_i^2} W(\frac{T\Delta_i^2}{8c(\alpha/c+1)})
\end{align*}

Now if the total number of observations is greater than $\tau_i$ then $\tau_i/K \geq 1 + \frac{C_\alpha}{\Delta_K^2}W(\frac{T\Delta_K^2}{C_\alpha})$, and arm $K$ is eliminated. The remaining number of observations of arms 1 to $K-1$ is greater than
\begin{align*}
K-1 + C_\alpha\left[ \frac{i}{\Delta_i^2}W(\frac{T\Delta_i^2}{C_\alpha}) + \sum_{j=i+1}^{K-1}\frac{1}{\Delta_j^2}W(\frac{T\Delta_j^2}{C_\alpha}) \right] \: .
\end{align*}
A repetition of the same line of reasoning gives that all arms $j\geq i$ are eliminated.

\end{proof}

\end{document}